\theoremstyle{plain}
\newtheorem{theorem}{Theorem}[section]
\theoremstyle{definition}
\theoremstyle{remark}
\definecolor{darkgreen}{RGB}{119,185,0}
\definecolor{DarkCoral}{rgb}{0.8, 0.36, 0.27}
\definecolor{sb_blue}{rgb}{0.2,0.45,0.63}
\definecolor{sb_red}{rgb}{0.84,0.16,0.16}
\definecolor{sb_violet}{rgb}{0.58,0.4,0.74}
\definecolor{sb_orange}{rgb}{1,0.5,0.05}
\definecolor{sb_green}{rgb}{0.17,0.63,0.17}
\definecolor{sb_brown}{rgb}{0.55,0.34,0.29}
\newcommand{\ood}{OOD\xspace}
\newcommand{\id}{ID\xspace}
\newcommand{\ts}{TS\xspace}
\newcommand{\myparagraph}[1]{\noindent\textbf{#1}.}
\newcommand{\myparintro}[1]{\noindent\textbf{\textit{#1}.}}
\crefname{section}{Sec.}{Secs.}
\Crefname{section}{Section}{Sections}
\Crefname{table}{Table}{Tables}
\crefname{table}{Tab.}{Tabs.}
\title{Reliability in Semantic Segmentation: Are We on the Right Track?} 
\author{
  Pau de Jorge \\
  \normalsize{University of Oxford} \\
  \normalsize{NAVER LABS Europe}\thanks{\url{https://europe.naverlabs.com}}
  \and
  Riccardo Volpi \\
  \normalsize{NAVER LABS Europe}
  \and
  Philip H. S. Torr \\
  \normalsize{University of Oxford}
  \and
  Gr\'egory Rogez \\
  \normalsize{NAVER LABS Europe}
}
\begin{document}

\maketitle


\begin{abstract}
Motivated by the increasing popularity of transformers in computer vision, in recent times
there has been a rapid development of novel architectures. 
While in-domain performance follows a constant, upward trend, properties like robustness or uncertainty estimation are 
less explored---leaving doubts about advances in model \textit{reliability}. Studies along these axes exist, but they are mainly limited to classification models.
In contrast, we 
carry out a study on
semantic segmentation, a relevant task for many real-world applications where model reliability is paramount. We analyze a broad variety of models, spanning from older ResNet-based architectures to novel transformers and assess their reliability based on four metrics: robustness, calibration, misclassification detection and out-of-distribution 
(\ood) detection. 
We find that while recent models are significantly more robust, they are not overall more reliable in terms of uncertainty estimation. We further explore methods that can come to the rescue and show that improving calibration can also help with other uncertainty metrics such as misclassification or \ood detection. This is the first study on modern segmentation models focused on both robustness and uncertainty estimation and 
we hope it will help practitioners and researchers interested in this fundamental vision task\footnote{Code available at \url{https://github.com/naver/relis}}.
\end{abstract}

\section{Introduction}
\label{sec:intro}

Humans tend to overestimate their abilities, a cognitive bias known as Dunning-Kruger
effect~\cite{Kruger1999UnskilledAndUnaware}. Unfortunately, so do deep neural
networks. Despite impressive performance on a wide range of tasks, deep learning models tend to be overconfident---that is, they 
predict with high-confidence
even when they are wrong~\cite{GuoICML17OnCalibrationOfModernNNs}. This effect is even more severe under domain shifts, where models tend to underperform in general~\cite{ovadia2019can, RechtICLR19DuImageNetClassifiersGeneralizeToImageNet, HendrycksICLR19BenchmarkingNNRobustnessCommonCorruptionsPerturbations}.

\begin{figure}
\centering
\includegraphics[width=\linewidth]{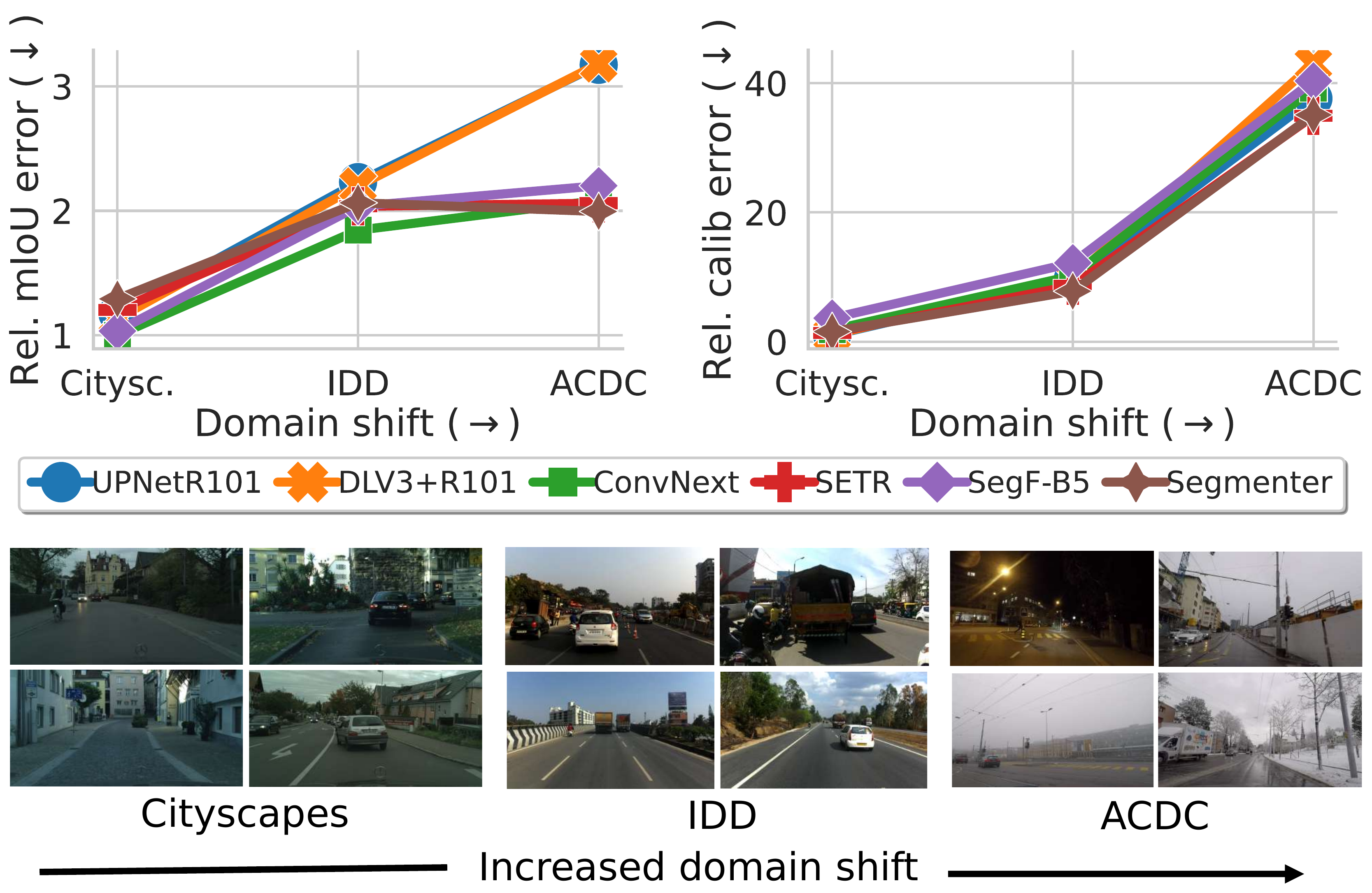}

\caption{\textbf{Top: mIoU and ECE \vs domain shift.} Errors are normalized with respect to the lowest error on the training distribution (Cityscapes). We compare recent segmentation models, both transformer-based (SETR \cite{ZhengCVPR21RethinkingSiSSeq2SeqPerspTransformers}, SegFormer \cite{XieNIPS21SegFormerSemSegmTransformers} and Segmenter \cite{StrudelICCV21SegmenterTransformerForSemSegm}) and convolution-based (ConvNext \cite{LiuCVPR22AConvNet4The2020s}) with ResNet baselines (UPerNet\cite{XiaoECCV18UnifiedPerceptualParsingSceneUnderstanding} and DLV3+\cite{ChenX17RethinkingAtrousConvolutionSemSegm}). All recent models (both transformers and CNNs) are remarkably more robust than ResNet baselines (whose lines in mIoU overlap), however, ECE increases sharply for all methods. \textbf{Bottom:} Sample images for each dataset.
}
\vspace{-5pt}
\label{figure:splash}
\end{figure}

While these vulnerabilities affect deep models in general, they are often studied for classification models and are comparably less explored for semantic segmentation, a fundamental task in computer vision that is key to many
%
critical applications such as autonomous driving and
AI-assisted medical imaging. In those applications, domain shifts are more the rule than the exception (\eg, changes in weather for
a self-driving car or differences across patients for a medical imaging system).
Therefore, brittle performance and overconfidence under domain
shifts are two important and challenging problems 
to address
for a
safe deployment of 
artificial intelligence
systems in the real world.

With that in mind, we argue that a \textit{reliable} model should 
\textit{i)}
be robust to domain shifts and 
\textit{ii)}
provide good uncertainty estimates. The core goal of this study is providing an answer to the following, crucial question:
\textbf{are state-of-the-art semantic segmentation models improving in terms of robustness \textit{and} uncertainty estimation?}

To shed light on this, we evaluate a large body of segmentation models, assessing their in-domain (\id) \vs out-of-domain (\ood) prediction quality (\textbf{robustness}) together with their calibration, misclassification detection and \ood detection (\textbf{uncertainty estimation}).

We argue that a study of this kind is crucial to understand whether research on semantic segmentation is moving in the right direction.
Following the rise of transformer architectures in computer vision~\cite{DosovitskiyICLR21AnImageIsWorth16x16WordsTransformersAtScale, touvron2021training, carion2020end,LiuICCV21SwinTransformerHierarchicalViTShiftedWindows},
several studies have compared recent self-attention and CNN-based \textit{classification} models in terms of robustness \cite{BhojanapalliICCV21UnderstandingRobustnessTransformersImageClass, naseer2021intriguing, bai2021transformers, paul2022vision, mao2022towards, LiuCVPR22AConvNet4The2020s} and predictive uncertainty \cite{minderer2021revisiting, pinto2022impartial}. Yet, when it comes to \textit{semantic segmentation}, prior studies \cite{XieNIPS21SegFormerSemSegmTransformers, zhou2022understanding} only focused on robustness, using synthetic corruptions as domain shifts (\eg, blur, noise) \cite{KamannCVPR20BenchmarkingRobustnessSemSegmModels}. In contrast, we 
consider
natural, realistic domain shifts and study segmentation models both in terms of robustness and uncertainty, leveraging datasets captured in different conditions---see \cref{figure:splash} (bottom). 

Task-specific studies are important, since task-specific architectures and learning algorithms may carry different behaviors and some observations made for classification might not hold true when switching to segmentation.
For instance, contrary to Minderer~\etal \cite{minderer2021revisiting}, we observe that improvements in calibration are far behind those in robustness, see \cref{figure:splash} (top). Furthermore, previous analyses only
consider simple 
calibration approaches~\cite{GuoICML17OnCalibrationOfModernNNs}
while assessing
model reliability;
in contrast,
we make a step forward and
explore content-dependent calibration
strategies~\cite{gong2021confidence, ding2021local}, which show promise to improve reliability out of domain.

Our analysis allows us individuating
in which directions 
we are
improving and in which 
we are lagging behind.
This is the first work to systematically study robustness
and uncertainty under domain shift for a large suite of segmentation models and
we believe it
can
help 
practitioners and researchers 
working on
semantic segmentation. 
We summarize our main observations in the following.

\myparintro{i) Remarkable improvements in robustness, but poor in calibration} Under domain shifts, recent segmentation models perform significantly better (in terms of mIoU)---with larger improvements for stronger shifts. Yet, \ood calibration error increases dramatically for all models.

\noindent
\textit{\textbf{ii) Content-dependent calibration~\cite{ding2021local} can improve \ood calibration}}, especially under strong domain shifts, where models are poorly calibrated.

\myparintro{iii) Misclassification detection shows different 
model ranking
in and out of domain} 
When tested in domain, recent models 
underperform the ResNet baseline. As the domain shift increases, 
recent models take the lead.

\myparintro{iv) \ood detection is inversely correlated 
with
performance} 
Indeed,
a small ResNet-18 backbone performs best.

\myparintro{v) Content-dependent calibration~\cite{ding2021local} can improve \ood detection and misclassification out of domain} We observe a significant increase in misclassification detection under strong domain shifts after improving calibration. We also observe improvements for \ood detection, albeit milder.



\begin{table}[t!]
\begin{center}
{\scriptsize
\setlength{\tabcolsep}{2.5pt}
\begin{tabular}{@{}lccccc@{}}
\toprule
 & 
\begin{tabular}{@{}c@{}}\textbf{Sem.} \\ \textbf{segm.}\end{tabular} &
\begin{tabular}{@{}c@{}}\textbf{Robust} \\ \textbf{performance}\end{tabular} &
\begin{tabular}{@{}c@{}}\textbf{Uncertainty} \\ \textbf{estimation}\end{tabular} &
\begin{tabular}{@{}c@{}}\textbf{Natural} \\ \textbf{shifts}\end{tabular} &
\begin{tabular}{@{}c@{}}\textbf{OOD calib} \\ \textbf{methods}\end{tabular}


\\

\midrule
Kamann~\etal 
\cite{KamannCVPR20BenchmarkingRobustnessSemSegmModels} & \checkmark & \checkmark & & & \\

\midrule
Bhojanapalli~\etal 
\cite{BhojanapalliICCV21UnderstandingRobustnessTransformersImageClass} &  & \checkmark & & \checkmark & \\

\midrule
Xie~\etal 
\cite{XieNIPS21SegFormerSemSegmTransformers} &  \checkmark & \checkmark & &  & \\

\midrule
Naseer~\etal 
\cite{naseer2021intriguing} &  & \checkmark & & & \\

\midrule
Bai~\etal 
\cite{bai2021transformers} &  & \checkmark &  & \checkmark & \\

\midrule
Minderer~\etal 
\cite{minderer2021revisiting} &  & \checkmark & \checkmark & \checkmark & \\

\midrule
Paul and Cheng 
\cite{paul2022vision} &  & \checkmark & & \checkmark & \\

\midrule
Mao~\etal 
\cite{mao2022towards} &  & \checkmark & & \checkmark & \\

\midrule
Liu~\etal 
\cite{LiuCVPR22AConvNet4The2020s} &  & \checkmark & & \checkmark & \\

\midrule
Zhou~\etal 
\cite{zhou2022understanding} &  \checkmark & \checkmark & & & \\

\midrule
Pinto~\etal 
\cite{pinto2022impartial} &  & \checkmark & \checkmark & \checkmark & \\

\midrule
\midrule

\textit{Ours} &  \checkmark & \checkmark & \checkmark & \checkmark  & \checkmark \\

\bottomrule
\end{tabular}
}
\end{center}
\caption{\textbf{Studies of recent architectures}. While several prior works studied robustness and uncertainty of transformer- and CNN- based \textit{classifiers}, studies on \textit{segmentation} limited to robustness. This is the first study assessing robustness \textit{and} uncertainty of modern segmentation models. Moreover, we consider natural domain shifts and are the only analysis to include content-dependent methods~\cite{gong2021confidence, ding2021local} to improve calibration
in \ood settings. 
}
\label{tab:comparison} 
\end{table}

\section{Related work}

\label{sec:related_work}
We study robustness and uncertainty in 
semantic
segmentation. In doing so, we touch several fields, which we cover in the following. We further discuss related studies.


\myparagraph{Segmentation models} 
Modern segmentation pipelines typically consist of encoder-decoder architectures~\cite{CsurkaNow22SemanticImageSegmentation2DecasesOfResearch, BadrinarayananPAMI17SegnetDeepConvEncoderDecoder, NohICCV15LearningDeconvolutionNNSegmentation, RonnebergerMICCAI15UNetSegmentation}. Decoders are usually designed \textit{ad hoc} for segmentation, with DeepLab~\cite{ChenPAMI17DeeplabSemanticImgSegmentationDeepFullyConnectedCRF, ChenX17RethinkingAtrousConvolutionSemSegm, ChenECCV18EncoderDecoderAtrousSeparableConvSemSegm} and UPerNet~\cite{XiaoECCV18UnifiedPerceptualParsingSceneUnderstanding} being two of the most prominent. On the other hand, the evolution of 
encoders
has been closely related to that of classification 
models, with ResNet~\cite{he2016deep} being one of the most popular for years.
The rise of transformers in computer vision~\cite{DosovitskiyICLR21AnImageIsWorth16x16WordsTransformersAtScale} has led to a flurry of works leveraging self-attention for 
segmentation~\cite{ZhengCVPR21RethinkingSiSSeq2SeqPerspTransformers,StrudelICCV21SegmenterTransformerForSemSegm,XieNIPS21SegFormerSemSegmTransformers}.
Novel convolutional architectures inspired by transformers have also risen~\cite{LiuCVPR22AConvNet4The2020s}. We compare several recent segmentation models 
against
ResNet baselines, in terms of both robustness and uncertainty.

\myparagraph{Robustness} The brittleness of neural networks to changes in the input domain is a well-studied problem and many sub-formulations exist~\cite{TaoriNeurIPS20MeasuringRobustnessToNaturalDistributionShifts}. 
Robustness against synthetic shifts takes into account 
samples crafted
by artificially altering images, for example injecting noise or blur (corruption robustness~\cite{HendrycksICLR19BenchmarkingNNRobustnessCommonCorruptionsPerturbations,KamannCVPR20BenchmarkingRobustnessSemSegmModels}), or crafting imperceptible perturbations to induce model failure (adversarial robustness~\cite{GoodfellowICLR15ExplainingHarnessingAdvExamples}).
Robustness against \textit{natural} shifts focuses on changes that may arise naturally, without human intervention~\cite{RechtICLR19DuImageNetClassifiersGeneralizeToImageNet,hendrycks2021natural}.

In this work we are interested in comparing the robustness of different off-the-shelf segmentation models under \textit{natural} domain shifts, since these are particularly relevant in real-world applications.
In particular, we focus on semantic segmentation of urban scenes, hence, 
we evaluate models on samples from unseen geographical locations~\cite{VarmaWACV19IDDDatasetExploringADUnconstrainedEnvironments} and weather conditions~\cite{SakaridisICCV21ACDCAdverseConditionsDatasetSIS}.
Segmentation robustness against natural shifts has been studied before~\cite{YueICCV19DomainRandomizationPyramidConsistencySimulationToReal,VolpiCVPR22OnRoadOnlineAdaptSiS}, yet not in tandem with uncertainty and within a large-scale study taking 
into
consideration several recent models.\looseness=-1


\myparagraph{Uncertainty}
Guo \etal \cite{GuoICML17OnCalibrationOfModernNNs} have shown that deep 
models 
are
overconfident. They have proposed a simple, yet effective solution known as temperature scaling (\ts) where the output logits are divided by a temperature parameter before 
the softmax layer. Other calibration methods have been proposed (\eg, \cite{naeini2015obtaining, kull2019beyond, gupta2020calibration}), but \ts is still
very
popular due to its simplicity and the fact that it does not alter predictions. 

Calibration with \ts is effective
in \id settings; yet, Ovadia \etal \cite{ovadia2019can} have shown that model calibration degrades significantly out of domain. Some methods have been proposed that address this problem~\cite{pampari2020unsupervised, park2020calibrated, wang2020transferable}, by assuming access to unlabeled \ood images beforehand.
On the other hand, Gong~\etal~\cite{gong2021confidence} have proposed methods that improve \ood calibration without any data from the target domain. 
They propose to cluster the calibration set in different ``domains'' and find a different temperature value for each. At test time, images are calibrated using the temperature from the closest cluster. 
%
\textit{Ad hoc} for semantic segmentation, Ding~\etal~\cite{ding2021local} propose a content-dependent calibration strategy that learns a small calibration network to predict a temperature for each pixel in an image.

In our study we test \id and \ood performance of several calibration methods, focusing on techniques that do not require access to \ood samples~\cite{GuoICML17OnCalibrationOfModernNNs,gong2021confidence,ding2021local}---as generally robustness is evaluated on unseen domains~\cite{HendrycksICLR19BenchmarkingNNRobustnessCommonCorruptionsPerturbations,KamannCVPR20BenchmarkingRobustnessSemSegmModels,TaoriNeurIPS20MeasuringRobustnessToNaturalDistributionShifts,pinto2022impartial}


\myparagraph{Previous analyses}
In~\cref{tab:comparison} we 
compare
related studies 
on different aspects of reliability. Several works have suggested that transformer-based \textit{classifiers} are more robust than CNNs \cite{BhojanapalliICCV21UnderstandingRobustnessTransformersImageClass, naseer2021intriguing, bai2021transformers, mao2022towards, paul2022vision}. Yet, the recent ConvNeXt \cite{LiuCVPR22AConvNet4The2020s} has challenged this result and later work have suggested that further investigation is needed~\cite{pinto2022impartial}. 
Minderer \etal \cite{minderer2021revisiting} have compared calibration of several classifiers, concluding that convolution-free models are more robust \textit{and} better calibrated. In contrast, Pinto \etal \cite{pinto2022impartial} have compared recent transformers and CNNs, arguing there is ``no clear winner''. 
Some works have compared the robustness of transformers and CNNs 
for
segmentation~\cite{XieNIPS21SegFormerSemSegmTransformers, zhou2022understanding}---but only against \textit{synthetic} domain shifts.
We broadly study robustness \textit{and} uncertainty in segmentation under \textit{natural} domain shifts. Similarly to \cite{pinto2022impartial}, we do not observe a single model family which is better calibrated in all scenarios. In contrast with \cite{minderer2021revisiting} though, we observe that robustness and calibration \textit{do not} go hand in hand. 
This shows that not all trends observed in classification transfer to segmentation, confirming the importance of task-specific studies like ours.

\section{Experimental settings and preliminaries}
\label{sec:experimental_settings}

\subsection{Datasets} 

As discussed in Section~\ref{sec:related_work}, we use different datasets for semantic segmentation of urban scenes to model natural domain shifts---inspired by prior art~\cite{YueICCV19DomainRandomizationPyramidConsistencySimulationToReal,ChoiCVPR21RobustNetImprovingDGUrbanSiSInstanceSelectiveWhitening,VolpiCVPR22OnRoadOnlineAdaptSiS}. 


\noindent
\textbf{Cityscapes (CS)~\cite{CordtsCVPR16CityscapesDataset}} contains images taken across 50 European cities at day time with overall good weather. Training and validation sets use sequences from disjoint sets of cities. Following this protocol, we further split validation cities into a calibration and a test set. 
Since CS is a mainstream benchmark in semantic segmentation, we use it as our training set (\id) to leverage available trained 
weights.

\noindent
\textbf{IDD~\cite{VarmaWACV19IDDDatasetExploringADUnconstrainedEnvironments}} 
was captured in the cities of Hyderabad, Bangalore and their outskirts. Given the different geographical location, it poses a clear domain shift for CS models.
%
%

\noindent
\textbf{ACDC~\cite{SakaridisICCV21ACDCAdverseConditionsDatasetSIS}} 
contains images captured in adverse conditions 
(Fog, Rain, Snow and Night),
which translate into strong domain shifts. 
%
Similarly to previous work~\cite{pinto2022impartial,zhou2022understanding, XieNIPS21SegFormerSemSegmTransformers}, we focus on \textit{covariate} shifts, \ie, changes in the input distribution---keeping the 
label set
fixed. In practice, for \ood settings (IDD and ACDC) we consider the 19 classes from CS, ignoring the others. 
The only exception is one experiment in~\cref{ap:local_ood}, where we consider label shifts.


\subsection{Architectures} 
\label{sec:arch}

We implement our models with MMsegmentation~\cite{mmseg2020}. Following prior work~\cite{pinto2022impartial, minderer2021revisiting}, we use the original training recipes for each model to compare them at their best.
For completeness, we also explore the effects of pre-training dataset and number of training iterations in \cref{ap:training_ablations}.

\myparagraph{\textcolor{sb_red}{SETR} \cite{ZhengCVPR21RethinkingSiSSeq2SeqPerspTransformers}} The first convolution-free segmentation model. It uses a ViT backbone \cite{DosovitskiyICLR21AnImageIsWorth16x16WordsTransformersAtScale} and 
different
decoders (SETR-Naive, SETR-MLA and SETR-PUP).
We use the ViT-Large backbone and analyze all three decoders.\looseness=-1

\myparagraph{\textcolor{sb_brown}{Segmenter} \cite{StrudelICCV21SegmenterTransformerForSemSegm}} Similarly to SETR, it also uses a ViT backbone; yet, unlike the simpler SETR decoders, 
it carries a transformer-based one.
We also test ViT-Large.

\myparagraph{\textcolor{sb_violet}{SegFormer} \cite{XieNIPS21SegFormerSemSegmTransformers}} This 
model incorporates
an original self-attention mechanism and several architectural changes to be more efficient. 
We evaluate all models from this family (B0--B5), gradually increasing the number of parameters.

\myparagraph{\textcolor{sb_green}{ConvNeXt} \cite{LiuCVPR22AConvNet4The2020s}} 
Convolutional model with
changes inspired by transformers.
We use ConvNeXt-Large, comparable in size to ViT-Large, 
with an UPerNet decoder~\cite{XiaoECCV18UnifiedPerceptualParsingSceneUnderstanding}.

\myparagraph{ResNet-based~\cite{he2019bag}}
We use ResNet-V1c model, the default ResNet in MMsegmentation library~\cite{mmseg2020}. Compared to vanilla ResNet\cite{he2016deep} it uses a stem with three 3x3 convs (instead of a 7x7 conv). We use ResNet-18/50/101 models and two popular decoders: \textbf{\textcolor{sb_orange}{DLV3+}}~\cite{ChenX17RethinkingAtrousConvolutionSemSegm} and \textbf{\textcolor{sb_blue}{UPerNet}}~\cite{XiaoECCV18UnifiedPerceptualParsingSceneUnderstanding}

Additionally,
in \cref{ap:mask2former}
we explore \textbf{Mask2Former}, an architecture for \textit{universal image segmentation} \cite{ChengCVPR22MaskedAttentionMaskTransformer4UniversalSiS} that does not follow the conventional \textit{logits + softmax} paradigm.

\subsection{Reliability metrics}\label{sec:reliability_metrics}

\noindent
We evaluate model reliability on four aspects: robustness, calibration, misclassification detection and \ood detection.

\myparagraph{Robustness}
We measure robustness by evaluating the standardized mean
Intersection-over-Union (mIoU) performance in \ood settings, \ie, on ACDC and IDD. We also provide \id performance,
evaluating models on CS.

\myparagraph{Calibration} A model is said to be calibrated when the predictive probabilities (\ie, the logits after a softmax) correspond to the true probabilities. For instance, if we group all samples where the predicted probability is 
90\%,
we would expect that 
90\% 
of those 
predictions
are correct. The most common calibration metric is the \textit{Expected Calibration Error (ECE)} \cite{naeini2015obtaining}, which looks at the expected difference between the predicted and actual probabilities. To estimate the ECE we quantize the predicted probabilities and compare the accuracy with the mean probability in each bin. Since the binning strategy can affect the results, we 
test
ECE with equally spaced bins \cite{naeini2015obtaining}, equally populated bins \cite{nixon2019measuring, nguyen2015posterior} and the Kolmogorov-Smirnov test \cite{gupta2020calibration}, which gets rid of the binning strategy altogether. We report results with standard ECE, but find that all three aforementioned metrics 
yield similar conclusions
(see~\cref{sec:ablation_calibration_metrics}).

Given that in segmentation we have per-pixel predictions, the number of calibration samples explodes (a single CS image contains 2048 $\times$ 1024 $\simeq $ 2M pixels). We ablate the different calibration metrics as we sub-sample the number of pixels per image and observe that 20k pixels per image in enough to estimate the ECE (see~\cref{sec:ablation_number_pixels_calibration}).

\myparagraph{Misclassification detection} 
A desiderata for a reliable model is to assign a larger confidence
to correct outputs than incorrect ones\footnote{We use max softmax as confidence in the paper; in \cref{sec:ablation_prob_vs_entropy} we present similar results with negative entropy.}. 
In the ideal case, if we sorted all predictions from least to most confident, we would have all the incorrect predictions first and correct ones later. \textit{Misclassification detection} measures how far away are we from such an ideal case. This can be measured with \textit{Rejection-Accuracy curves} \cite{fumera2002support, hendrycks2021natural}: we reject samples with low confidence and compute the accuracy \vs amount of rejected samples. However, these are biased in favor of better-performing models,
since
the base accuracy 
is higher in the first place.
To avoid that, we follow Malinin~\etal~\cite{malinin2019ensemble} 
and normalize
the area under the curve by that of an oracle and subtracts a baseline score with randomly sorted samples. The resulting metric, known as the Prediction Rejection Ratio (PRR), will be positive if the confidence is better than the random baseline and will have a maximum score of 100\% when the model matches the oracle.\looseness=-1

\myparagraph{Out-of-domain detection} Another important aspect in reliability is that models are aware of their ``domain of expertise'' (\ie, their training domain). When a sample differs significantly from the training 
samples,
we would expect the model to be more uncertain of its prediction. Similarly to misclassification detection, in \ood detection we try to separate \id from \ood images based on the network confidence. This has 
broad applicability,
\eg,
generating alerts for samples too far from the training domain
or, connecting to active learning, gathering them for further review and annotation. As in the rest of our work, we consider a whole image to be out of domain if it presents a significant 
domain
shift; that is, we consider CS in domain and IDD/ACDC out of domain. Since we define \id and \ood samples at the image level, we consider the average confidence of all pixels in a given image.
We use the Area Under the Receiver Operating Characteristic curve (AUROC) \cite{murphy2012machine}, which goes from 0 to 1 (1 being the best score). 
Additionally, in \cref{ap:local_ood} we consider \ood detection at a region level, considering classes in the IDD dataset not present in CS, \ie, \textit{rickshaw}, \textit{billboard}, \textit{guard rail}, \textit{tunnel} and \textit{bridge}.

\section{Are modern segmentors more \textit{reliable}?}

In the following, we present the main findings of our study on the reliability of semantic segmentation models. 

\subsection{Robustness}

\begin{figure}
\centering
\includegraphics[width=\linewidth]{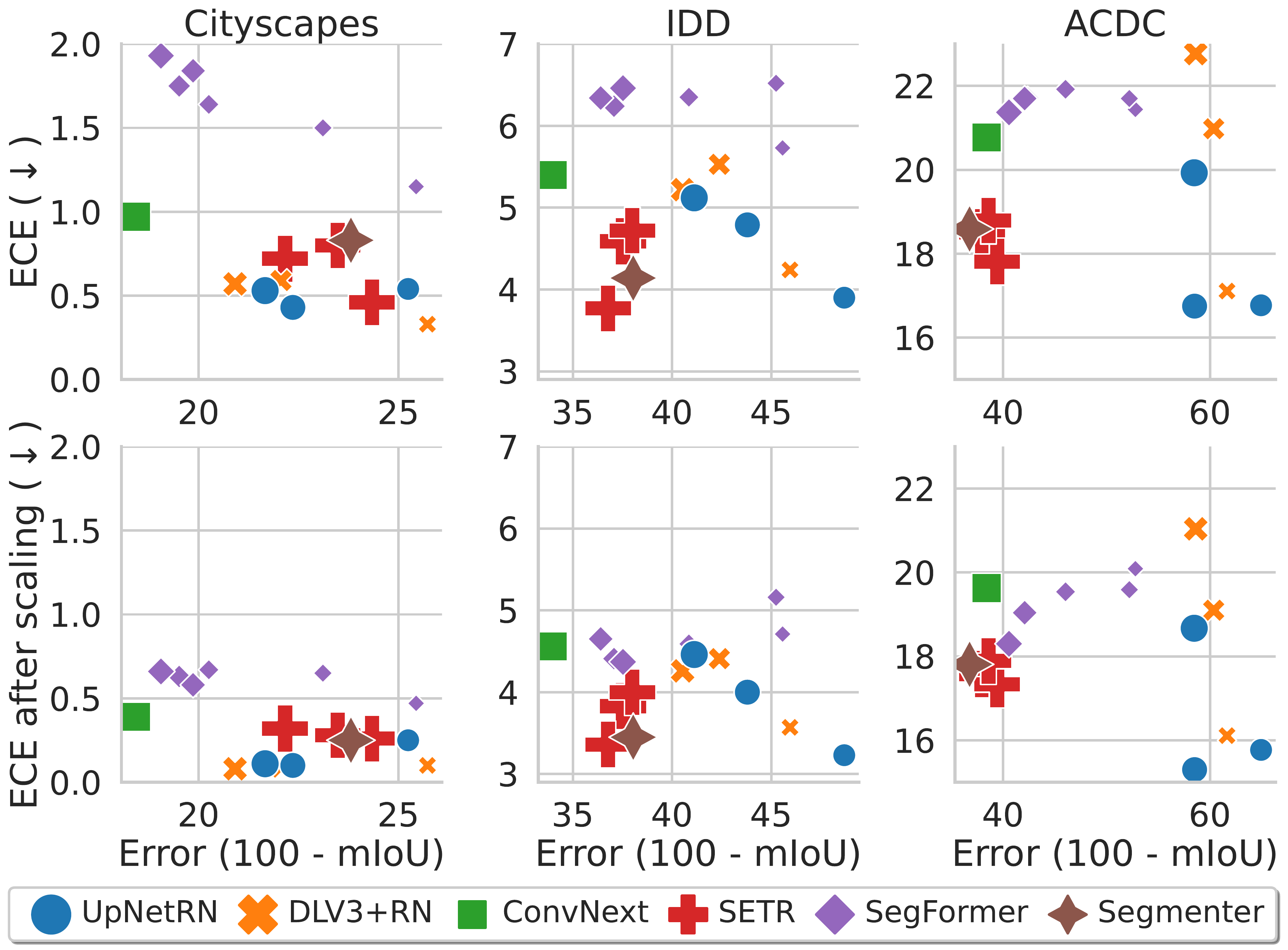}
\caption{
\textbf{Expected mIoU error $(\downarrow)$ \vs calibration error $(\downarrow)$}
before and after \ts for different model families (top and bottom, respectively). All models trained and calibrated on CS. Markersize proportional to number of parameters. Notice how \ts yields marginal \textit{relative} gains in \ood settings (IDD and ACDC).
}
\label{figure:miou_vs_ece}
\end{figure}

Since the domain shifts we consider
are natural and not synthetically induced, there is no straightforward way to evaluate 
their strength. To this end, we establish an 
ordering for the severity of the shift
based on performance degradation of the ResNet baselines (DLV3+R101 and UPNetR101), which results in
CS $<$ IDD $<$ ACDC.
This aligns with a
qualitative evaluation 
of 
the different datasets
(see \cref{figure:splash}, bottom, for a few samples). 
In \cref{figure:splash} (top left) we present the mIoU error (\ie, $100 - \textrm{mIoU}$) for several models evaluated on the three datasets. 
To highlight the loss in performance as we increase the shift,
we normalize 
all
errors 
\wrt
the best performing CS model (ConvNeXt).
The trend is clear: the larger the domain shift, the larger the improvement brought by more recent segmentation models.

We expand on this
in \cref{figure:miou_vs_ece} (top), where we plot the mIoU error (on the x-axis) \vs the calibration error for all 
the models belonging to the different families (Sec.~\ref{sec:arch}).
The size of the markers is proportional to the number of parameters. Similarly to \cref{figure:splash}, we observe that the gap in mIoU between ResNet baselines and recent models grows larger as we increase the domain shift (\cf marker position on the x-axis).
Interestingly,
two of the models that perform best under ACDC's 
strong
shift (SETR and Segmenter) are not significantly better than the ResNet baseline on the training domain (CS). This
indicates
that, in our setting, 
only assessing \id performance can hide the real value of 
newly crafted models and, hence, it is important to evaluate architectures
out of domain in order to fully grasp their potential.
%

While there is no 
single model family that performs significantly better in all datasets,
we can reach the clear conclusion
that \textit{all recent models are significantly more robust than well established baselines under natural shifts}.

\subsection{Calibration error}

\myparagraph{Off-the-shelf calibration} In \cref{figure:splash} (top-right) we present the ECE for different models as we increase the domain shift. Similarly to the mIoU error, ECE is normalized by the 
best CS model (in terms of calibration).
Interestingly, 
despite the remarkable improvements in terms of robustness, recent models are not significantly better calibrated.
%
When moving from CS to ACDC, the \textit{relative} mIoU error increases by a factor $\sim$2$\times$ for recent models \vs $\sim$3$\times$ for ResNet baselines; yet, in terms of \textit{relative} calibration error, all models increase by a $\sim$40$\times$ factor. This clearly highlights the need for further advances in model calibration.

In \cref{figure:miou_vs_ece} (top) we show the ECE \vs mIoU error for all models and datasets. 
When it comes to 
calibration \vs robustness
trade-off, there is no 
clear winner among the model families we consider. 
Moreover, 
we do not observe a
clear trend between mIoU and ECE in any 
domain.

\myparagraph{Calibration with \ts} In \cref{figure:miou_vs_ece} (bottom) we present the same results after applying \ts~\cite{GuoICML17OnCalibrationOfModernNNs}, 
tuned on CS.
Comparing top and bottom,
we observe an overall improvement in calibration for all networks. In particular, SegFormer models (\textcolor{sb_violet}{$\Diamondblack$}), which had the largest ECE on CS and IDD, seem to benefit the most from \ts. Nevertheless, even after this improvment, \ood calibration error (IDD, ACDC) remains significantly larger than the \id one (CS) for all models.
Regarding ECE \vs mIoU out of domain, a mild trend emerges after \ts : 
For transformer models, better-calibrated models are also the most robust (before \ts, SegFormer (\textcolor{sb_violet}{$\Diamondblack$}) did not follow this trend). On the other hand, for ResNet baselines (\textcolor{sb_orange}{$\times$},\textcolor{sb_blue}{$\medbullet$}) the better-calibrated models are generally the most brittle.
%
%
Regarding ConvNeXt (\textcolor{sb_green}{$\blacksquare$}), although it is one of the 
best
performing models, it seems to be worse calibrated than other models with similar mIoU. Overall, \textit{recent models are not significantly better calibrated than ResNet baselines neither before nor after \ts}. 

\subsection{Can we improve out-of-domain calibration?}
\label{sec:improving_calibration}
Calibration error on samples from the training domain in not alarming,
especially after TS. Nonetheless, the sharp increase in ECE out of domain is concerning for many applications, especially since recent segmentation models do not show a clear improvement in this direction. This renders methods that seek to improve calibration out of domain all the more important, but yet this is a rather underexplored research area. 
As discussed in \cref{sec:related_work}, 
to the best of our knowledge, only Gong~\etal~\cite{gong2021confidence} tackle \ood calibration \textit{without} additional information about the test domain. They suggest clustering the calibration set into multiple ``domains'' based on the image features extracted by the network. A different temperature per cluster is then selected and test-time predictions are scaled according to the cluster assigned to the images. This \textit{adaptive} \ts method was originally devised for classification, but we extend it to the segmentation task by scaling all the logits of a given image with the same temperature. Regarding the number of clusters, we find 16 to be a reasonable number (see \cref{sec:ablation_num_clusters} for this 
analysis).


\begin{figure}[t]
\centering
\includegraphics[width=\linewidth]{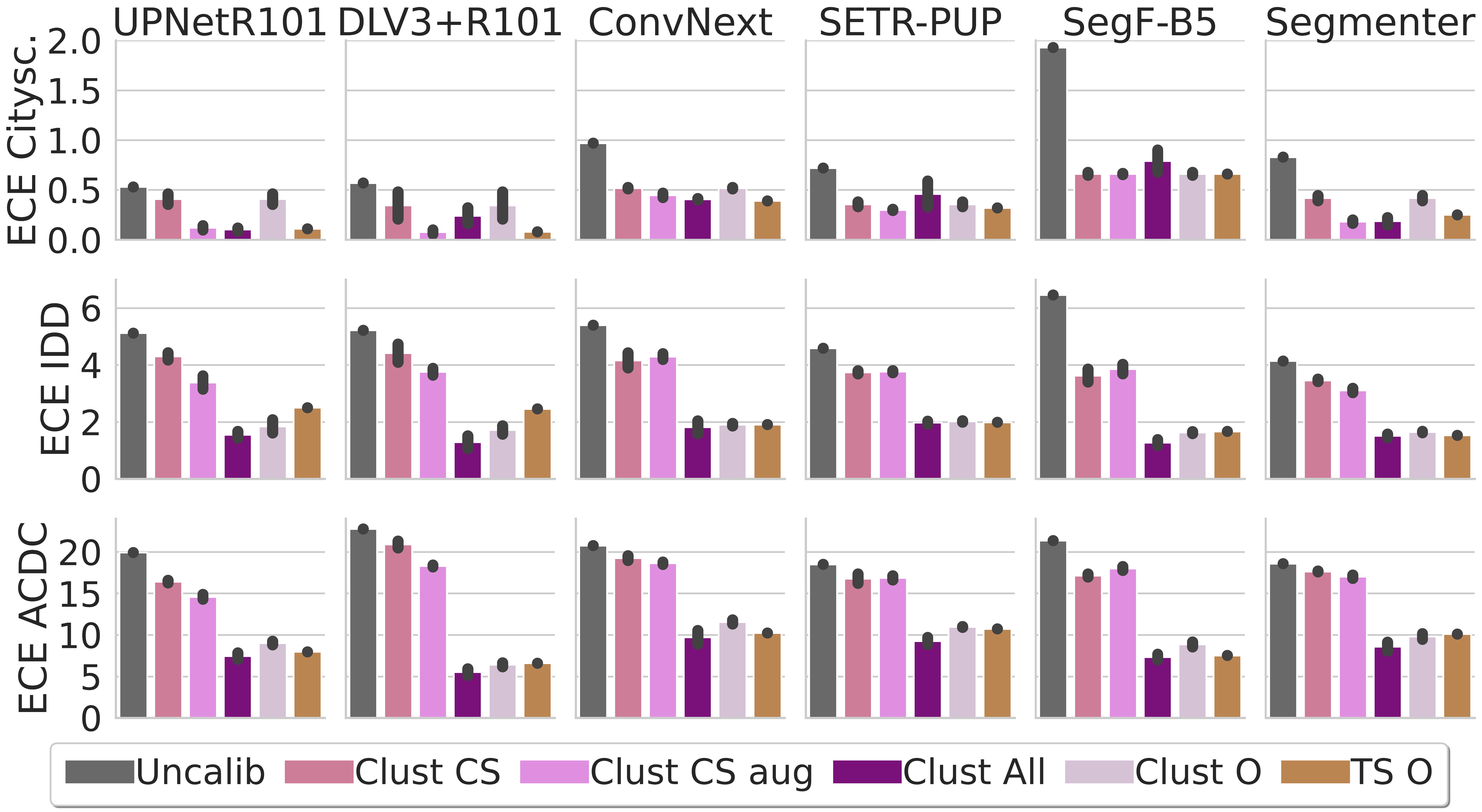}
\caption{\textbf{ECE $(\downarrow)$ after clustering \ts} for a selection of models (best mIoU on CS per family). If the calibration samples are representative of the test domains \textit{Clust All}
can indeed improve ECE, however, without access to \ood samples (\textit{Clust CS} and \textit{Clust CS aug.}) benefits of clustering are limited. Oracle baselines (\textit{O}) always use calibration images from the test domain.
}
\label{figure:cluster_ablation}
\end{figure}

\myparagraph{Clustering on different calibration sets} \label{sec:clustering}
In \cref{figure:cluster_ablation} we present the results of calibrating with clusters computed on different calibration sets. Since our training dataset is CS, we assume that only CS images are available for calibration. As a naive alternative to obtain more diverse clusters, we introduce a \textit{CS aug.} dataset where some of the CS calibration images are randomly augmented with different transformations (\eg, color scaling, changes in brightness, contrast, etc). 
The rationale is that more diverse clusters might generalize better to new domains.

To assess how beneficial \ood samples can be during calibration, we also add another calibration set which contains images 
from
all the datasets (CS, IDD and ACDC) mixed together, we will refer to it as \textit{All}.
This serves as a sort of upper bound, since our main goal is still assessing robustness in unseen domains.
Furthermore, we introduce two more oracle baselines (\textit{O}), which use calibration images from the test domain. For instance, when evaluating on IDD, the oracle calibration set will consist of \textit{only} IDD while \textit{All} will contain images of IDD, ACDC and CS mixed. One oracle baseline uses clustering, while the other uses vanilla \ts (\textit{Clust O} and \textit{TS O}, respectively).

As expected, 
calibrating on all datasets (\textit{Clust All}) significantly improves ECE, with comparable performance to oracles in most settings. In contrast, without access to \ood samples (\textit{Clust CS}), calibrating with the method by Gong~\etal yields rather limited improvements. Moreover, increasing cluster diversity via data augmentation (\textit{Clust CS aug.}) is not always beneficial.

\begin{figure}[t]
\centering
\includegraphics[width=\linewidth]{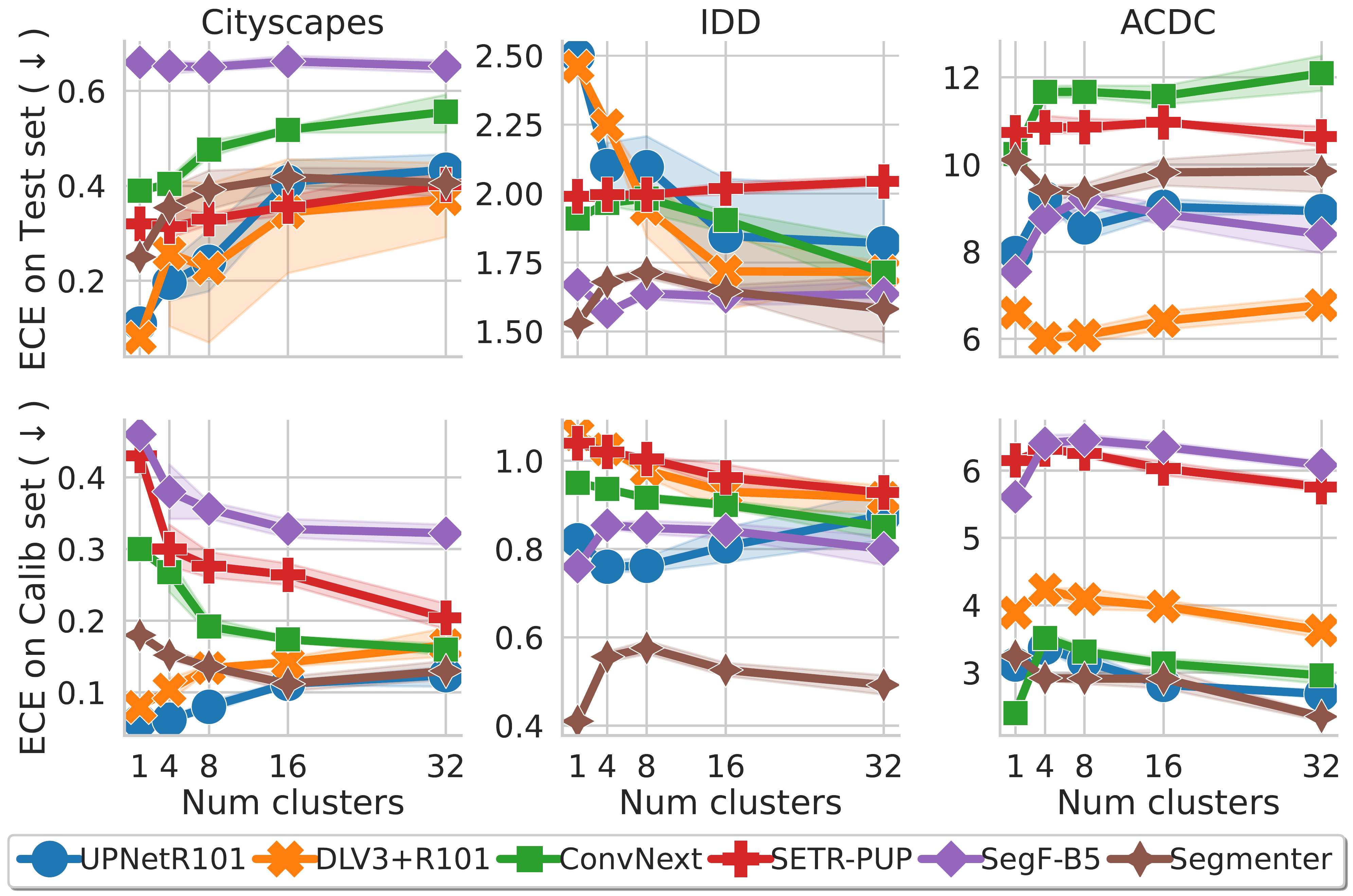}
\caption{\textbf{ECE $(\downarrow)$ \vs number of clusters for oracles} 
(calibration images 
extracted 
from the test domain). 
Even when evaluating on the calibration set, where there can be no overfitting of the temperatures,
ECE does not decrease monotonically 
as we increase
the number of clusters (one cluster is equivalent to vanilla TS).
}
\label{figure:val_set_clustering}
\end{figure}

To gain more intuition, we visualize the cluster assignments (see \cref{sec:visualization_cluster_samples}). When using all datasets for calibration, test-time images are qualitatively close to the assigned clusters; yet, with \textit{Clust CS} or \textit{Clust CS aug.}, \ood images do not blend well with the calibration images of their corresponding clusters. We argue that one implicit assumption for clustering to work well is that test-time images are close to one of the clusters (domains) in the calibration set; therefore, under strong domain shifts, it is unlikely to bring much improvement.\looseness=-1

On the bright side, if representative images from the deployment domains are available, clustering could be applied to allow a single model to be calibrated on multiple domains. Of course, with \ood annotated samples, additional fine-tuning or adaptation techniques could be applied, but this is out of the scope of this study, since our focus is \textit{off-the-shelf} model robustness, without adaptation.\looseness=-1

\myparagraph{Clustering does not improve ECE in domain} Comparing oracles in~\cref{figure:cluster_ablation}, we can observe that clustering does not improve significanly over \ts. In some settings, it is even worse. One possible explanation would be that this is due to an overfitting of the temperature parameters to the particular calibration clusters. However, in \cref{figure:val_set_clustering} we observe that even when evaluating the ECE in the calibration set, the error does not monotonically decrease with the number of clusters. Although somewhat surprising, this is in fact possible since decreasing the ECE for several disjoint subsets of images (clusters) independently does not guarantee that the ECE on the union set will decrease. We provide a formal theorem in \cref{sec:proof_subset_ECE}, in support of this claim. Note that we are not asserting that clustering will not improve ECE \textit{in general} (we empirically observed it can, if provided with a representative calibration set) but rather that it is not guaranteed to do so.

To sum up, \textit{clustering the calibration set does not bring significant improvements unless representative images of the test domain are present in the calibration set}. Moreover, \textit{it is not better than \ts for \id calibration.}\looseness=-1

\begin{figure}
\centering
\includegraphics[width=\linewidth]{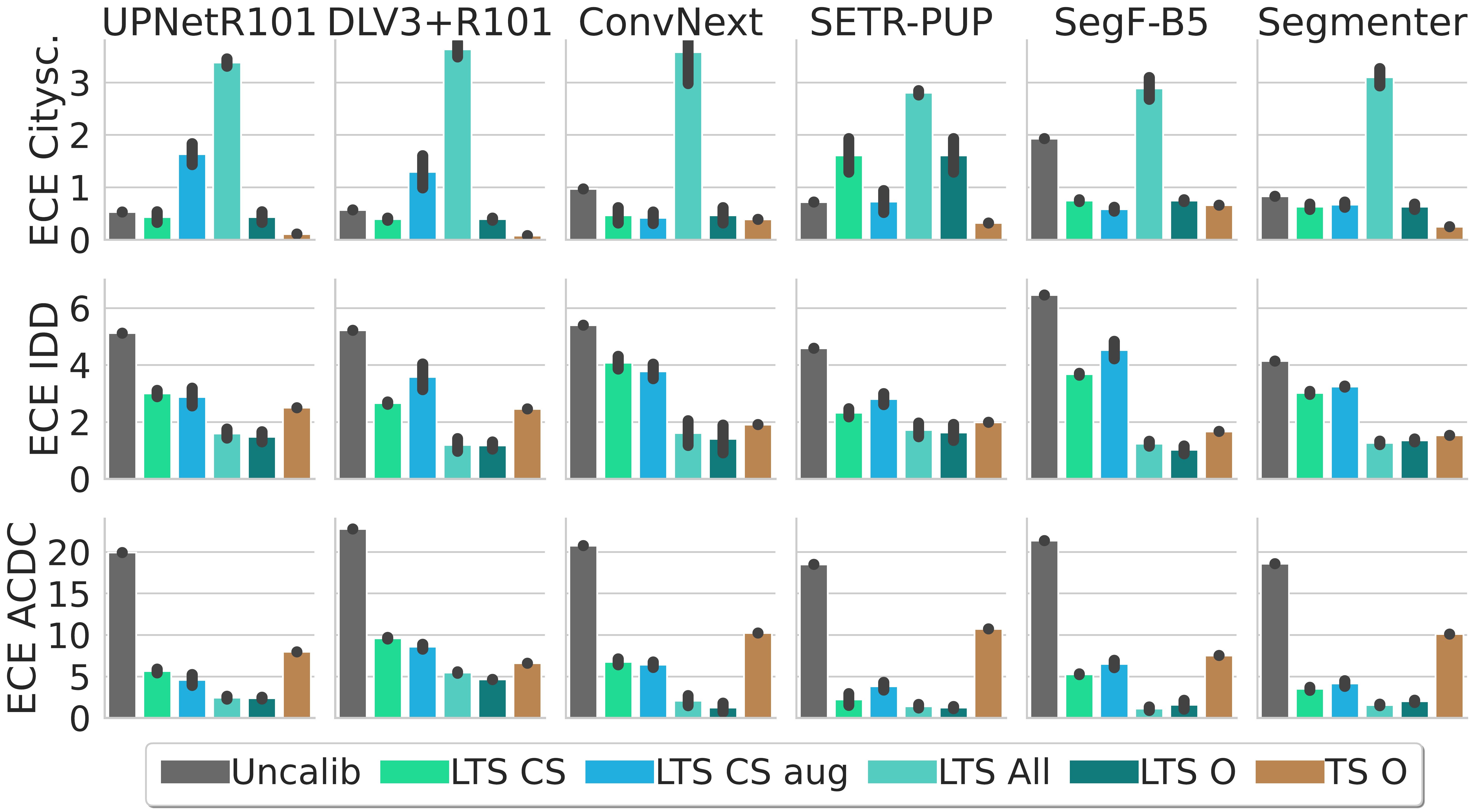}
\caption{\textbf{ECE $(\downarrow)$ after local \ts (LTS)} for a selection of models (best mIoU on CS per family). Even without access to \ood samples (\textit{LTS CS}), LTS calibration improves ECE out of domain, especially under strong domain shifts (ACDC). With access to \ood samples (\textit{LTS All}), ECE out of domains improves further, albeit it degrades in domain.
}
\label{figure:LTS_ablation}
\end{figure}

\subsubsection{Adaptive temperature via calibration network} 
The partial failure of the clustering approach motivates us to investigate other methods that adjust the temperature adaptively \wrt the input, since we 
can expect this to help improving
\ood calibration. In 
this
regard, Ding~\etal~\cite{ding2021local} suggest 
training
a small
calibration network that predicts the 
temperature values
as a function of both the
input image and the segmentation model logits. This Local Temperature Scaling
(LTS) method is specific to segmentation so the output is not a single
temperature per image but a ``temperature map'' with 
pixel-level temperature values.
Despite not being designed
for \ood conditions,
our intuition is that a network providing sample-dependent temperatures can be 
beneficial under domain shifts.

\begin{figure}[t]
\centering
\includegraphics[width=\linewidth]{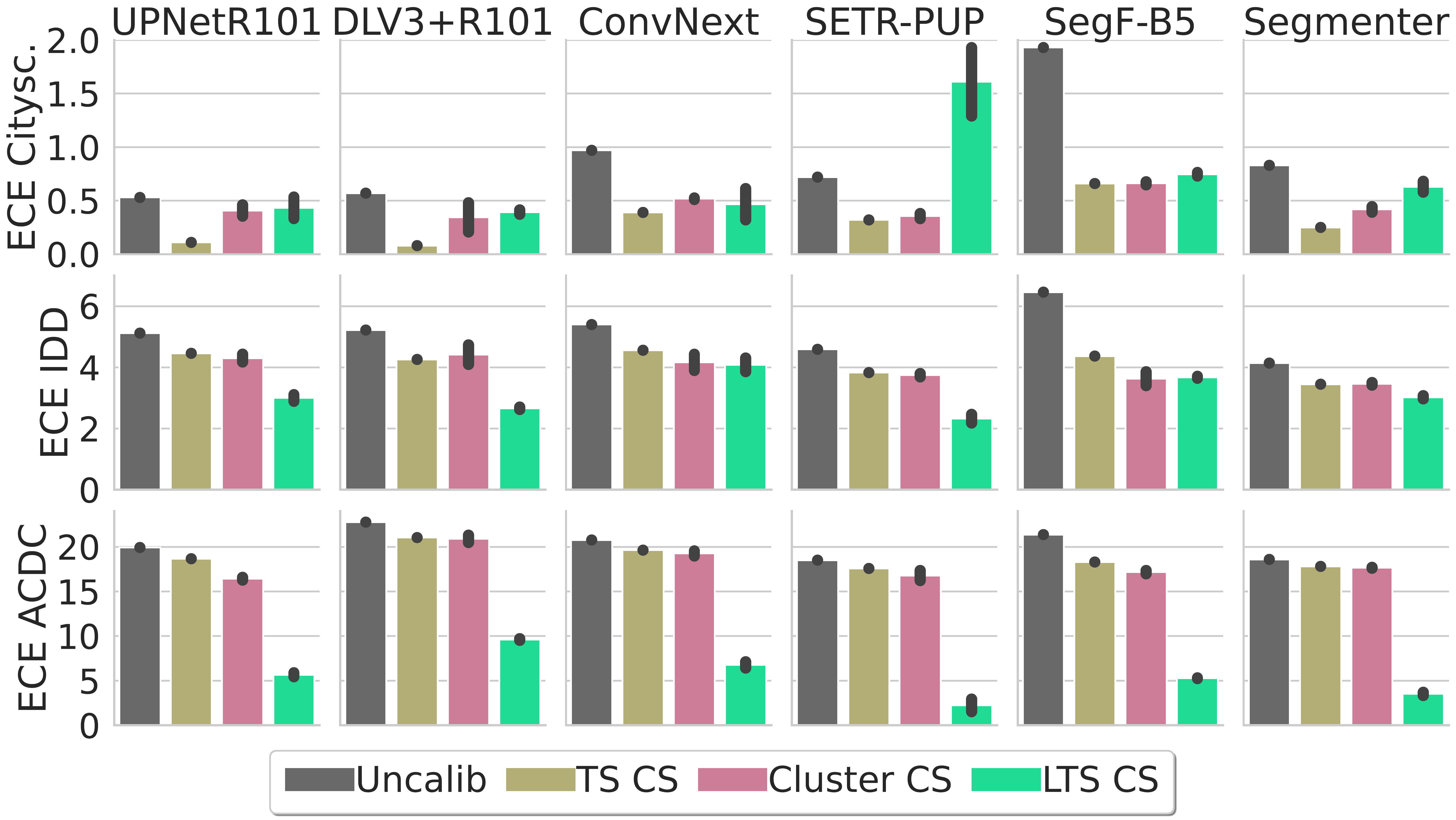}
\caption{\textbf{Comparison of calibration methods.} ECE $(\downarrow)$ after calibration for a selection of models (best mIoU on CS per family). All models are calibrated on CS calibration set. LTS is markedly the best calibration method out of domain with remarkable improvements under strong domain shifts.
}
\label{figure:calib_methods_comparison}
\end{figure}

\myparagraph{LTS using different calibration sets} \label{sec:LTS} 
As in \cref{sec:clustering}, we test LTS on multiple calibration sets (\textit{CS}, \textit{CS aug.}, \textit{All}), which in this case are used to learn the calibration network. 
Also here, we compare against oracles, 
for which the calibration network is learned using images from the test domains.
Results are shown in \cref{figure:LTS_ablation}. Interestingly, we find that LTS using only CS images for calibration (\textit{LTS CS}) leads to 
a noticeable improvement in \ood ECE.
In particular, when testing on ACDC---where the 
domain shift is stronger---\textit{LTS CS} outperforms even \ts with access to images on the test domain (\textit{TS O}) for some models. Also 
in these experiments,
introducing naive data augmentations on CS (\textit{LTS CS aug.}) does not yield substantial improvements.

When using all the datasets for calibration (\textit{LTS All}), \ood results improve even further; yet, there is a noticeable increase in calibration error on CS. Unlike clustering, where the temperature was optimized independently for each cluster, LTS trains the calibration network using all the samples at once and samples with large calibration error (like ACDC or IDD) 
may
dominate the loss.
We hypothesize that further improvements in the architecture and training schedule of the calibration network can lead to even better performance and are promising directions 
for
future work. 

Focusing on
the oracle baselines, LTS outperforms \ts on IDD and
ACDC, but \ts outperforms LTS on CS (\cf \textit{LTS O} and \textit{TS O}). This may be due to the fact that CS is a very 
homogeneous
dataset if compared to the other two, hence, it 
can be reasonable 
that a simpler method may perform best.

\myparagraph{Comparing all calibration methods} \label{sec:comparison_calib_methods}
In \cref{figure:calib_methods_comparison} we compare 
\textit{TS CS}, \textit{Clust CS} and \textit{LTS CS} (all calibrated on CS).
LTS is markedly the best calibration method out of domain, especially under stronger domain shifts. In domain, \ts works best, but it does not bring significant improvements out of domain. Since LTS predicts the temperature parameter at the pixel level, this motivates an ablation of clustering where we predict different temperatures per image; yet this does not improve results (see details in \cref{sec:per_class_clustering}). Additionally, we perform an ablation of LTS using only the image or the logits for calibration. Image information seems to be more important for \ood calibration, while the logits are more important in the \id setup (see \cref{sec:LTS_image_logits_ablation}).

\begin{figure}[t]
\centering
\begin{subfigure}[t]{\linewidth}
\includegraphics[width=\linewidth]{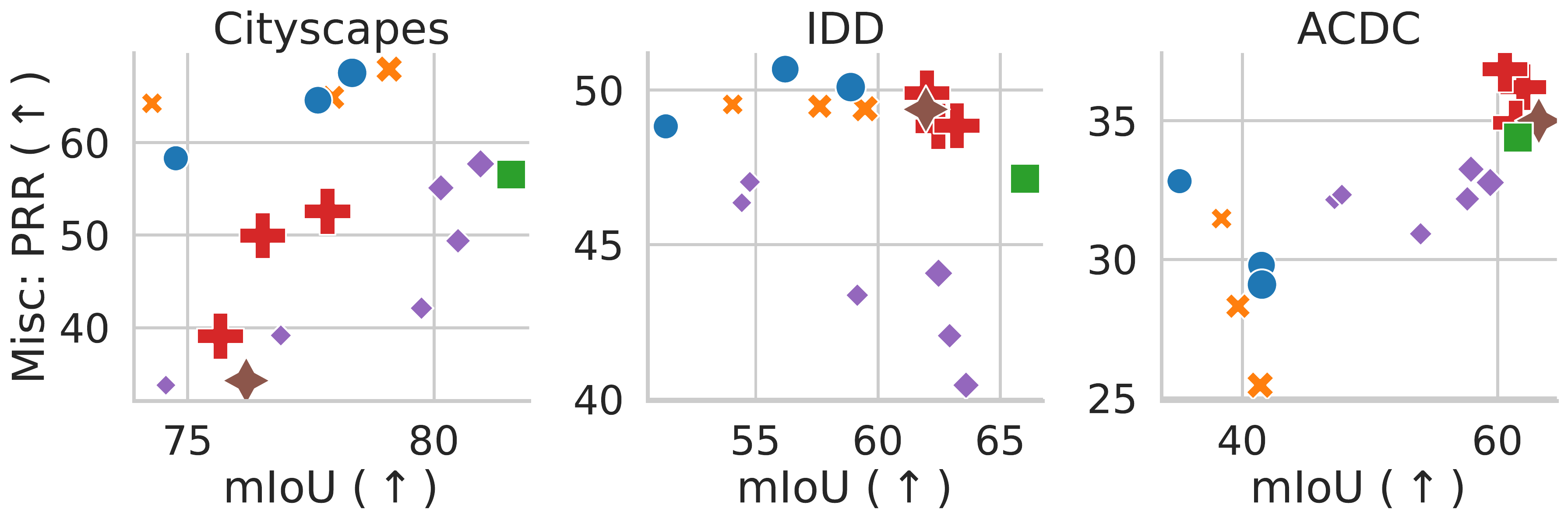}
\end{subfigure}

\vspace{0.3cm}

\begin{subfigure}[b]{\linewidth}
\includegraphics[width=\linewidth]{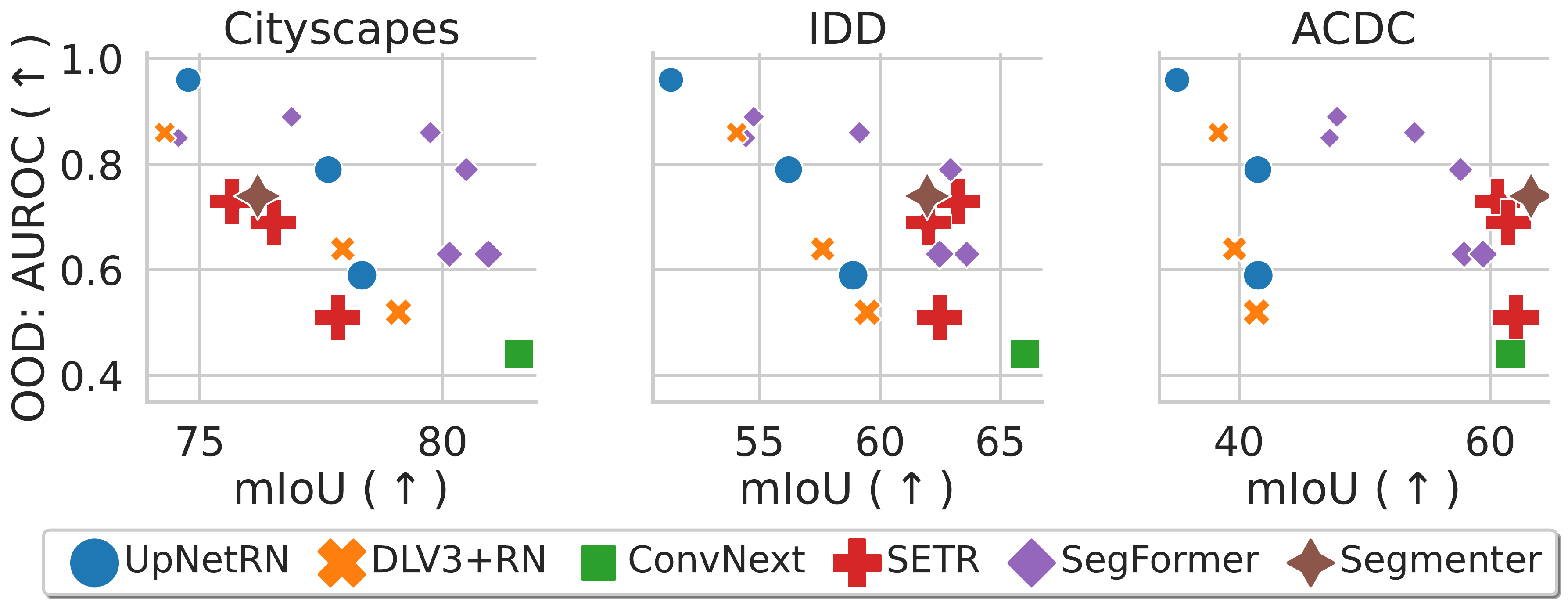}
\end{subfigure}

\caption{\textbf{Misclassification and \ood detection \vs} 
\textbf{Robustness} for different segmentation models and datasets.
%
\textbf{(Top) PRR $(\uparrow)$ \vs mIoU $(\uparrow)$:} 
ResNet-based models (\textcolor{sb_blue}{$\medbullet$},\textcolor{sb_orange}{$\times$}) outperform more recent models (other markers) in \id misclassification detection (CS, left), but the trend is opposite under strong domain shifts (ACDC, right). 
\textbf{(Bottom) AUROC $(\uparrow)$ \vs mIoU $(\uparrow)$:} There is no free-lunch between robustness and \ood detection in any considered domain.
}

\label{figure:misc_ood_all_models}
\end{figure}

\subsection{Misclassification detection} \label{sec:misclassification}

In \cref{figure:misc_ood_all_models} (top) 
we compare all models in terms of \textit{misclassification detection} \vs \textit{robustness}
---PRR score $(\uparrow)$ \vs mIoU $(\uparrow)$.
In domain, we observe a clear trend:
within the same model family, better performing models tend to also show better PRR. However, 
when considering all models,
higher mIoU does not generally imply higher PRR and ResNet-based backbones perform significantly better
than more recent architectures.
As we increase the domain shift, the trend changes: for ACDC, recent models perform best both in terms of mIoU and PRR. Moreover, out of domain, ResNet families show a negative correlation where better mIoU leads to worse PRR. 
Overall, \textit{recent 
models seem to improve misclassification detection under strong domain shifts, but underperform baselines in domain.}

\subsection{Out-of-domain detection} \label{sec:OOD}

In \cref{figure:misc_ood_all_models} (bottom) we compare 
\textit{\ood detection} \vs \textit{robustness}
(AUROC score \vs mIoU) for all models. To measure \ood detection we separate the \id images (CS) from the \ood ones (IDD and ACDC). Therefore, the y-axis is the same in all three plots and only the mIoU changes. For \ood detection, there is a marked negative trend between CS mIoU and AUROC. When looking at IDD and ACDC, the negative trend continues but there seems to be a distinction between ResNet baselines and other recent models: 
the latter
perform better in terms of mIoU, but at the same time show
a drop in AUROC.
In short, \textit{there is no free-lunch between robustness and \ood detection. In terms of \ood detection, a small ResNet-18
(\textcolor{sb_blue}{$\medbullet$})
performs best.
}

\begin{figure}
\centering
\begin{subfigure}[t]{\linewidth}
\includegraphics[width=\linewidth]{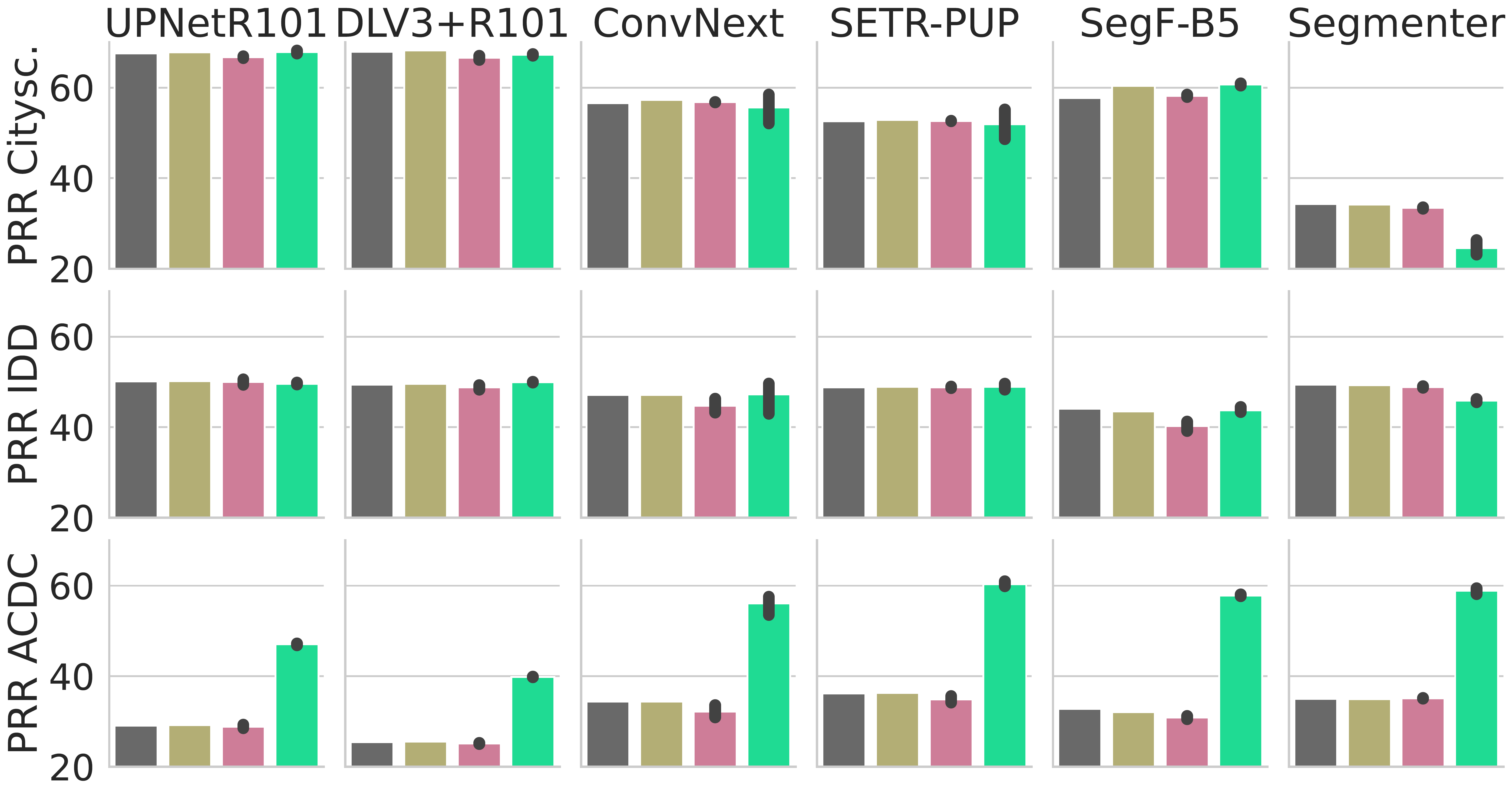}
\end{subfigure}

\vspace{0.3cm}

\begin{subfigure}[b]{\linewidth}
\includegraphics[width=\linewidth]{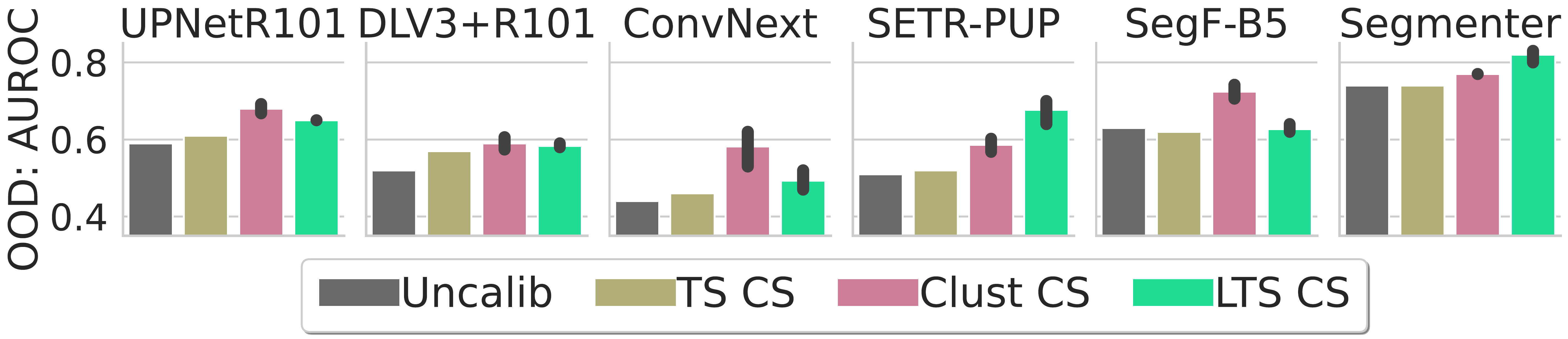}
\end{subfigure}

\caption{\textbf{Misclassification and \ood detection after calibration} for several models after applying different calibration techniques using CS samples.
\textbf{Misclassification -- PRR~$(\uparrow)$ (first 3 rows)}: Under strong domain shifts (ACDC), LTS calibration significantly improves PRR. 
\textbf{\ood detection -- AUROC~$(\uparrow)$ (last row)}: Both clustering and LTS yield improvements in \ood detection.
}
\label{figure:misc_ood_after_calibration}
\end{figure}

\subsection{Can calibration improve misclassification and out-of-domain detection?} \label{sec:improve_misc_ood}
In \cref{figure:misc_ood_after_calibration} we show misclassification (top) and \ood (bottom) detection metrics for different models after calibration using only CS samples. For misclassification, we observe a sharp PRR improvement on ACDC after we calibrate models with LTS. This is 
encouraging
as it indicates that the calibration network learned in LTS can help discern correct from incorrect predictions given its output temperature. We do not observe significant improvements in other datasets or with other methods. This is reasonable, since the largest calibration gain was 
observed with
LTS on ACDC.
(see \cref{figure:calib_methods_comparison}).

Regarding \ood detection (\cref{figure:misc_ood_after_calibration} bottom) we observe that both clustering and LTS calibration can improve \ood detection. 
We find this interesting,
since clustering on CS did not improve \ood calibration significantly. Although the clusters using only CS images are not representative enough to produce adequate temperatures for IDD or ACDC, \ood samples are assigned to clusters which have larger temperatures. This is enough to decrease the confidence for \ood samples compared to \id and leads to better \ood detection. Similarly for LTS, the calibration network assigns larger temperatures to \ood images. 

In conclusion, \textit{adaptive \ts techniques are a promising avenue to improve \ood detection and misclassification detection under strong domain shifts.}

\section{Conclusion}
\label{sec:conclusion}

We have studied the reliability of recent segmentation models---in terms of \textbf{robustness} and \textbf{uncertainty estimation} under natural domain shifts. Overall, while no single model family is better in all scenarios, recent models are remarkably more robust to domain shifts than ResNet baselines. Yet, this does not translate into better \textit{calibration}---severely degraded out of domain. Thus, it is crucial to find methods to improve model calibration in \ood settings. 
To this end, we have explored state-of-the-art methods and found that Local Temperature Scaling \cite{ding2021local}, although originally devised for \id settings, is a promising 
technique.

Furthermore, we have explored \textit{misclassification} and \textit{\ood detection}---two other important tasks regarding uncertainty estimation. We have shown that recent and more robust models tend to perform better at misclassification under strong domain shifts, 
but yet
they underperform ResNet baselines \id. On the other hand, \ood detection under domain shifts is negatively correlated with the mIoU, which translates into a trade-off between robustness and uncertainty. 
Finally, we find that adaptive temperature scaling techniques can help beyond calibration and improve \ood detection and misclassification in some settings. 

All in all,
although we appear to be \textit{on the right track} for what concerns robustness, our findings motivate the need to improve reliability of segmentation models in other dimensions, where results are not equally postive. In that regard, we identify several promising directions which we hope may encourage future research on this important topic.

\paragraph{Acknowledgements}
\noindent We would like to thank Francesco Pinto and Gabriela Csurka for helpful discussions. Prof. Philip Torr is supported by the UKRI grant: Turing AI Fellowship EP/W002981/1 and EPSRC/MURI grant: EP/N019474/1. We would also like to thank the Royal Academy of Engineering and FiveAI.

{\small
\bibliographystyle{ieee_fullname}
\bibliography{egbib}
}

\clearpage
\appendix
\onecolumn
\section{Ablation of calibration metrics}
\label{sec:ablation_calibration_metrics}
In the main paper we present calibration results computing the Expected Calibration Error with equally spaced bins, however, alternative calibration metrics have been suggested. In \cref{figure:ablation_calibration_metrics} we compare the results obtained with: ECE with equally spaced bins (ECE), ECE with equally populated bins (Ada ECE) and the Kolmogorov-Smirnov Error (KS Error). For further details see \cref{sec:reliability_metrics}. We observe that the three different metrics yield almost identical results.

\begin{figure}[ht]
\centering
\includegraphics[width=0.9\linewidth]{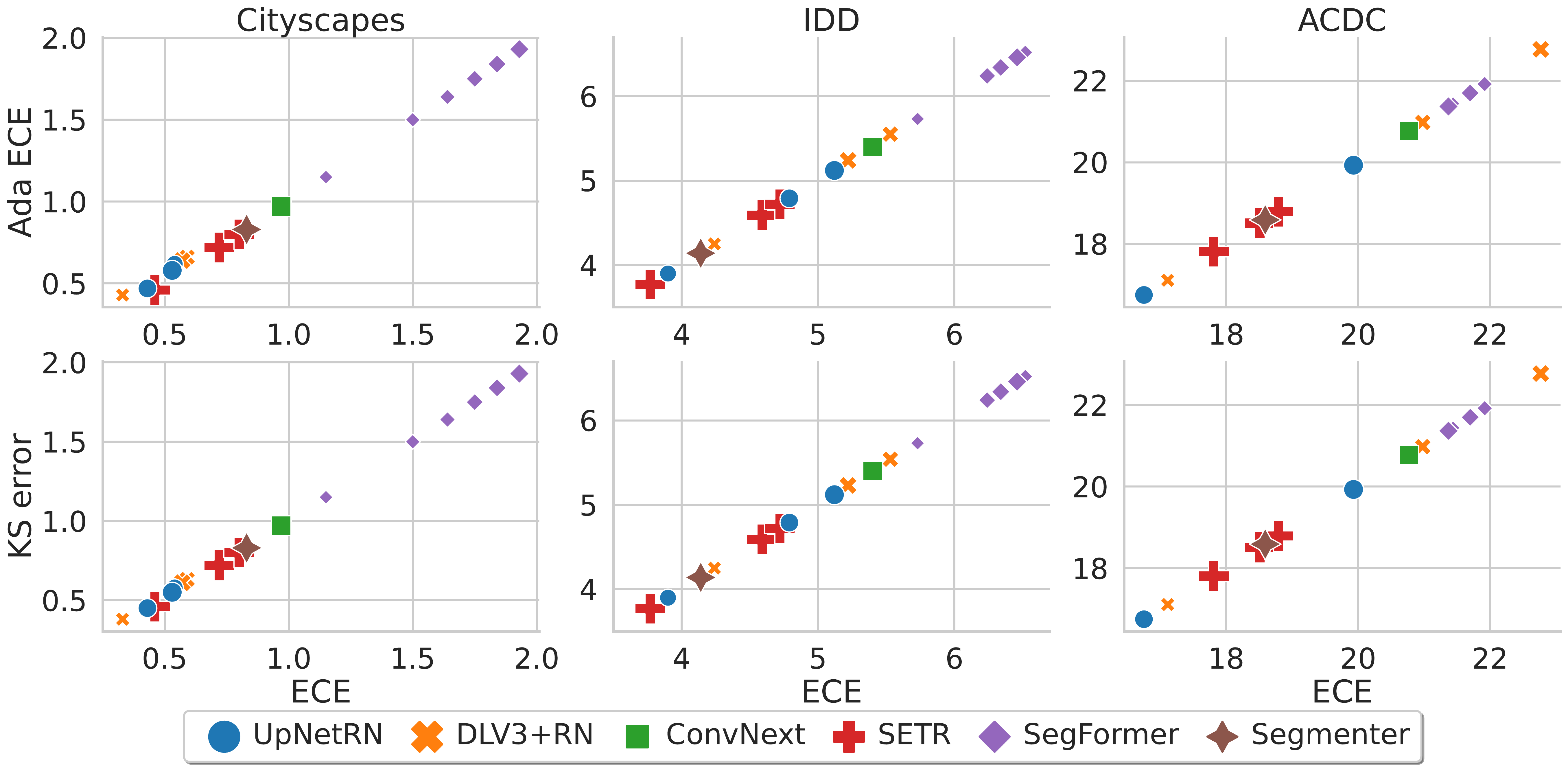}
\caption{\textbf{Comparison of calibration error metrics $(\downarrow)$} Calibration error for different datasets and networks computed with different metrics. All different metrics yield very similar result. 
}
\label{figure:ablation_calibration_metrics}
\end{figure}

\section{Ablation of number of pixels for calibration}
\label{sec:ablation_number_pixels_calibration}
As discussed in \cref{sec:reliability_metrics}, in segmentation, the number of 
samples to be taken into account for calibration scales with the number of pixels in an image. In order to be more cost-effective when testing different calibration metrics and strategies, we use a random subset of pixels within each image rather than the full image. In \cref{figure:ablation_num_samples_calibration}, we ablate the evolution of the different calibration metrics as we vary the number of sampled pixels. We can see that from 10k datapoints on, the metrics stabilizes; therefore, we chose to use 20k randomly sampled pixels per image for our experiments.

\begin{figure}[ht]
\centering
\includegraphics[width=0.6\linewidth]{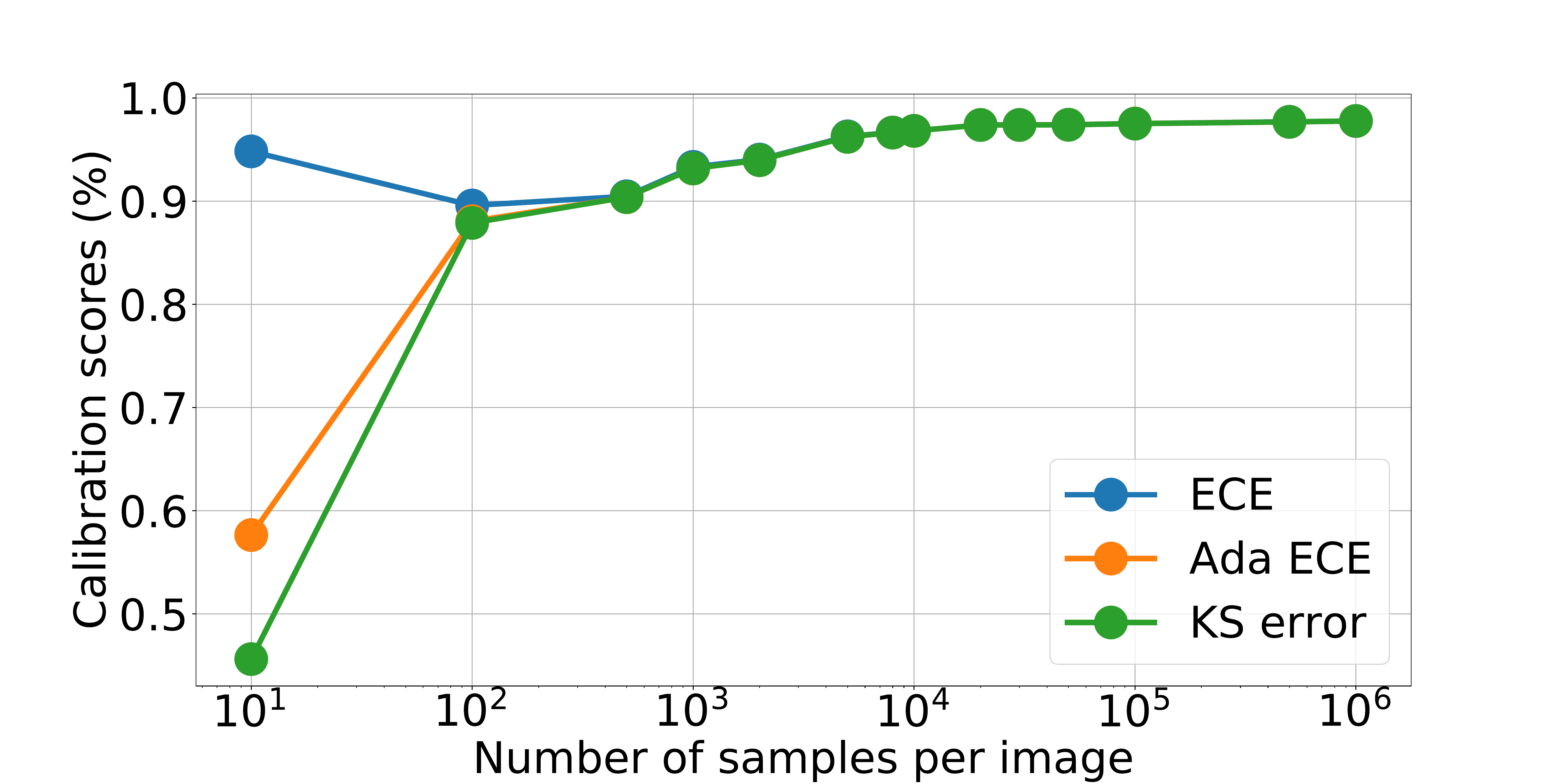}
\caption{\textbf{Pixels-per-image ablation.} Evolution of calibration metrics as we vary the number of pixels sampled at random from each image (as opposed to the full image). We observe that when sampling more than 10k pixels all calibration metrics are very similar and the calibration error remains stable. We use 20k random samples in our experiments.
}
\label{figure:ablation_num_samples_calibration}
\end{figure}

\section{Ablation of confidence score: max probability \vs entropy}
Misclassification detection and \ood detection both rely on a metric to evaluate how confident a model is on its predictions. The most straightforward metric would be the pseudo-probability of the predicted class (i.e. the max probability). If the probability is high it is reasonable to assume that the network is confident (this is precisely what we want to impose in the calibration task). Other metrics which involve all the logits have been suggested, negative entropy being the most popular. In \cref{figure:ablation_prob_vs_entropy} we compare the results obtained with probabiliy and entropy as confidence metrics and observe that there is not a significant difference between the two. Therefore, the simpler confidence metric based on the predicted class probability is used for other experiments by default.
\label{sec:ablation_prob_vs_entropy}
\begin{figure}[ht]
\centering
\begin{subfigure}{\linewidth}
\centering
\includegraphics[width=0.9\linewidth]{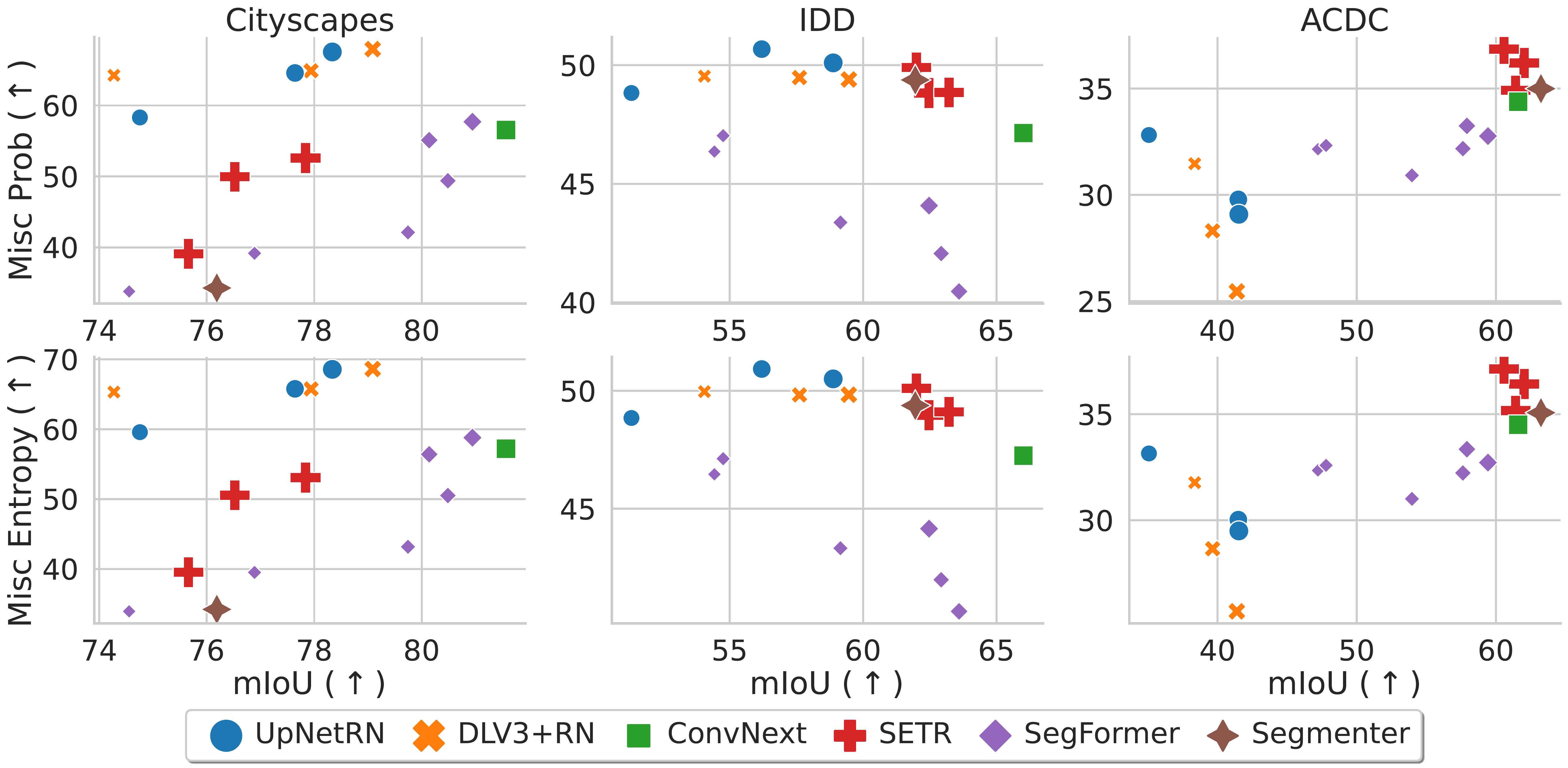}
\end{subfigure}

\vspace{0.3cm}

\begin{subfigure}{\linewidth}
\centering
\includegraphics[width=0.9\linewidth]{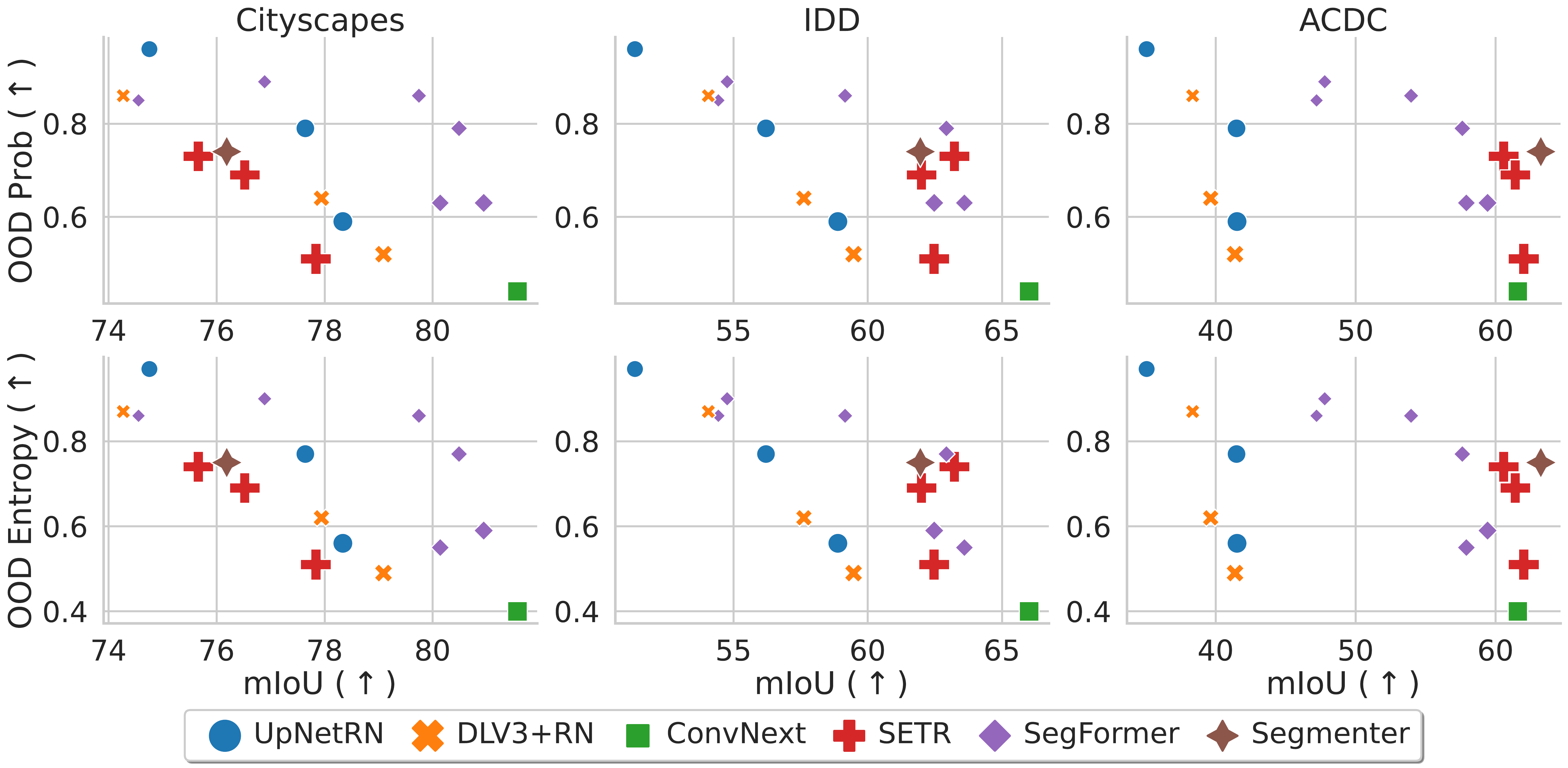}
\end{subfigure}
\caption{\textbf{Ablation of confidence score: max probability \vs entropy} Comparison of misclassification (top) and \ood (bottom) detection when using probability or negative entropy as confidence metrics. We observe that there is no significant difference between the two metrics, therefore we use the simpler probability as the default.
}
\label{figure:ablation_prob_vs_entropy}
\end{figure}

\section{Ablation number of clusters}
One of the main hyperparameters in Gong~\etal~\cite{gong2021confidence} is the number of clusters. In \cref{figure:ablation_num_clusters}, we ablate the number of clusters for different test datasets (columns) and calibration datasets (rows). Although not all networks evolve in the same way, we observe that after 16 clusters, performance is more or less stable.

\label{sec:ablation_num_clusters}
\begin{figure}[ht]
\centering
\includegraphics[width=0.9\linewidth]{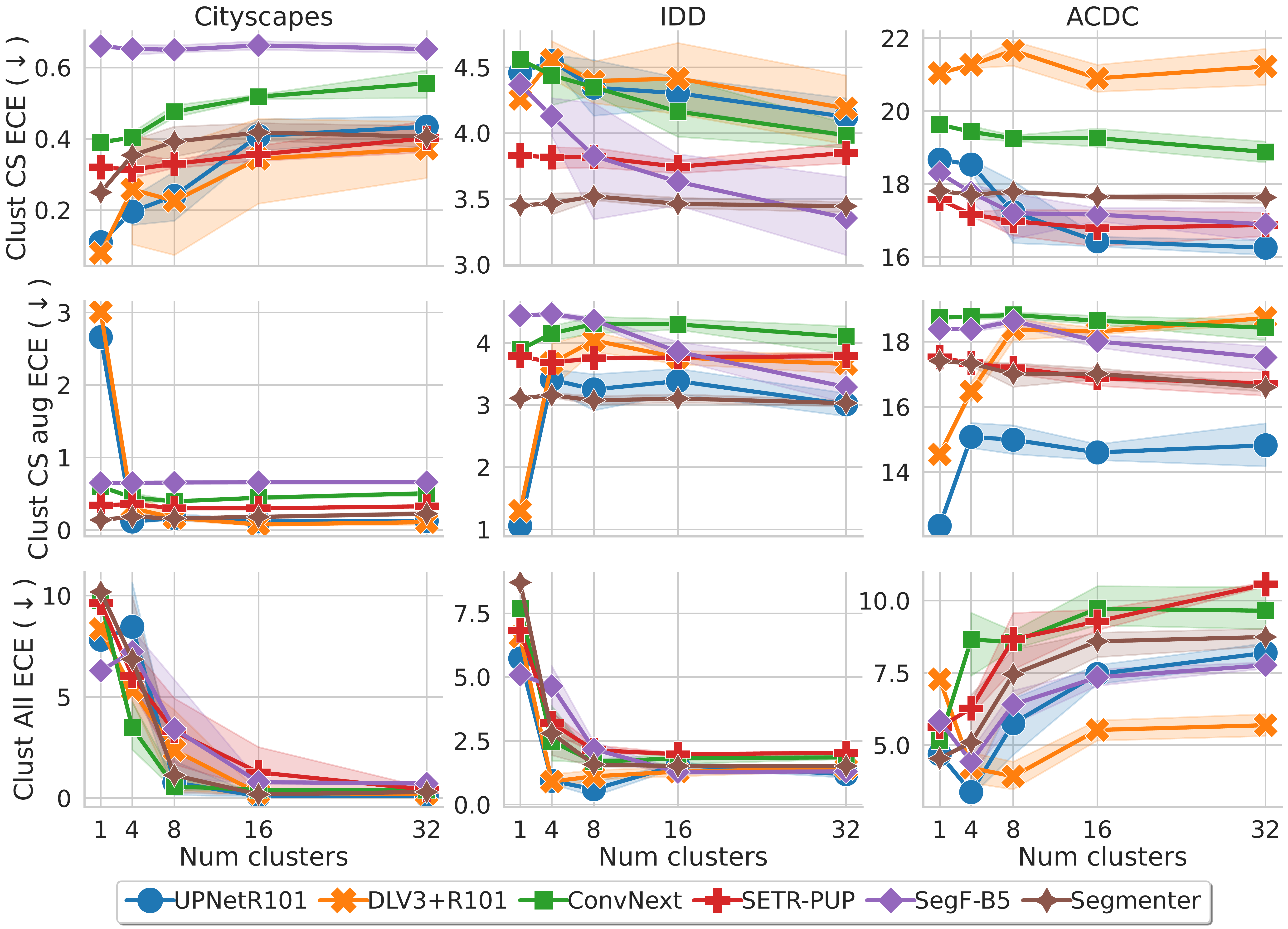}
\caption{\textbf{Ablation number of clusters.} ECE $(\downarrow)$ for different models and datasets as we vary the number of clusters computed for calibration. We find 16 clusters to be relatively stable.
}
\label{figure:ablation_num_clusters}
\end{figure}

\section{Visualization of cluster samples}
\label{sec:visualization_cluster_samples}
In \cref{sec:improving_calibration} we observe that adaptive temperature scaling via clustering does not significantly improve calibration under distribution shift -- especially when the shift is strong. 
The method by Gong~\etal~\cite{gong2021confidence} makes the implicit assumption that the different domains captured in the clusters during calibration will be representative of the domains encountered at test time. In order to have a better intuition, we visualize a few samples randomly picked from each cluster. We show images from both the calibration set (used to compute the clusters and calibrate the models) and the test set (used to evaluate the calibration error). In our visualizations, the test set comprises images of the three datasets (CS, IDD and ACDC), while the calibration set changes for each Figure. In~\cref{figure:cluster_viz_All_dsets,figure:cluster_viz_Cityscapes,figure:cluster_viz_CS_aug}, respectively, we show representatives from clusters in \textit{Clust All} (all datasets used during calibration), \textit{Clust CS} (CS images used) and \textit{Clust CS aug} (augmented CS images used). Qualitatively, when all datasets are used for calibration, the cluster assignments appear quite reasonable (\eg night ACDC images are assigned to night images from calibration). However, when calibrating on CS and CS augmented, we observe that the calibration clusters are not diverse enough for the test images and the cluster assignments do not appear so intuitive.

\clearpage 

\begin{figure}[ht]
\centering
\includegraphics[width=0.8\linewidth]{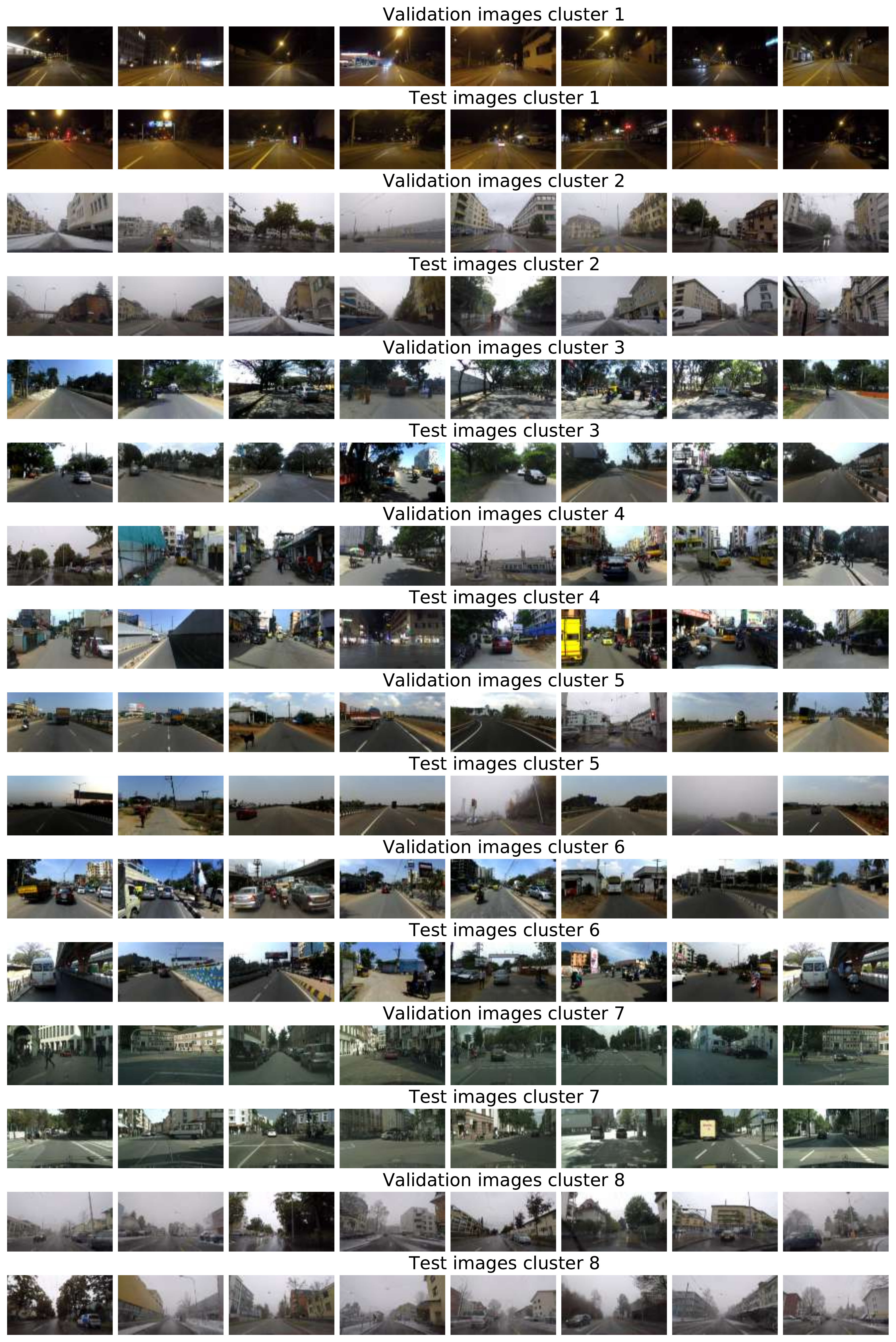}
\caption{\textbf{Visualization of clusters (Clust All).} Sample images from clusters computed for \cite{gong2021confidence}. In this case, the calibration set (where clusters are computed) contains imaged from all datasets and we qualitatively observe the cluster assignments to align with human intuition.
}
\label{figure:cluster_viz_All_dsets}
\end{figure}

\clearpage 

\begin{figure}[ht]
\centering
\includegraphics[width=0.8\linewidth]{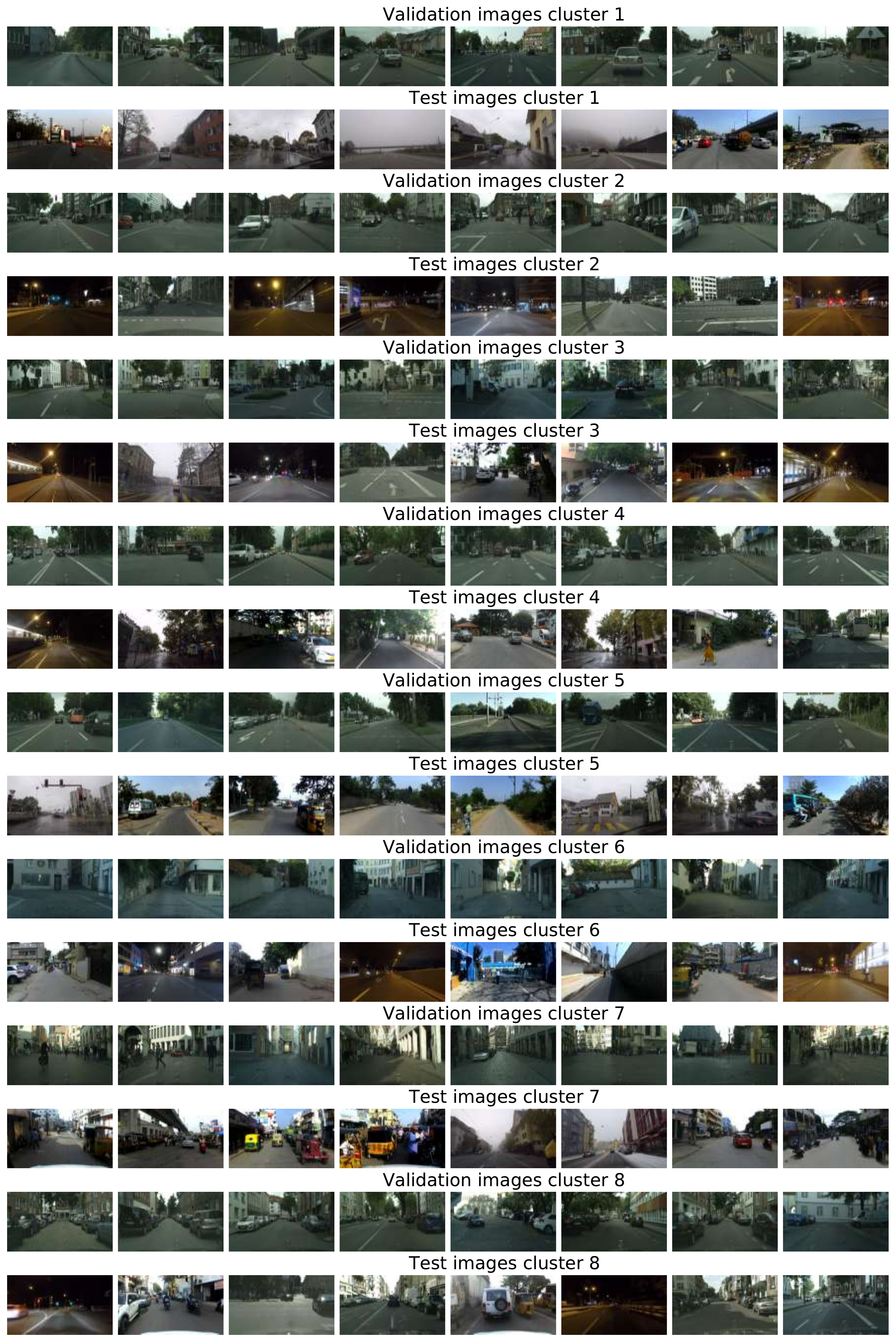}
\caption{\textbf{Visualization of clusters (Clust CS).} Sample images from clusters computed for \cite{gong2021confidence}. In this case, the calibration set (where clusters are computed) contains imaged from CS only. We qualitatively observe that the clusters are not representative of the test distribution.
}
\label{figure:cluster_viz_Cityscapes}
\end{figure}

\clearpage 

\begin{figure}[ht]
\centering
\includegraphics[width=0.8\linewidth]{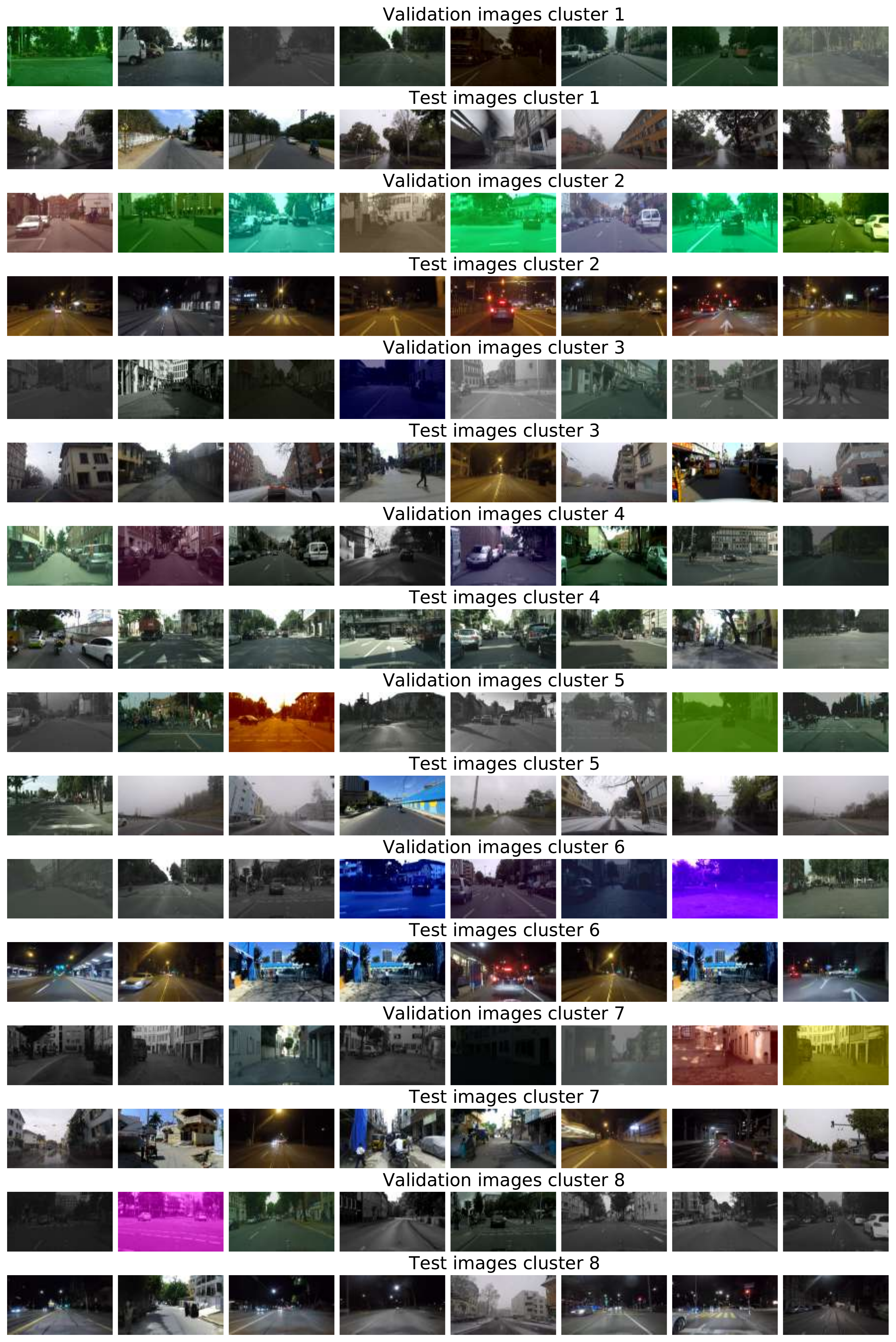}
\caption{\textbf{Visualization of clusters (Clust CS aug).} Sample images from clusters computed for \cite{gong2021confidence}. In this case, the calibration set (where clusters are computed) contains imaged from CS augmented only. Even if data augmentations introduce variability to the dataset, it is still not representative of the test distribution.
}
\label{figure:cluster_viz_CS_aug}
\end{figure}

\clearpage 

\section{Theoretical insights on adaptive temperature via clustering}
\label{sec:proof_subset_ECE}

In \cref{figure:val_set_clustering} we observed that even when we evaluate the ECE on the calibration set, the calibration error does not monotonically decrease as we increase the number of clusters. This is somewhat counterintuitive as one would think that, with more clusters, the temperatures can be more fine-grained and evaluating on the calibration set there are no issues with overfitting. However, it is indeed possible since the temperatures of each cluster are optimized independently. In the following we present a theorem and proof to show that decreasing the ECE for several disjoint subsets of images (clusters) independently does not guarantee that the ECE on the union set will decrease.

First, we introduce some preliminaries and notation. Consider a classifier $f : \mathcal{X} \rightarrow \mathcal{Y}$ with $\mathcal{Y} = \{1, 2, 3, \dots, k\}$. We model our classifier as $f = \textrm{argmax}\ \hat{p}(\mathbf{y} | \mathbf{x})$ where $\hat{p}(\mathbf{y} | \mathbf{x})$ are the pseudo-probabilities estimated by the model for each class given the input. We say that a model is calibrated when
\begin{equation}
P \left( \mathbf{y}\ |\ \hat{p}(\mathbf{y} | \mathbf{x}) = p \right) = p, \quad \forall \mathbf{x} \sim \mathcal{D}
\end{equation}
where $P$ is the true probability of the classes and $\mathcal{D}$ the data distribution. However, most works focus on a simplification of this problem where only the probability of the predicted class is taken into account, that is:
\begin{equation}
P \left( y = \textrm{argmax}\ \hat{p}(\mathbf{y} | \mathbf{x})\ |\ \textrm{max}\ \hat{p}(\mathbf{y} | \mathbf{x}) = p^* \right) = p^* \quad \forall \mathbf{x}, y \sim \mathcal{D}.
\end{equation}

The most common metric to measure calibration is the \textit{Expected Calibration Error (ECE)}. Which looks at the expected difference between the predicted and actual probabilities:

\begin{equation}
\mathbb{E} \left [\ \left|\ p^*  - \mathbb{E} \left [\ P(\textrm{argmax}\ \hat{p}(\textbf{y} | \mathbf{x}) = y)\ |\  \textrm{max}\ \hat{p}(\textbf{y} | \mathbf{x}) = p^*\ \right ]\ \right| \ \right ].
\end{equation}

In order to empirically estimate the ECE, it is standard practice to quantize the output probabilities given by the model and compute the mean probability (confidence) and accuracy in each bin.  That is,  

\begin{equation}
\widehat{\textrm{ECE}}_{f} = \sum_{i=1}^m \frac{\#B_i}{n} \ |\textrm{accuracy}(B_i) - \textrm{confidence}(B_i)|
\end{equation}

where $\#B_i$ denotes the number of elements in the $i^{th}$ bin,  $m$ denotes the number of bins and $n = \sum_{i=1}^m \#B_i$ the total number of elements used to estimate the ECE.  We also use $f$ to indicate the dependency of the ECE on the classifier.

Consider now,  that we split the data into two different sets and we quantize it in bins $\mathcal{B} = \{B_i\}$ and $ \mathcal{B'} = \{ B'_i   \}$ with the same boundaries so that for each pair $B_i$ and $B'_i$  the range of confidence values are the same.  Moreover,  consider now the respective ECE computed for each subset of data independently --- denoted $\widehat{\textrm{ECE}}_{f}(\mathcal{B})$ and $\widehat{\textrm{ECE}}_{f}(\mathcal{B'})$ --- and on the full set of points $\widehat{\textrm{ECE}}_{f}(\mathcal{B}+\mathcal{B'})$ where $\mathcal{B}+\mathcal{B'}$ is an abuse of notation to indicate the union of elements in the bins for each index $i$. 

\begin{theorem}
With the notation described above, consider a model $f_{\textrm{oracle}}$ such that an ``oracle" splits the input according to whether it belongs to $\mathcal{B}$ or $\mathcal{B'}$. Moreover, $f_{\textrm{oracle}}$ uses two calibration strategies (one for $\mathcal{B}$ and one for $\mathcal{B'}$) in a way that it improves it's ECE on each subset $\mathcal{B}$, $\mathcal{B'}$ individually compared to some baseline model $f$ (e.g. by means of temperature scaling with a different temperature for each subset).  This does not necessarily imply that the oracle model ($f_{\textrm{oracle}}$) will be better calibrated on the full set of points $\mathcal{B}+\mathcal{B'}$ than the baseline model $f$. That is, given that:
$$
(a) \quad \widehat{\textrm{ECE}}_{f_{\textrm{oracle}}}(\mathcal{B}) \leq \widehat{\textrm{ECE}}_{f}(\mathcal{B})\ \ \  \textrm{and} \ \ \ \widehat{\textrm{ECE}}_{f_{\textrm{oracle}}}(\mathcal{B'}) \leq \widehat{\textrm{ECE}}_{f}(\mathcal{B'})
$$

Then,  condition $(a)$ is not sufficient to claim:
$$
 (b) \quad \widehat{\textrm{ECE}}_{f_{\textrm{oracle}}}(\mathcal{B + B'}) \leq \widehat{\textrm{ECE}}_{f}(\mathcal{B + B'})
$$
\end{theorem}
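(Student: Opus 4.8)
The plan is to prove the statement by constructing an explicit counterexample, since the claim is exactly that condition $(a)$ is \emph{insufficient} to force $(b)$. Everything hinges on the non-linearity of the absolute value in the ECE estimator. When $\mathcal{B}$ and $\mathcal{B'}$ share the same bin boundaries, the combined bin-$i$ deviation $\bar a_i - \bar c_i$ is the size-weighted average of the per-subset deviations $(a_i - c_i)$ and $(a'_i - c'_i)$, so by the triangle inequality $|\bar a_i - \bar c_i|$ can be \emph{strictly smaller} than the matching average of $|a_i - c_i|$ and $|a'_i - c'_i|$ whenever the two deviations carry opposite signs. The underlying fact is simply that $|x+y| \le |x| + |y|$, strict when $\mathrm{sign}(x) \ne \mathrm{sign}(y)$. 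This is the lever: a baseline model can enjoy a ``lucky cancellation'' on the union that an independently-recalibrated oracle destroys by straightening each subset out.

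First I would reduce to the simplest nontrivial configuration: a single bin ($m=1$) and two subsets $\mathcal{B}, \mathcal{B'}$ of equal size $N$. Here the three quantities collapse to $\widehat{\textrm{ECE}}_f(\mathcal{B}) = |a - c|$, $\widehat{\textrm{ECE}}_f(\mathcal{B'}) = |a' - c'|$, and $\widehat{\textrm{ECE}}_f(\mathcal{B+B'}) = \tfrac12\,\bigl|(a - c) + (a' - c')\bigr|$, where $a,c$ and $a',c'$ are the accuracy and confidence on each subset. Next I would assign concrete deviations. For the baseline $f$, take them equal and opposite, $a - c = +\delta$ on $\mathcal{B}$ and $a' - c' = -\delta$ on $\mathcal{B'}$ for some $\delta > 0$; this yields $\widehat{\textrm{ECE}}_f(\mathcal{B}) = \widehat{\textrm{ECE}}_f(\mathcal{B'}) = \delta$ but $\widehat{\textrm{ECE}}_f(\mathcal{B+B'}) = 0$. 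For the oracle $f_{\textrm{oracle}}$, choose a per-subset recalibration that \emph{shrinks} each deviation in magnitude while leaving both with the same sign, e.g. both equal to $+\delta/2$; this gives $\widehat{\textrm{ECE}}_{f_{\textrm{oracle}}}(\mathcal{B}) = \widehat{\textrm{ECE}}_{f_{\textrm{oracle}}}(\mathcal{B'}) = \delta/2$ and $\widehat{\textrm{ECE}}_{f_{\textrm{oracle}}}(\mathcal{B+B'}) = \delta/2$.

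With these numbers the conclusion is immediate: $(a)$ holds because $\delta/2 \le \delta$ on both subsets, yet $(b)$ fails because $\widehat{\textrm{ECE}}_{f_{\textrm{oracle}}}(\mathcal{B+B'}) = \delta/2 > 0 = \widehat{\textrm{ECE}}_f(\mathcal{B+B'})$. To make the example concrete rather than a bare choice of numbers, I would exhibit a small toy dataset---a handful of samples per subset whose predicted class-probabilities and labels realize the stated accuracies and confidences---and note that the oracle's per-subset improvement is achievable by an admissible recalibration map, temperature scaling being one such mechanism as the statement permits.

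The step I expect to be the main obstacle is this final realizability check. Temperature scaling both rescales the confidences \emph{and} can push samples across bin boundaries, so I must ensure the chosen accuracy/confidence profiles correspond to a genuine classifier together with a genuine per-subset calibration map, rather than to mutually inconsistent numbers. Confining the construction to a single bin (or keeping $\delta$ small enough that no sample crosses a boundary under the recalibration) sidesteps the re-binning complication and keeps the counterexample clean, while still faithfully capturing the empirical phenomenon observed in \cref{figure:val_set_clustering}.
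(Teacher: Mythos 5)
Your proposal is correct and follows essentially the same route as the paper's proof: both construct a counterexample in which the baseline's per-subset deviations have opposite signs (e.g.\ $+\delta$ and $-\delta$) and cancel on the union, while the oracle halves each deviation but aligns their signs, so the union ECE goes from $0$ to $\delta/2$. The paper works directly with the abstract residuals $r_i(f)=r$, $r'_i(f)=-r$, $r_i(f_{\textrm{oracle}})=r'_i(f_{\textrm{oracle}})=-0.5r$ and does not carry out the realizability check you flag, but the mechanism and the numbers are the same.
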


\begin{proof}
In order to proof the theorem we will construct a counter-example where condition $(a)$ is satisfied but condition $(b)$ is not. Consider the accuracy and confidence for the full set of points in a given bin:

\begin{align*}
    \textrm{acc}(B_i + B'_i) &= \frac{\#B_i\ \textrm{acc}(B_i) + \#B'_i\ \textrm{acc}(B'_i) }{\#B_i + \#B'_i} \\
    \textrm{conf}(B_i + B'_i) &= \frac{\#B_i\ \textrm{conf}(B_i) + \#B'_i\ \textrm{conf}(B'_i) }{\#B_i + \#B'_i}
\end{align*}

Then the ECE of the full set of points will be:

\begin{align*}
\widehat{\textrm{ECE}}_{f}(\mathcal{B + B'})  &= \sum_{i=1}^m \frac{\#B_i + \#B'_i}{n + n'} \left|\textrm{acc}_{f}(B_i + B'_i) - \textrm{conf}_{f}(B_i + B'_i)\ \right| \\
&= \sum_{i=1}^m \frac{1}{n + n'}\ \left|\ \#B_i (\textrm{acc}_{f}(B_i) - \textrm{conf}_{f}(B_i)) + \#B'_i(\textrm{acc}_{f}(B'_i) - \textrm{conf}_{f}(B'_i)) \ \right|.
\end{align*}

To simplify the notation,  let us define $r_i(f) = \textrm{acc}_{f}(B_i) - \textrm{conf}_{f}(B_i)$ and similarly for $r'_i(f)$.  Then,  we can write:

\begin{align*}
\widehat{\textrm{ECE}}_{f}(\mathcal{B + B'}) =&  \sum_{i=1}^m \frac{ 1}{n + n'}\  |\#B_i\ r_i(f)  + \#B'_i\ r'_i(f) |. \\
\widehat{\textrm{ECE}}_{f}(\mathcal{B})  =&  \sum_{i=1}^m \frac{\#B_i}{n} \ |r_i(f) |. \\
\widehat{\textrm{ECE}}_{f}(\mathcal{B'}) =&  \sum_{i=1}^m \frac{\#B'_i }{n'} \ |r'_i(f) |. \\
\end{align*}

Now let us consider a setting where $r_i(f) = r $ and $r_i(f_{\textrm{oracle}}) = -0.5 r $ for some $r \neq 0$ while $r'_i(f) = -r$ and $r'_i(f_{\textrm{oracle}}) = -0.5r $. Moreover, consider $\#B_i = \#B'_i$ which implies $n = n'$, then this setting would satisfy condition $(a)$ since 

\begin{align*}
&\widehat{\textrm{ECE}}_{f}(\mathcal{B})  =  \sum_{i=1}^m \frac{\#B_i }{n} \ |r |  \geq \sum_{i=1}^m \frac{\#B_i}{n} \ |-0.5r | = \widehat{\textrm{ECE}}_{f_{\textrm{oracle}}}(\mathcal{B})  \quad \textrm{and} \\
&\widehat{\textrm{ECE}}_{f}(\mathcal{B'}) =  \sum_{i=1}^m \frac{\#B'_i }{n'} \ |-r | \geq   \sum_{i=1}^m \frac{\#B'_i}{n'} \ |-0.5r |  =  \widehat{\textrm{ECE}}_{f_{\textrm{oracle}}}(\mathcal{B'}).
\end{align*}

However,  this same setting would not satisfy condition $(b)$ since 

\begin{align*}
\widehat{\textrm{ECE}}_{f}(\mathcal{B + B'}) &=  \sum_{i=1}^m \frac{\#B_i}{2n}\  |r  - r | = 0 \quad \textrm{and} \\
\widehat{\textrm{ECE}}_{f_{\textrm{oracle}}}(\mathcal{B + B'}) &= \sum_{i=1}^m \frac{\#B_i}{2n}\  |-0.5r  - 0.5r | > 0.
\end{align*}

Thus, we have showed that condition $(a)$ does not imply $(b)$.
\end{proof}
This result implies that minimizing the ECE for different subsets of the data independently (e.g.  each cluster of images) does not necessarily lead to an overall improvement of the ECE. Moreover,  we have assumed only two sets of samples without loss of generalization since if $(a)$ implied $(b)$ for an arbitrary number of data splits it would in particular imply it for two. Finally, note that our result is valid for either image classifiers or segmentors. In the first case we would each prediction would be the class of a whole image while in the second case the each pixel in an image would have a different prediction.

\clearpage 

\section{Per-class clustering}
\label{sec:per_class_clustering}
In \cref{figure:calib_methods_comparison}, we have observed that adaptive temperature via clustering \cite{gong2021confidence} does not significantly help improving out-of-domain calibration compared to local temperature scaling (LTS) \cite{ding2021local}. One important difference between the methods is that LTS computes a temperature for each pixel in the image while clustering is performed at the image level -- using a single temperature per image. This motivates us to perform an ablation where, on top of the image level clustering, pixels in a given image are grouped according to their predicted class. Intuitively, we are looking for a temperature for regions in the image that look alike to the network (since they are assigned to the same class). In \cref{figure:per_class_clustering} we compare standard per-image clustering (top) with the aforementioned per-class clustering (bottom). Note that per-class clustering always groups pixels according to the predicted class, therefore if $k=1$ then there are 19 clusters (corresponding to the CS classes). Calibration images are from the CS dataset.

Similarly to per-image clustering, increasing the number of clusters does not seem to always help when using per-class clustering. Moreover, we do not find that per-class clustering significantly improves calibration except for SegFormer architecture. 
We are not stating here that finer-grained clustering may not yield further improvements (and reach similar performance to LTS). However, given that improving ECE in different subdomains independently is not guaranteed to improve overall calibration (see \cref{sec:proof_subset_ECE}), perhaps a different approach to finding the temperatures and clusters taking into account both local and global calibration error would be needed.

\begin{figure}[ht]
\centering
\includegraphics[width=0.9\linewidth]{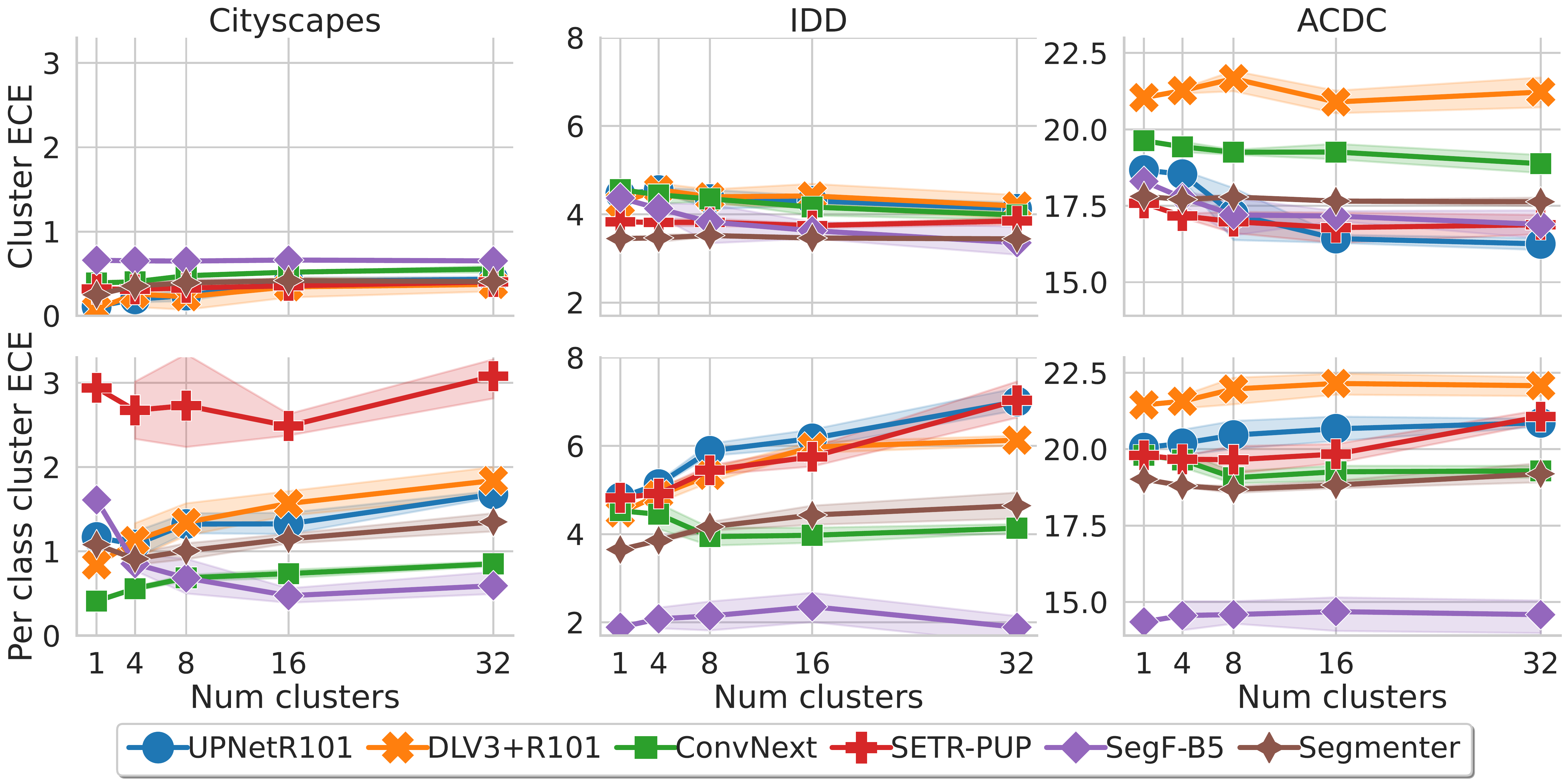}
\caption{\textbf{Per-class clustering ablation.} ECE $(\downarrow)$ for different models and datasets as we vary the number of clusters computed for calibration. We compare per-image clustering (where all pixels in an image are given the same temperature) \vs per-class clustering (where different pixels in an image are given a temperature depending on their predicted class). Overall, we do not observe consistent improvements when using per-class clustering except in SegFormer architecture.
}
\label{figure:per_class_clustering}
\end{figure}

\clearpage

\section{Ablation LTS: image \vs logits}
\label{sec:LTS_image_logits_ablation}
LTS \cite{ding2021local} employs a small-weight calibration network which receives both the image and predicted logits as input and it returns a temperature map to scale the logits (with a different temperature for each pixel in the image). Given its remarkable performance (see~\cref{figure:LTS_ablation,figure:calib_methods_comparison}) and, to get further insights into this method, in \cref{figure:LTS_image_logits_ablation} we perform an ablation where the calibration network only receives the image or the logits as input. To carry out this experiment, we modify the network in \cite{ding2021local} so that both input branches (logits and image) receive the same input, either both logits or both image. All calibration networks have been trained in CS images only. 

Interestingly, we observe that, in distribution (ECE CS), the better performing method for most networks is the LTS variant that uses the logits only. On the other hand, for \ood calibration, the better performing variant in most cases is the one that relies on both logit and image information. Moreover, under strong domain shifts (ECE ACDC), LTS yields better results by using only the image information than by using only the logits. However, this is subject to variability as results vary across different architectures. Further investigations on 
how logit and image signals are combined may constitute a promising direction to further improve calibration results. 

\begin{figure}[ht]
\centering
\begin{subfigure}[t]{\linewidth}
\centering
\includegraphics[width=0.75\linewidth]{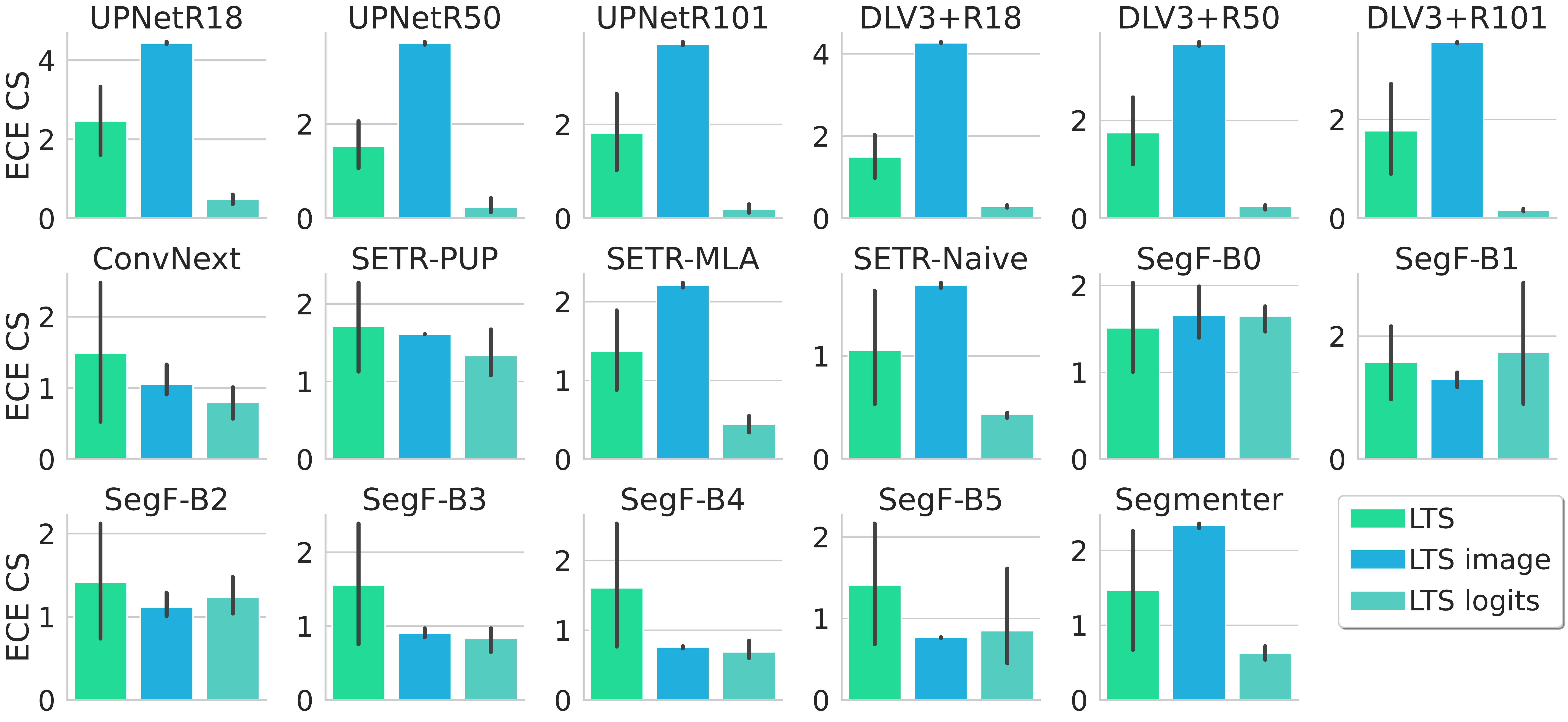}
\end{subfigure}

\vspace{0.3cm}

\begin{subfigure}[b]{\linewidth}
\centering
\includegraphics[width=0.75\linewidth]{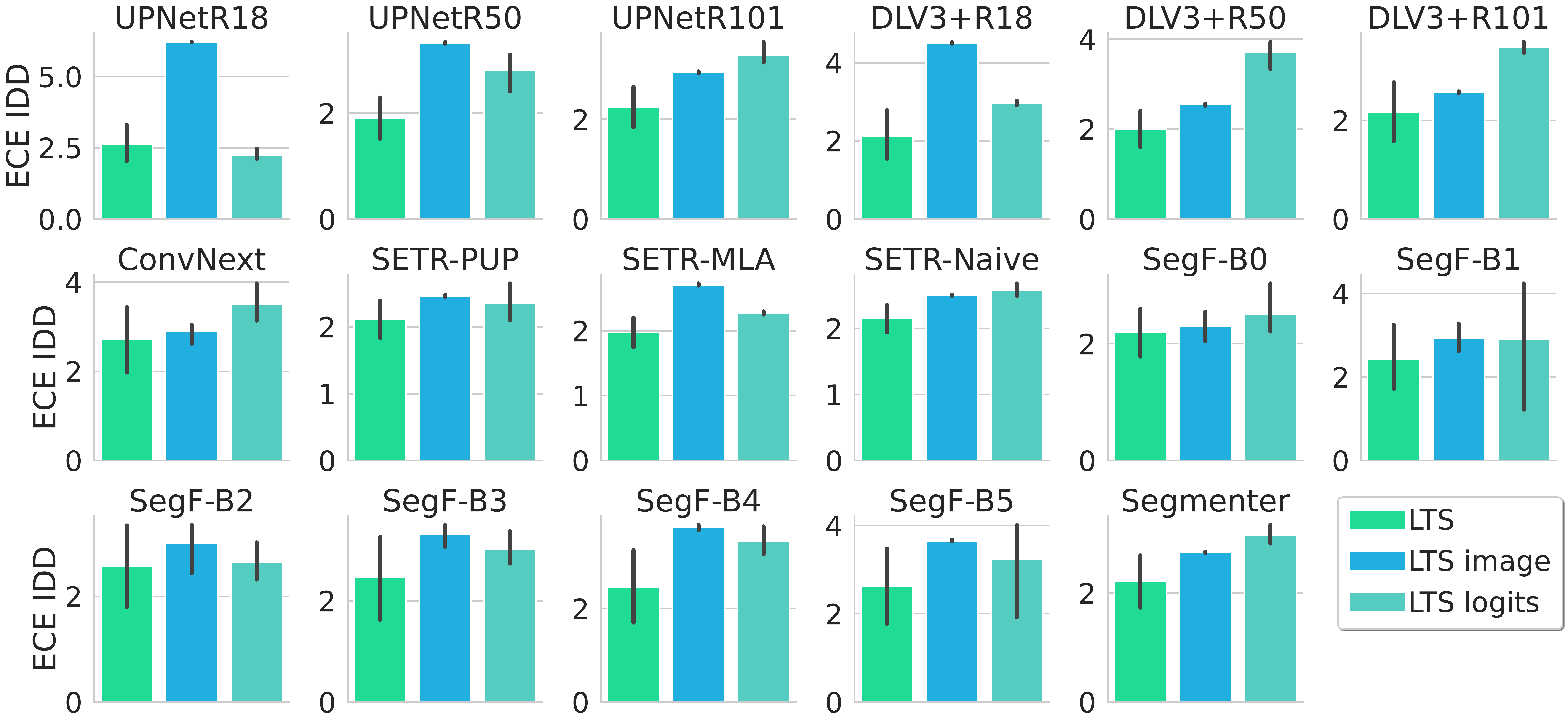}
\end{subfigure}

\vspace{0.3cm}

\begin{subfigure}[b]{\linewidth}
\centering
\includegraphics[width=0.75\linewidth]{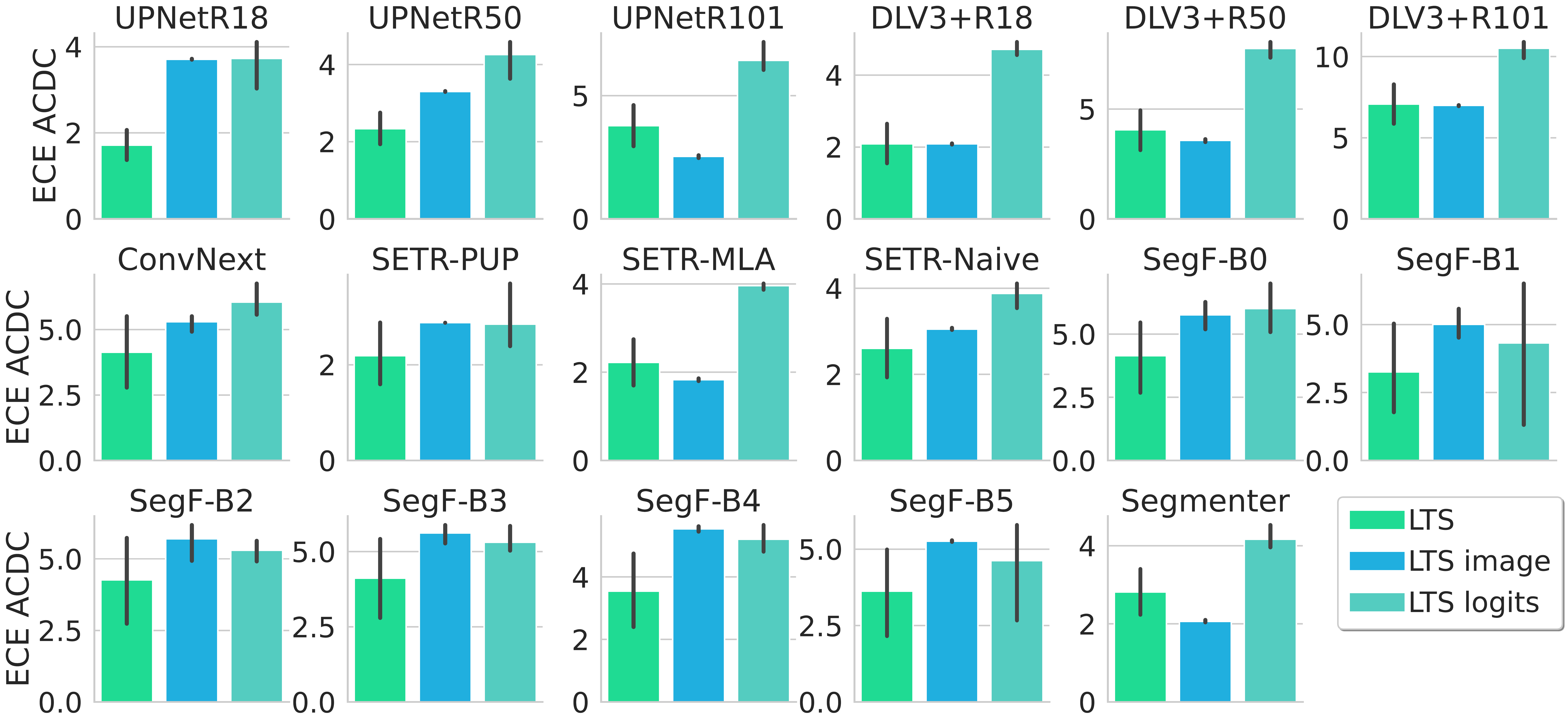}
\end{subfigure}
\caption{\textbf{LTS with image \vs logits information} ECE $(\downarrow)$ after calibration with three LTS variants: the original method combining image and logits (LTS), an LTS version only using logit information (LTS logits) and the complementary version only using image information (LTS image). We observe that in distribution, using only the logits seems to perform better while out of distribution using information from both image and logits works best.
}
\label{figure:LTS_image_logits_ablation}
\end{figure}

\clearpage

\section{Ablation of the effect of training settings}
\label{ap:training_ablations}

Our goal is to assess the reliability of the best available models for segmentation — the ones used by practitioners. Therefore, to test a model at its best, we need to use its most favorable setup (original pre-training, optimizer, etc.), as done in previous work \cite{pinto2022impartial, minderer2021revisiting}. Nevertheless, in this section we ablate the effects of two different training settings: the dataset used to pre-train the backbone and the number of training iterations. Most recent models were pre-trained on ImageNet 21k, however, ResNet backbones are usually pre-trained on ImageNet 1k. In the top rows of \cref{tab:training_ablations} we compare the performance of a BiT-ResNet50 \cite{kolesnikov2020big} and a ConvNext-B \cite{LiuCVPR22AConvNet4The2020s} backbones pre-trained with either ImageNet 1k or 21k. We observe that the benefits of a larger pre-training dataset are small in comparison to the gap between architectures.

On the other hand, most models were trained using 80k iterations (as is quite standard for CS dataset, but Segmenter and SegFormer models' original training schedule uses 160k iterations. In the bottom rows of \cref{tab:training_ablations} we compare Segmenter and SegFormer-B5 networks trained with either 160k or 80k iterations, again the differences are minor compared to the gap between architectures.

\begin{table}[ht]
\centering
\begin{tabular}{@{}lcccccc@{}}
\toprule
                 & \multicolumn{3}{c}{\textbf{mIoU} ($\uparrow$)} & \multicolumn{3}{c}{\textbf{ECE} ($\uparrow$)} \\ 
\midrule
\textbf{Architecture}        & \textbf{CS}    & \textbf{IDD}   & \textbf{ACDC}  & \textbf{CS}   & \textbf{IDD}  & \textbf{ACDC}  \\ 
\midrule
ConvNext-B (IN-1k)  & 80.39 & 64.11 & 58.77 & 0.77 & 5.53 & 18.95 \\
ConvNext-B (IN-21k) & 81.56 & 65.70 & 60.59 & 0.81 & 5.27 & 20.07 \\
BiT-RN50 (IN-1k)    & 76.36 & 57.46 & 47.47 & 0.67 & 5.18 & 19.47 \\
BiT-RN50 (IN-21k)   & 76.49 & 57.23 & 46.70 & 0.65 & 5.20 & 19.63 \\
\midrule
\midrule
Segmenter (160k)    & 76.19 & 61.96 & 63.24 & 0.83 & 4.14 & 18.59 \\
Segmenter (80k)     & 76.22 & 60.74 & 62.96 & 0.71 & 4.28 & 18.36 \\
SegF-B5 (160k)      & 80.94 & 62.47 & 59.43 & 1.93 & 6.46 & 21.37 \\
SegF-B5 (80k)       & 80.15 & 62.95 & 57.00 & 1.07 & 5.32 & 21.27 \\ \bottomrule
\end{tabular}
\caption{Ablations of different training settings. On the top we compare ConvNext and BiT-RN50 architectures with the backbones pre-trained on either ImageNet 1k or 21k datasets. We observe that the benefits of a larger pre-training dataset are small in comparison to the gap between architectures. On the bottom we compare the amount of training iterations, again, we observe that although longer training schedules do improve the performance, changes are also small in comparison to architecture differences.}
\label{tab:training_ablations}
\end{table}

\section{Architectures for Universal Image Segmentation}
\label{ap:mask2former}
With the appearance of transformers, and in particular motivated by DETR \cite{carion2020end}, some architectures have been proposed with the objective to solve the three main Image Segmentation tasks, that is: Semantic Segmentation, Instance Segmentation and Panoptic Segmentation \cite{cheng2021perpixel, zhang2021knet, ChengCVPR22MaskedAttentionMaskTransformer4UniversalSiS}. Here we will focus on Mask2Former \cite{ChengCVPR22MaskedAttentionMaskTransformer4UniversalSiS} which is based on MaskFormer \cite{cheng2021perpixel} and is the best performing universal architecture to the best of our knowledge. Interestingly, to be able to solve the different segmentation tasks jointly, these architectures do not output the standard per-pixel logits when it comes to semantic segmentation. Instead, they predict a set of $N$ object masks (where $N$ is fixed) and the class probabilities for each object. Then, to obtain the per-pixel class probabilities they marginalize over all the possible objects a pixel could belong to, we refer the reader to \cite{cheng2021perpixel, ChengCVPR22MaskedAttentionMaskTransformer4UniversalSiS} for further details. 

Although this final output can be regarded as per-pixel class probabilities, they way it is obtained differ from the standard \textit{logits + softmax} setting that all calibration methods rely on, therefore it is not straightforward to compare these universal architectures with other models with temperature scaling. Nevertheless, given the good performance and wider applicability of these models, we include them in our study comparing only off-the-shelf performance in terms of mIoU, calibration, OOD detection and misclassification. The best performing model from \cite{ChengCVPR22MaskedAttentionMaskTransformer4UniversalSiS} is based on a Swin transformer (Swin Large) \cite{LiuICCV21SwinTransformerHierarchicalViTShiftedWindows}, therefore, we also include a Swin transformer model with UpperNet to the comparison for completeness.

\subsection{Calibration with Mask2Former}
In \ref{figure:calib_mask2former} we present the ECE vs segmentation error (100 - mIoU) for the different models. Interestingly, we observe that Mask2Former seems to be the best-performing model in terms of mIoU in all datasets. In terms of calibration error Mask2Former is poorly calibrated in distribution (CS) but has a milder increase in ECE as the distribution shift becomes stronger. 

\begin{figure}[ht]
\centering
\includegraphics[width=0.7\linewidth]{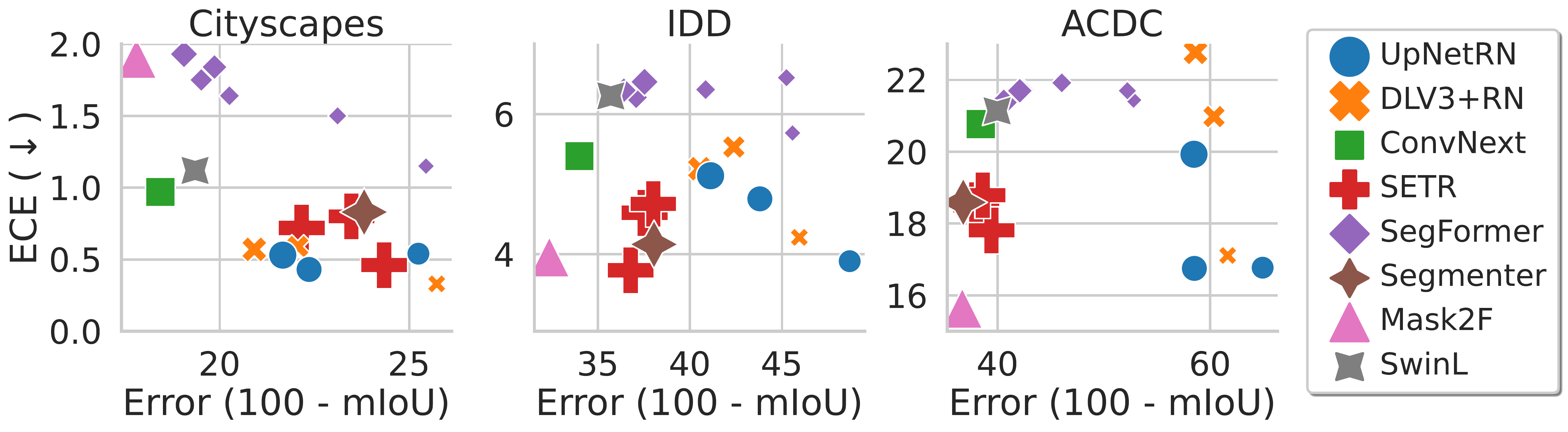}
\caption{
\textbf{mIoU error $(\downarrow)$ \vs Expected calibration error $(\downarrow)$}. All models were trained on CS. Markersize proportional to number of parameters. Interestingly, we observe that Mask2Former seems to be the best-performing model in terms of mIoU in all datasets. In terms of calibration error Mask2Former is poorly calibrated in distribution (CS) but has a milder increase in ECE as the distribution shift becomes stronger. 
}
\label{figure:calib_mask2former}
\end{figure}

\subsection{OOD detection with Mask2Former}
In \ref{figure:ood_mask2former} we present the OOD vs mIoU for the different models. Here we observe that Mask2Former and Swin transformer align with the negative trend observed in previous models where models with better mIoU tend to perform worse at OOD detection. 

\begin{figure}[ht]
\centering
\includegraphics[width=0.7\linewidth]{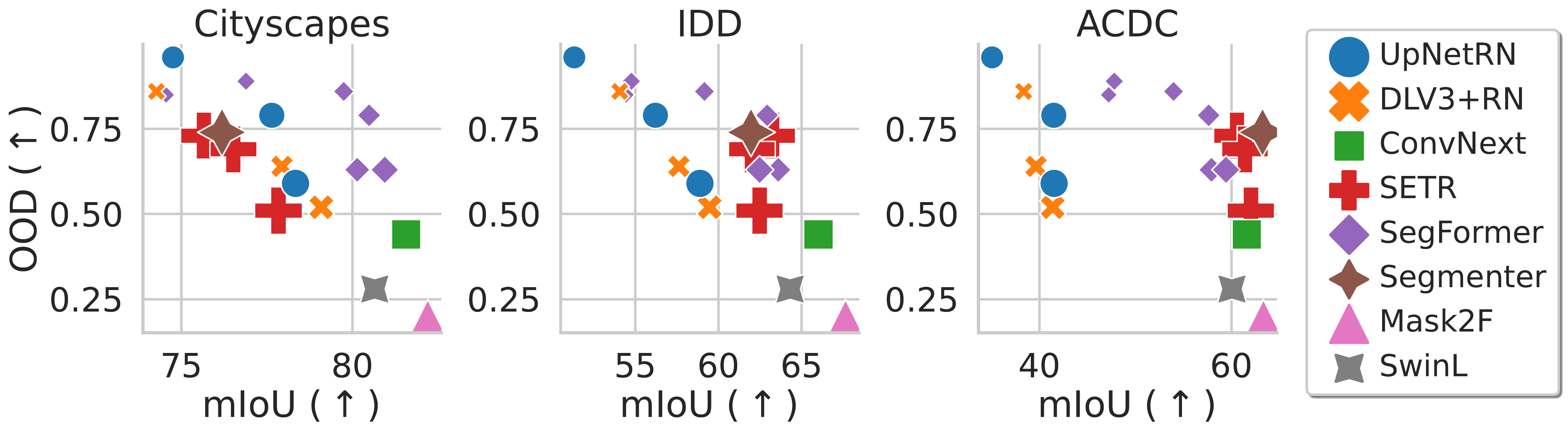}
\caption{
\textbf{mIoU $(\uparrow)$ \vs OOD - AUROC $(\uparrow)$} for different model families. All models trained on CS. Markersize proportional to number of parameters. We observe that Mask2Former and Swin transformer align with the negative trend observed in previous models where models with better mIoU tend to perform worse at OOD detection. 
}
\label{figure:ood_mask2former}
\end{figure}

\subsection{PRR with Mask2Former}

In \ref{figure:prr_mask2former} we present the OOD vs mIoU for the different models. Although Mask2Former is significantly better than other models in terms of misclassification detection in distribution, it seems to perform significantly worse under strong domain shifts (ACDC). This seems to be contrary to the trend followed by other models where robustness seems to be correlated with misclassification under domain shift.

\begin{figure}[ht]
\centering
\includegraphics[width=0.7\linewidth]{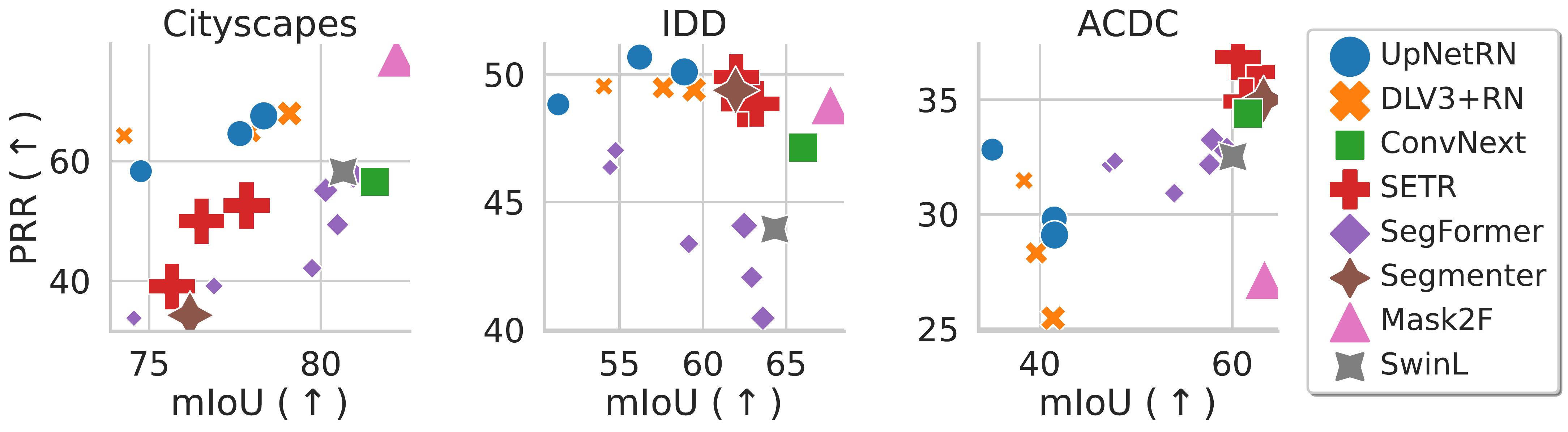}
\caption{
\textbf{mIoU $(\uparrow)$ \vs prediction rejection ratio $(\uparrow)$} for different model families. All models trained on CS. Markersize proportional to number of parameters. Mask2Former is significantly better than other models in terms of misclassification detection in distribution, it seems to perform significantly worse under strong domain shifts (ACDC).
}
\label{figure:prr_mask2former}
\end{figure}

\section{Visualization of uncertainty and temperature maps}
As opposed to classification, semantic segmentation models can assign different confidence to regions of the image. That can allow, among other applications, to detect regions with low confidence that may correspond to novel classes or weird instances of a known class. ACDC shares the same classes as Cityscapes, however, IDD includes some novel classes which are not included in the Cityscapes dataset. In \ref{figure:viz_t_scaling} we illustrate some examples of IDD images with confidence maps before and after LTS, together with the temperature scaling maps predicted by the calibration network. We observe how different \ood classes (\eg autorickshaw, bridge, billboard) or weird instances are highlighted in the calibration maps. In \cref{ap:local_ood} we quantify how useful are the calibration maps in order to perform local \ood detection. 

\begin{figure}[h!]
\centering
\includegraphics[width=0.65\linewidth]{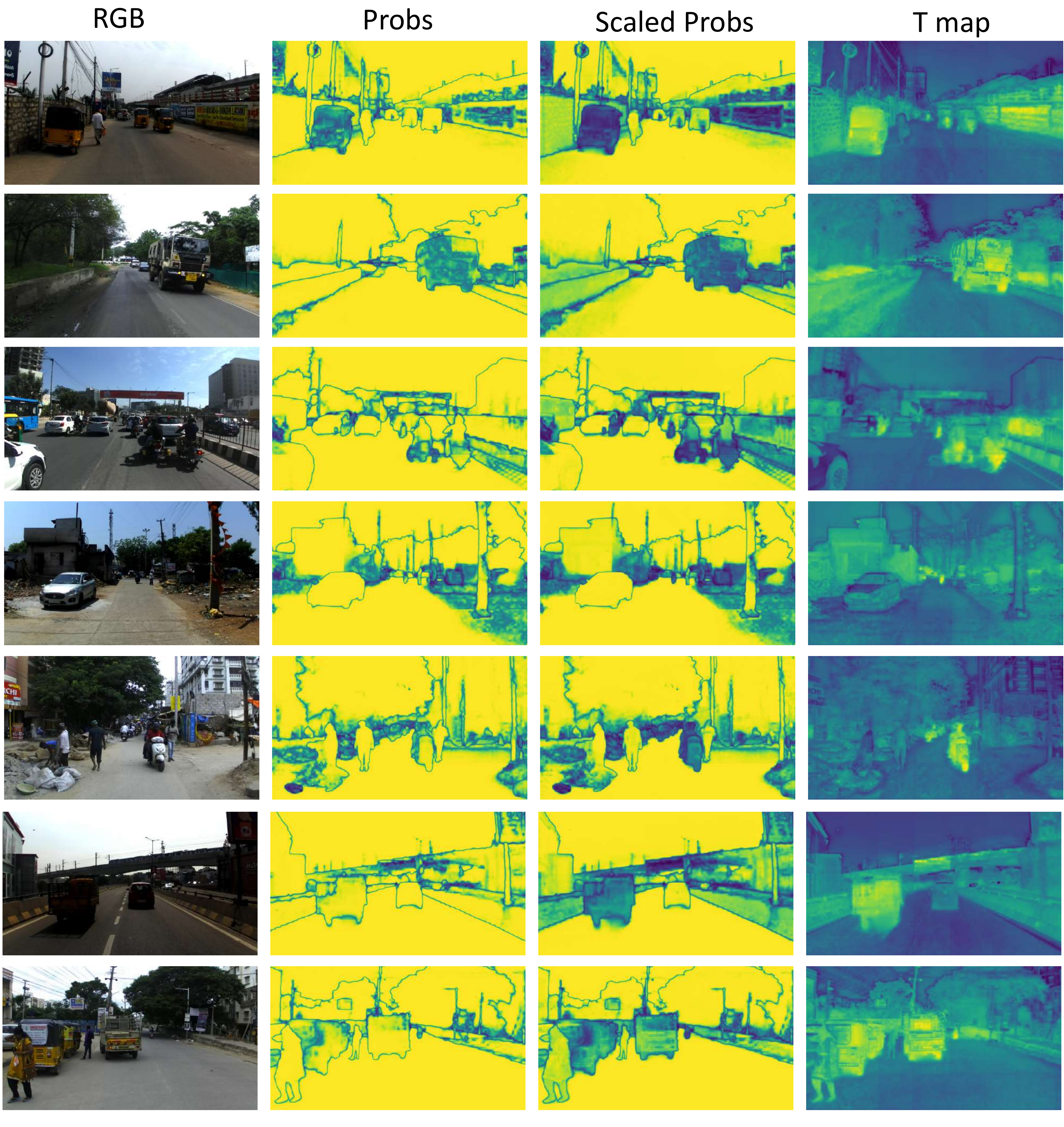}
\caption{
\textbf{Calibration and temperature maps} for different IDD images. We observe how different \ood classes (\eg autorickshaw, bridge, billboard) or weird instances are highlighted in the calibration maps.
}
\label{figure:viz_t_scaling}
\end{figure}

\clearpage

\section{Local OOD detection}
\label{ap:local_ood}
In this section we perform \ood detection at the pixel level to find regions of the image that belong to unknown classes (autorickshaw, guardrail, billboard, bridge). We define pixels of unknown classes as \ood while those corresponding to CS classes are in distribution. In \cref{figure:local_ood} we present the results of local \ood detection vs mIoU. We make two observations: i) Mask2Former has the best local \ood detection; ii) Differences between models are smaller for local \ood: numbers are roughly in the $0.7-0.8$ range \vs $ 0.4-1.0$ range for image-based \ood.

\begin{figure}[ht]
\centering
\includegraphics[width=0.5\linewidth]{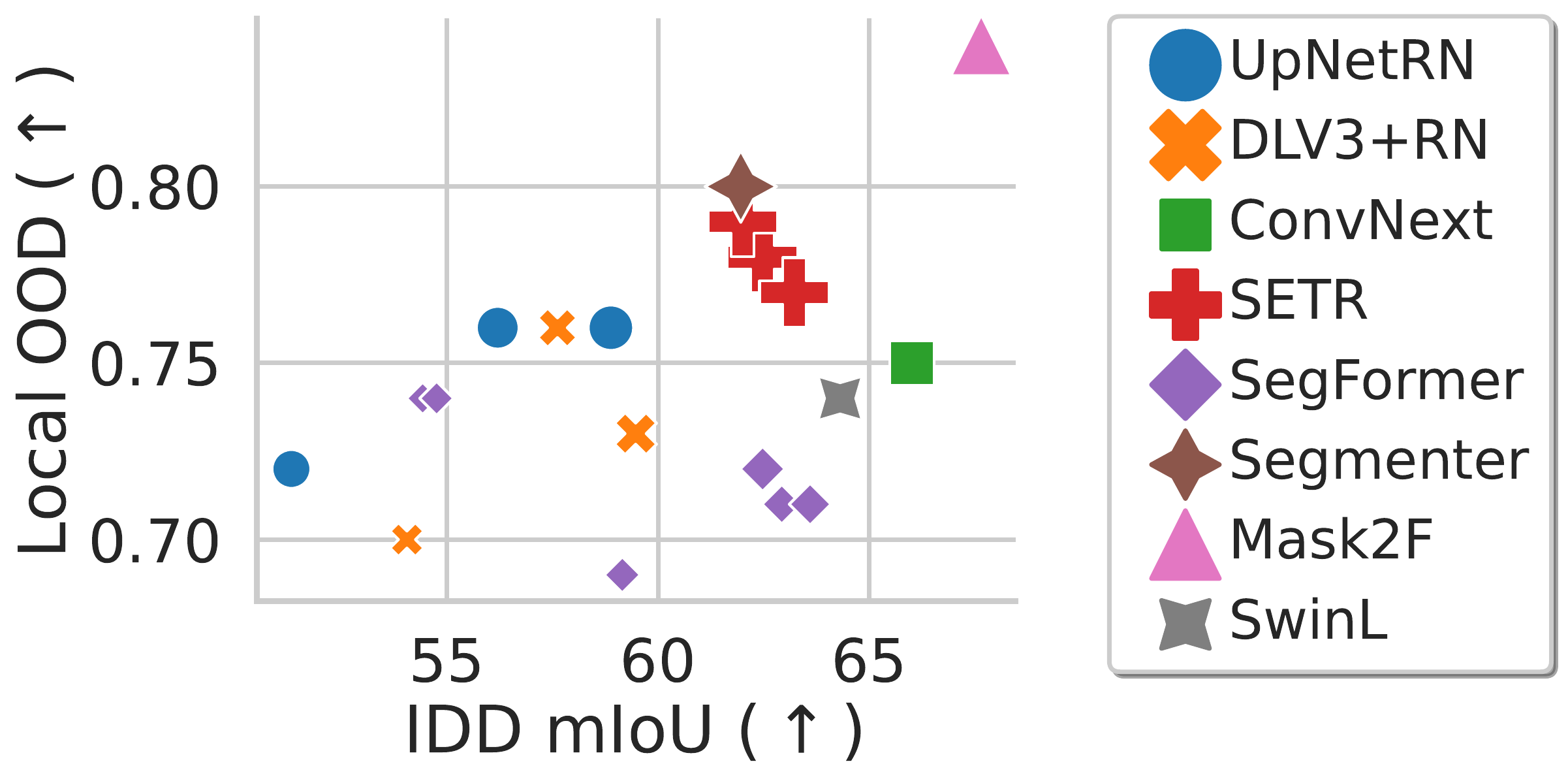}
\caption{
\textbf{mIoU $(\uparrow)$ \vs local ood $(\uparrow)$} for different model families. All models trained on CS. Markersize proportional to number of parameters. Mask2Former achieves the best local \ood detection performance.
}
\label{figure:local_ood}
\end{figure}


\clearpage

\section{Additional plots: adaptive temperature via clustering}
\label{sec:additional_plots_clustering}
In \cref{figure:cluster_ablation} we analyzed the calibration error after applying the method by Gong~\etal~\cite{gong2021confidence}, which clusters the images in the calibration set and computes a different temperature per cluster. Due to space constraints we only showed the best performing model for each family, in \cref{figure:clustering_methods_all_models} we show the results for all models.

\begin{figure}[ht]
\begin{subfigure}[t]{\linewidth}
\centering
\includegraphics[width=0.75\linewidth]{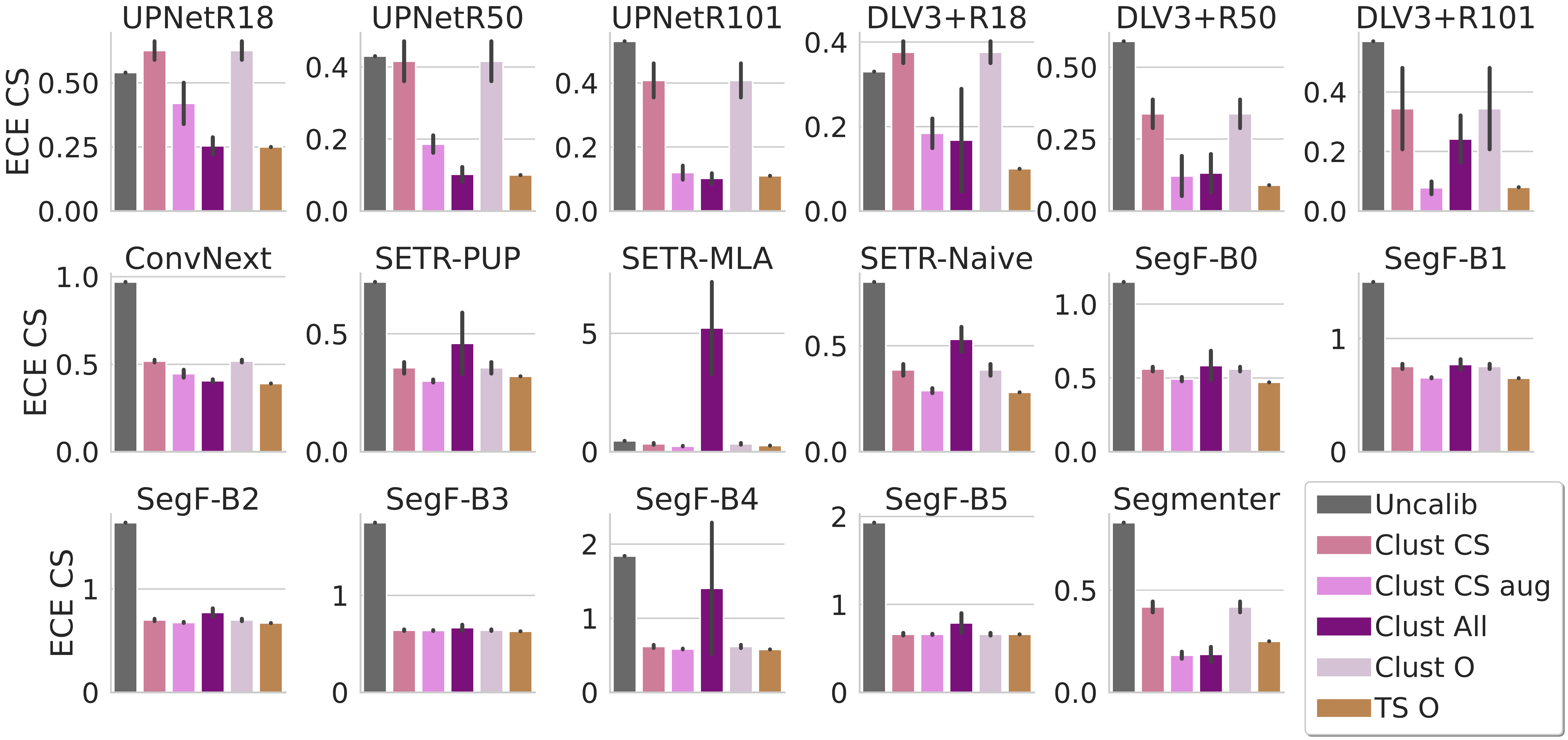}
\end{subfigure}

\vspace{0.3cm}

\begin{subfigure}[b]{\linewidth}
\centering
\includegraphics[width=0.75\linewidth]{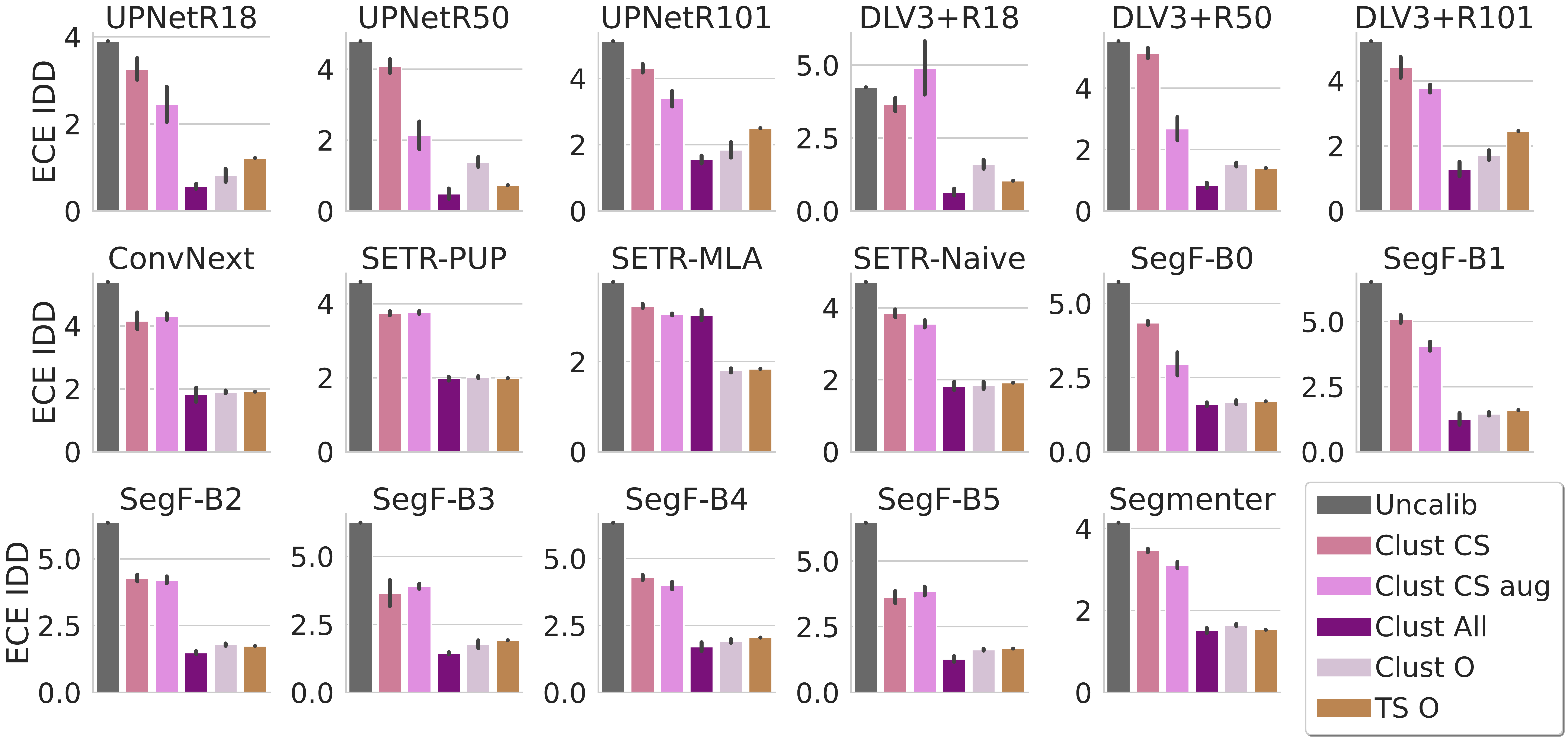}
\end{subfigure}

\vspace{0.3cm}

\begin{subfigure}[b]{\linewidth}
\centering
\includegraphics[width=0.75\linewidth]{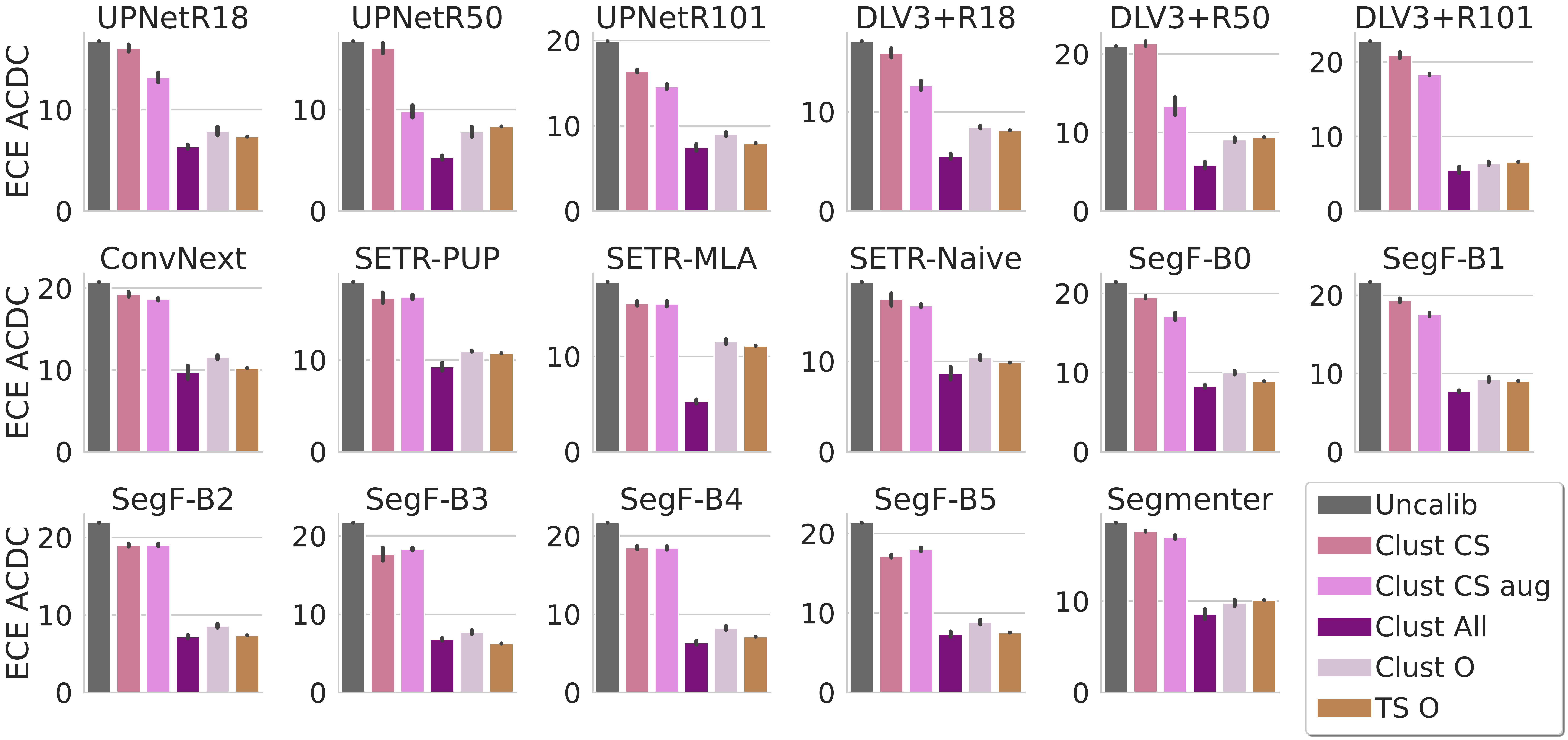}
\end{subfigure}
\caption{\textbf{ECE $(\downarrow)$ after clustering \ts} Extension of \cref{figure:cluster_ablation} in the main paper where we show results for all models.
}
\label{figure:clustering_methods_all_models}
\end{figure}

\clearpage

Complementary to \cref{figure:clustering_methods_all_models} where we show the different calibration methods for a given architecture with barplots, in \cref{figure:clustering_methods_ece_vs_miou} we present the same results but we group them by calibration method (instead of by model). For each test dataset (rows), we plot the ECE \vs mIoU after calibrating the models with the corresponding method (columns).

\begin{figure}[ht]
\centering
\includegraphics[width=0.8\linewidth]{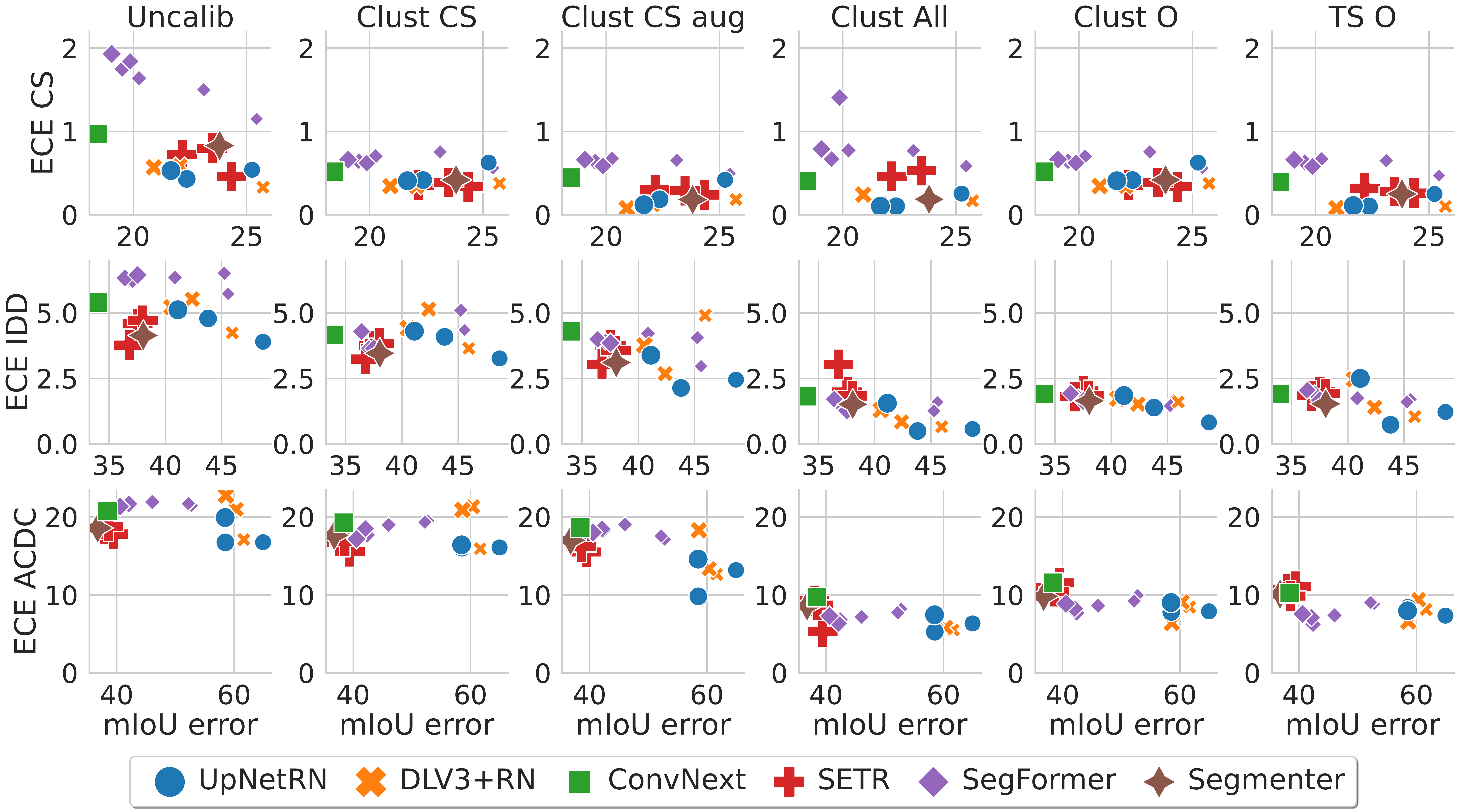}
\caption{\textbf{ECE $(\downarrow)$ \vs mIoU error $(\downarrow)$ after calibration with clustering \ts} considering different calibration sets. This plot provides a different visualization of the results in \cref{figure:clustering_methods_all_models}.}
\label{figure:clustering_methods_ece_vs_miou}
\end{figure}

\clearpage

\section{Additional plots: local temperature scaling}
\label{sec:additional_plots_LTS}
In \cref{figure:LTS_ablation} we analyzed the calibration error after applying the method by Ding~\etal~\cite{ding2021local} which learns a calibration network that predicts the temperature as a function of the image and segmentation model logits. Due to space constraints we only showed the best performing model for each family, in \cref{figure:lts_methods_all_models} we show the results for all models.

\begin{figure}[ht]
\begin{subfigure}[t]{\linewidth}
\centering
\includegraphics[width=0.75\linewidth]{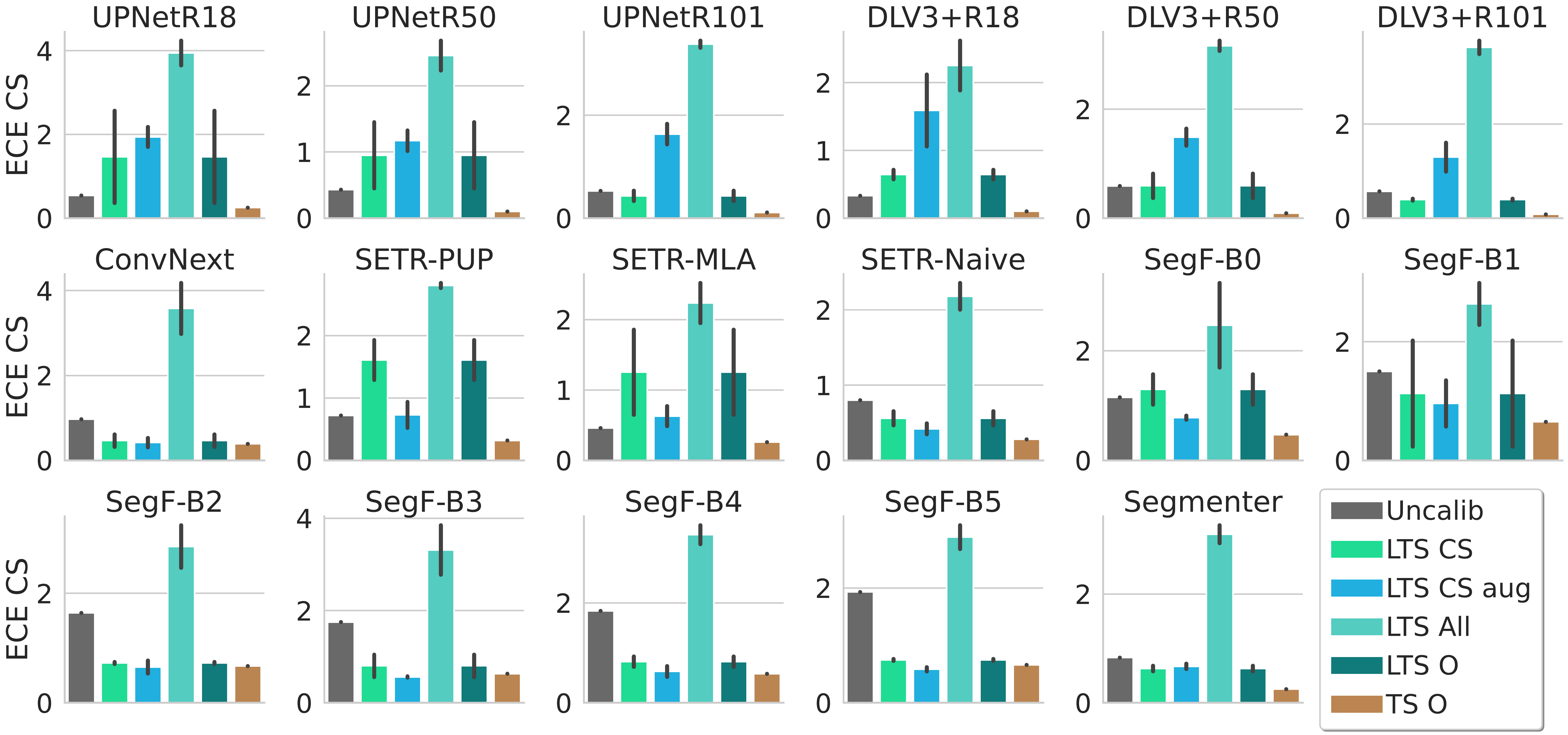}
\end{subfigure}

\vspace{0.3cm}

\begin{subfigure}[b]{\linewidth}
\centering
\includegraphics[width=0.75\linewidth]{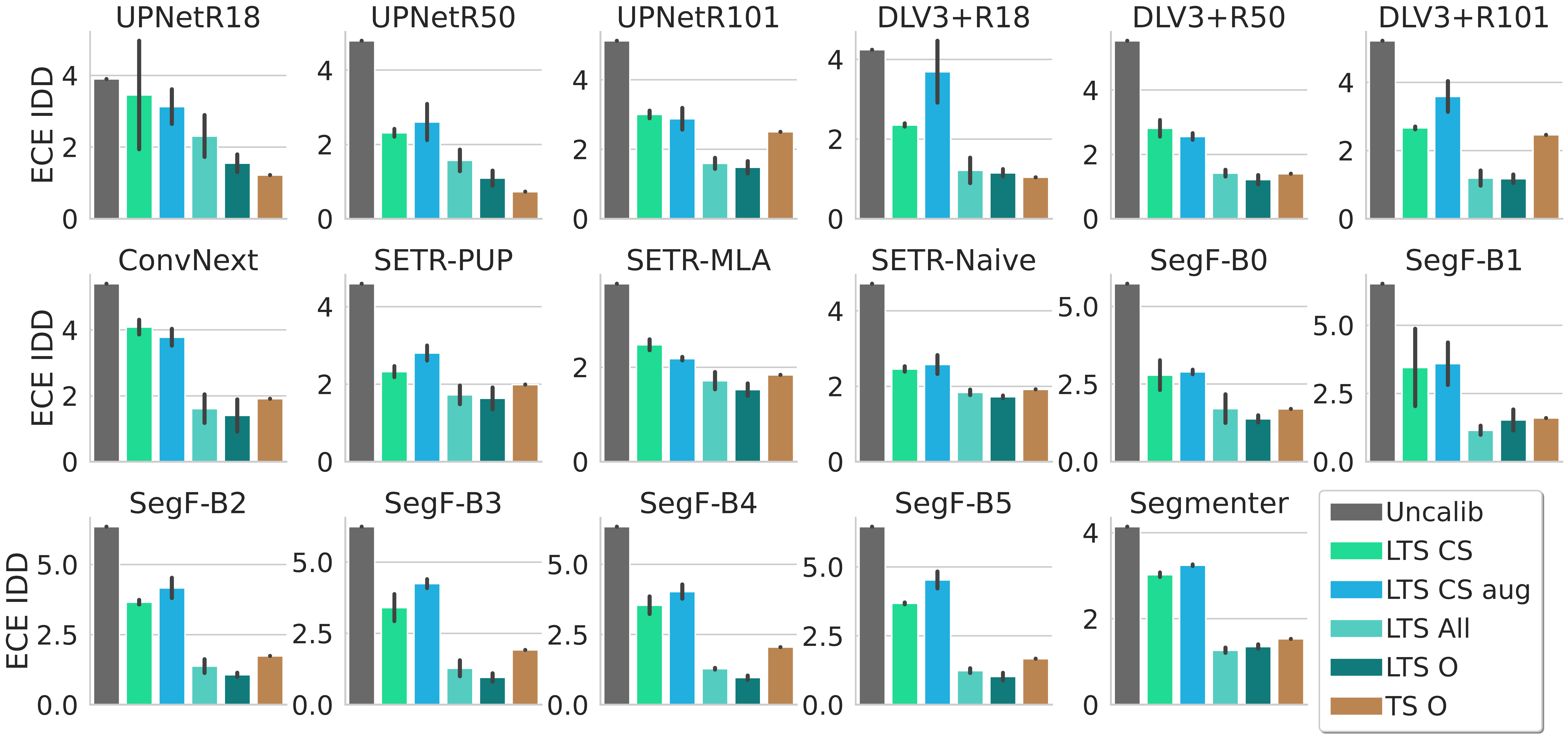}
\end{subfigure}

\vspace{0.3cm}

\begin{subfigure}[b]{\linewidth}
\centering
\includegraphics[width=0.75\linewidth]{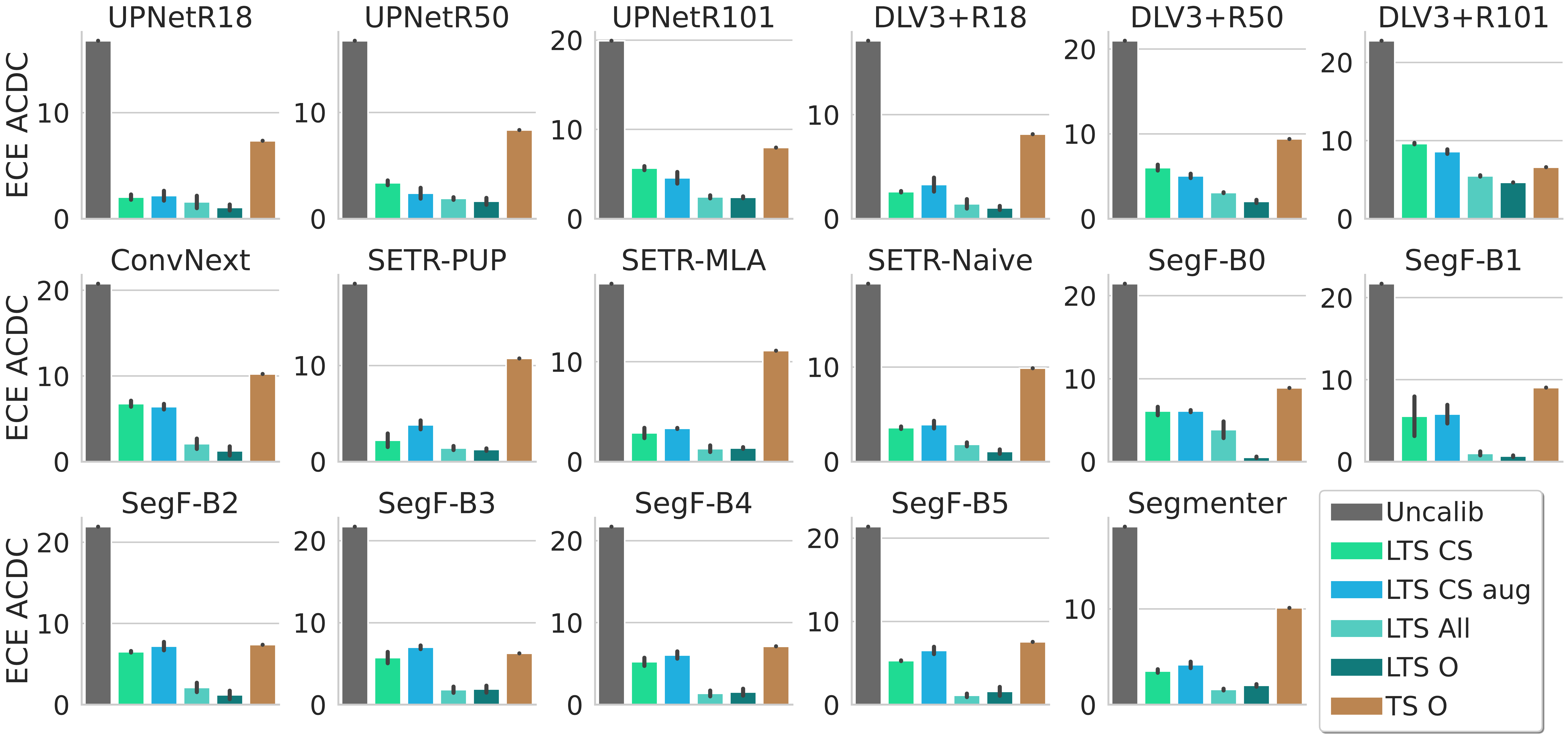}
\end{subfigure}
\caption{\textbf{ECE $(\downarrow)$ after Local \ts} Extension of \cref{figure:LTS_ablation} in the main paper where we show results for all models.
}
\label{figure:lts_methods_all_models}
\end{figure}

\clearpage

Complementary to \cref{figure:lts_methods_all_models} where we show the different calibration methods for a given architecture with barplots, in \cref{figure:lts_methods_ece_vs_miou} we present the same results but we group them by calibration method (instead of by model). For each test dataset (rows), we plot the ECE \vs mIoU after calibrating the models with the corresponding method (columns).

\begin{figure}[ht]
\centering
\includegraphics[width=0.8\linewidth]{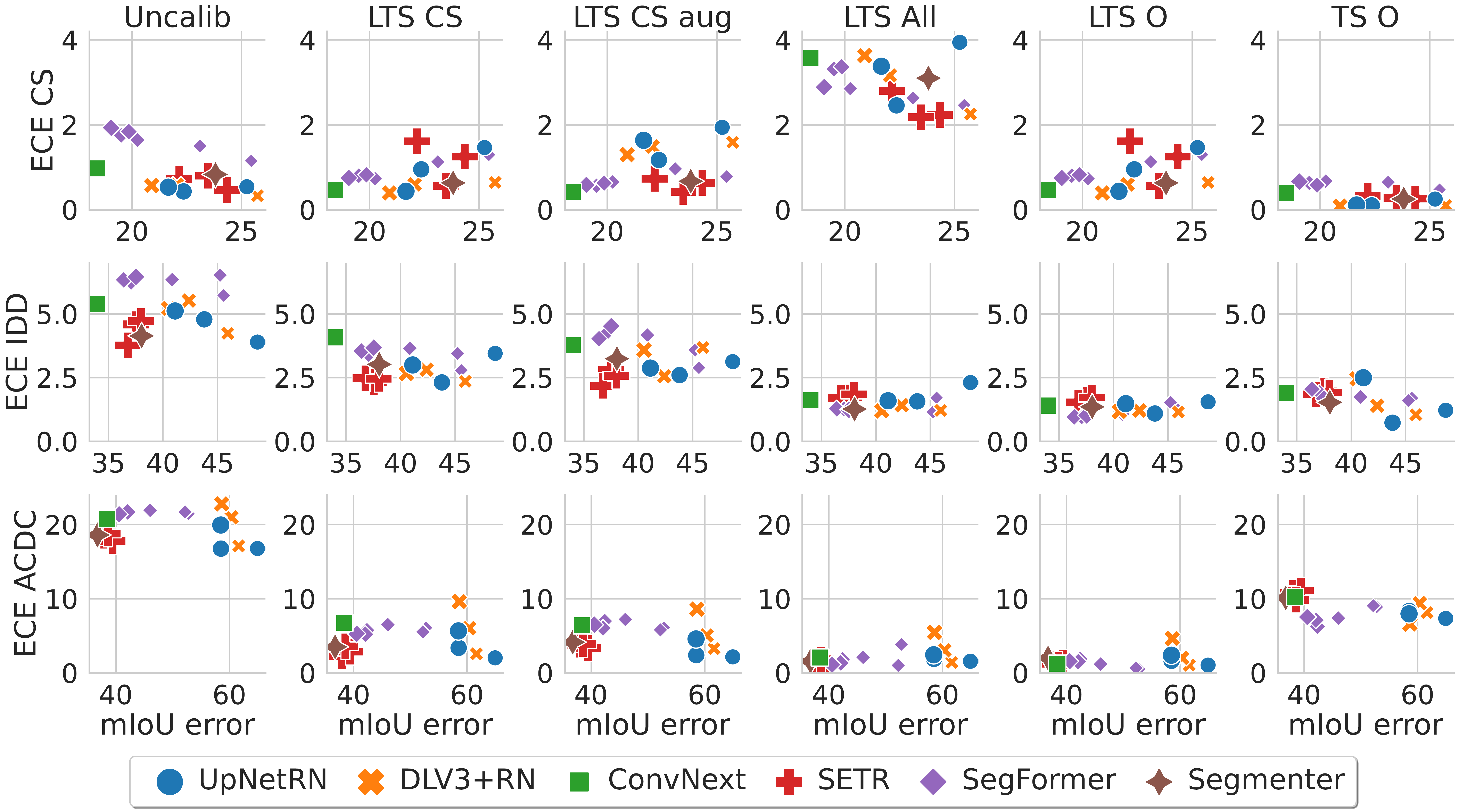}
\caption{\textbf{ECE $(\downarrow)$ \vs mIoU error $(\downarrow)$ after calibration with Local \ts} considering different calibration sets. This plot is showing the same results as \cref{figure:lts_methods_all_models} but in a different visualization.
}
\label{figure:lts_methods_ece_vs_miou}
\end{figure}

\clearpage

\section{Additional plots: Comparison calibration methods}
\label{sec:additional_plots_calib_methods_comparison}
In \cref{figure:calib_methods_comparison} we compared the calibration error after calibrating with TS CS, Clust CS and LTS CS \vs the uncalibrated baseline. Due to space constraints we only showed the best performing model for each family, in \cref{figure:calib_methods_all_models} we show the results for all models.

\begin{figure}[ht]
\centering
\begin{subfigure}[t]{\linewidth}
\centering
\includegraphics[width=0.75\linewidth]{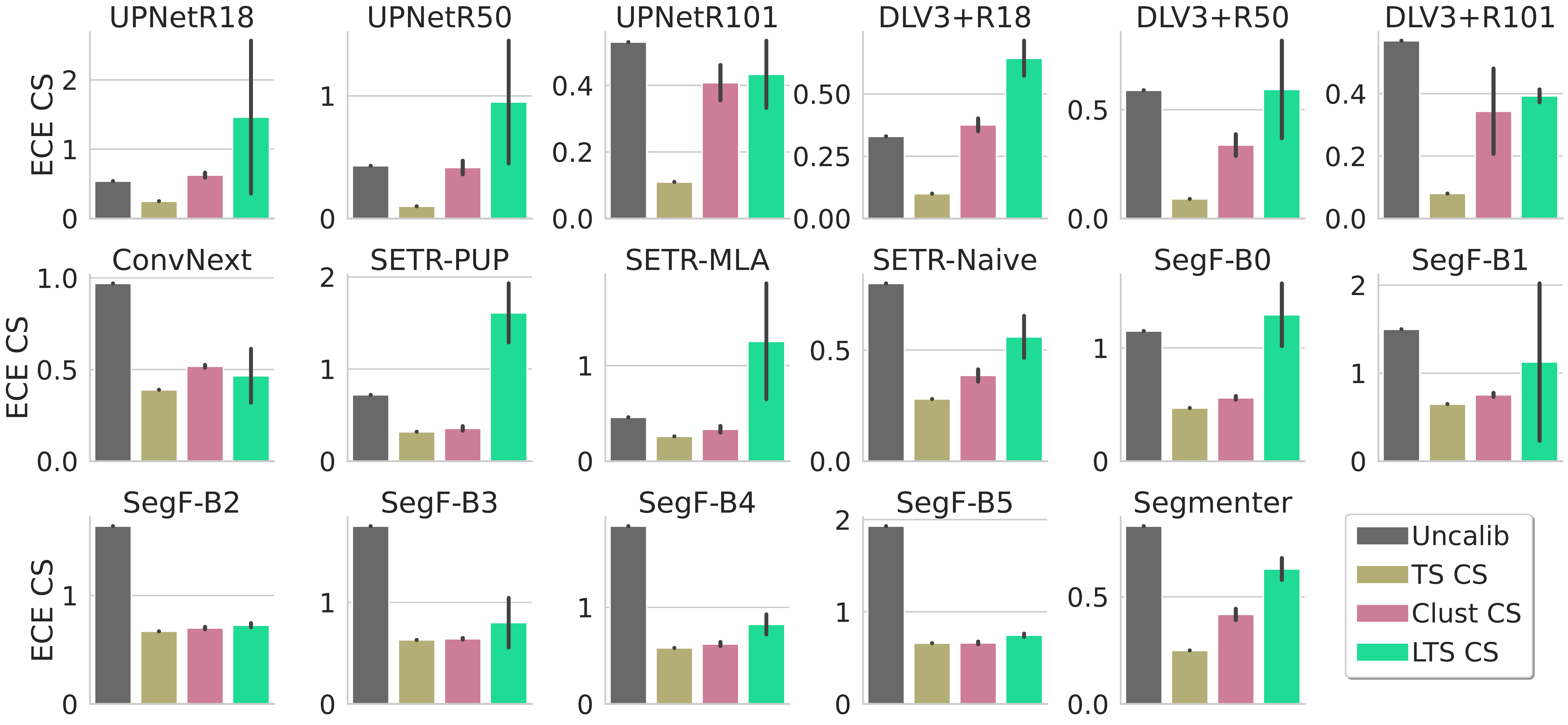}
\end{subfigure}

\vspace{0.3cm}

\begin{subfigure}[b]{\linewidth}
\centering
\includegraphics[width=0.75\linewidth]{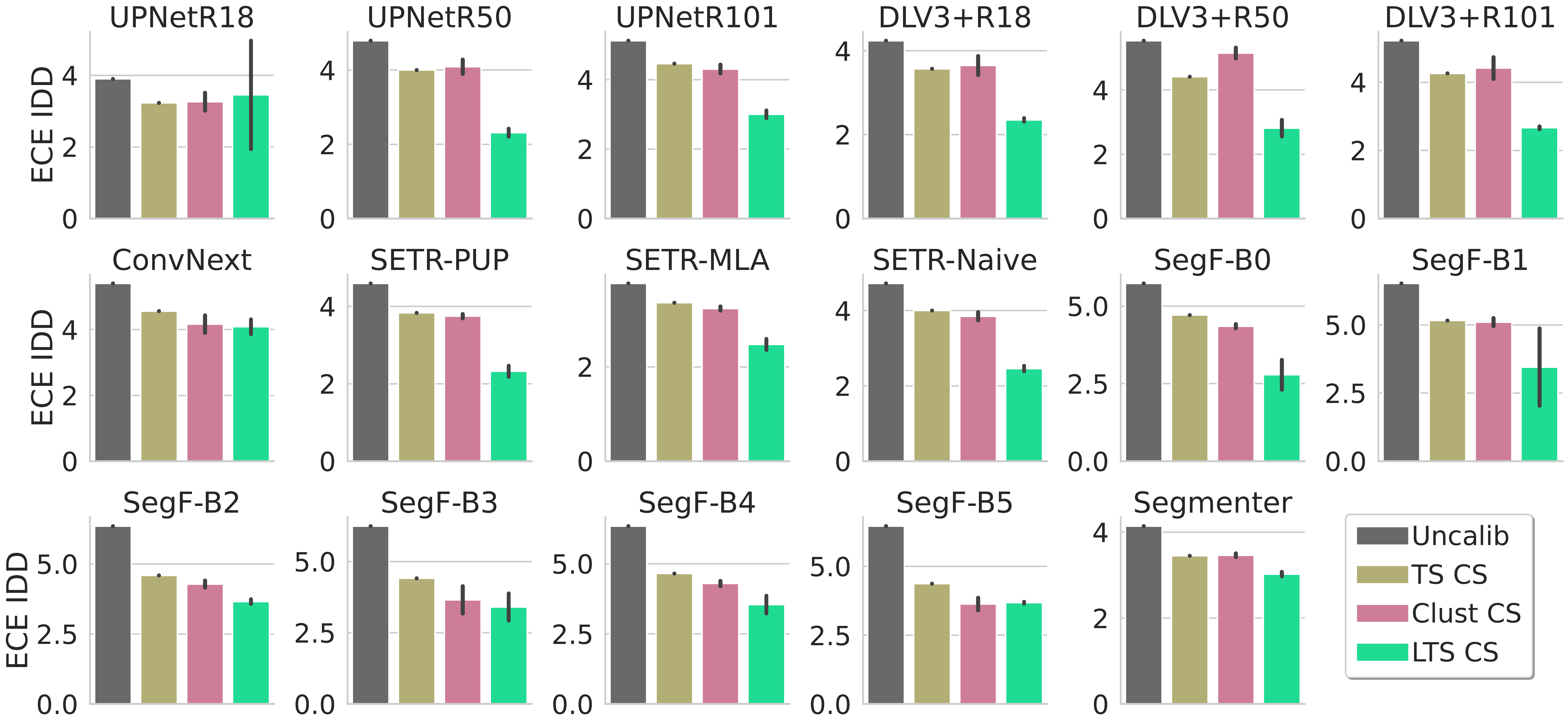}
\end{subfigure}

\vspace{0.3cm}

\begin{subfigure}[b]{\linewidth}
\centering
\includegraphics[width=0.75\linewidth]{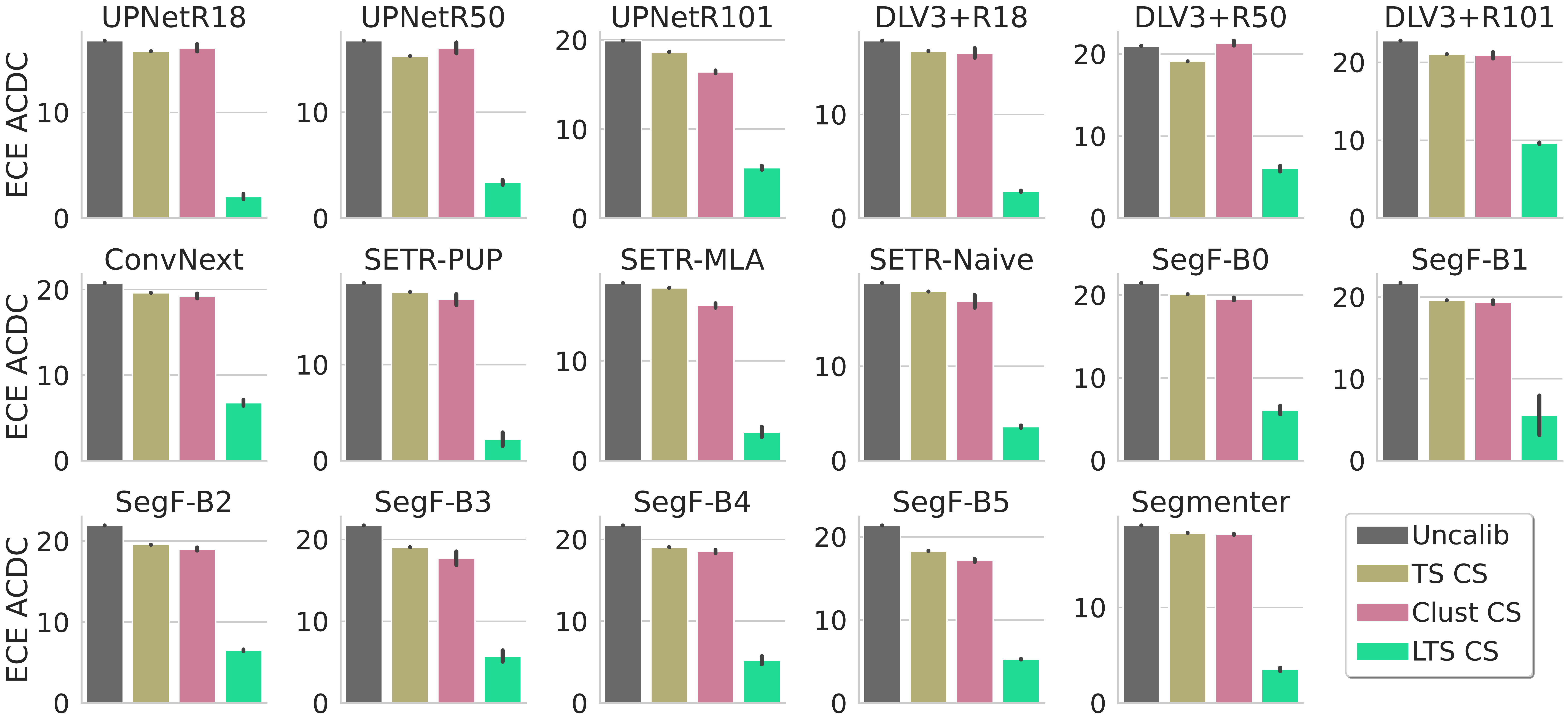}
\end{subfigure}
\caption{\textbf{ECE $(\downarrow)$ after calibration with different methods} Extension of \cref{figure:calib_methods_comparison} in the main paper where we show results for all models.
}
\label{figure:calib_methods_all_models}
\end{figure}

\clearpage

Complementary to \cref{figure:calib_methods_all_models} where we show the different calibration methods for a given architecture with barplots, in \cref{figure:calib_methods_ece_vs_miou} we present the same results but we group them by calibration method (instead of by model). For each test dataset (rows), we plot the ECE \vs mIoU after calibrating the models with the corresponding method (columns).

\begin{figure}[ht]
\centering
\includegraphics[width=0.8\linewidth]{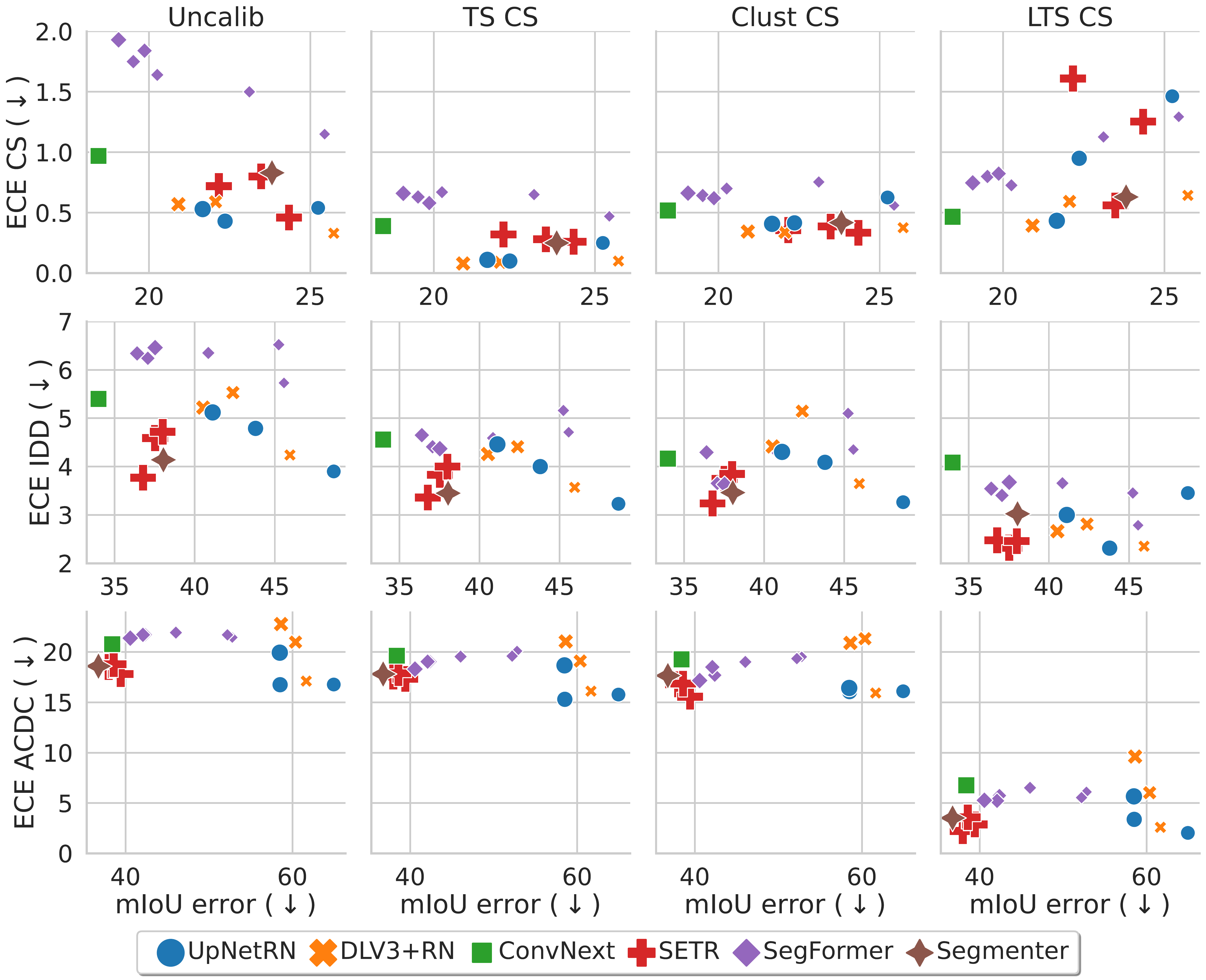}
\caption{\textbf{ECE $(\downarrow)$ \vs mIoU error $(\downarrow)$ after calibration with different methods} on the CS calibration set. This plot is showing the same results as \cref{figure:calib_methods_all_models} but in a different visualization.
}
\label{figure:calib_methods_ece_vs_miou}
\end{figure}

\clearpage

\section{Additional plots: misclassification detection}
\label{sec:additional_plots_misclassification}
In \cref{figure:misc_ood_after_calibration} we compared the misclassification detection and \ood detection performance of the networks after calibrating with TS CS, Clust CS and LTS CS \vs the uncalibrated baseline. Due to space constraints we only showed the best performing model for each family, in \cref{figure:misc_calib_all_models} we show the misclassification detection results for all models.

\begin{figure}[ht]
\centering
\begin{subfigure}[t]{\linewidth}
\centering
\includegraphics[width=0.75\linewidth]{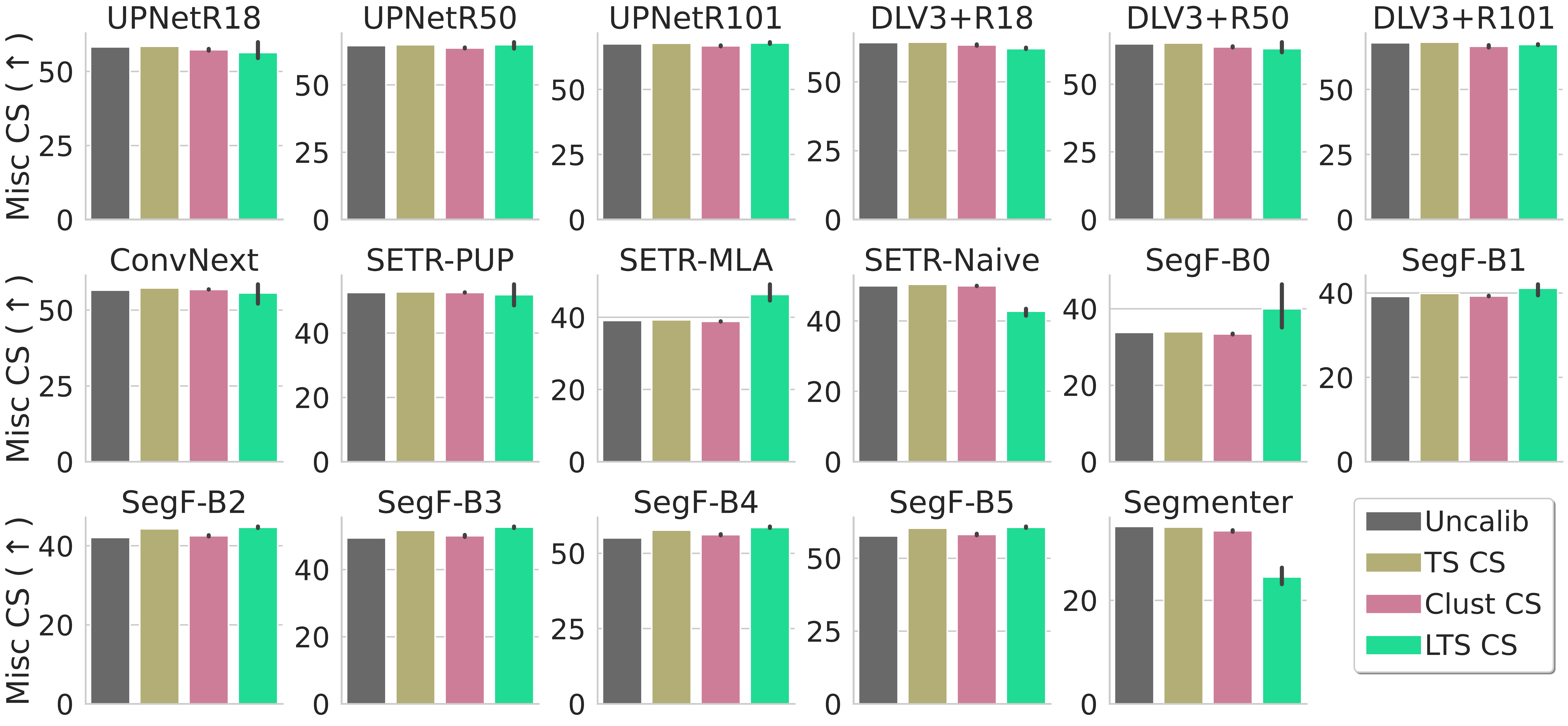}
\end{subfigure}

\vspace{0.3cm}

\begin{subfigure}[b]{\linewidth}
\centering
\includegraphics[width=0.75\linewidth]{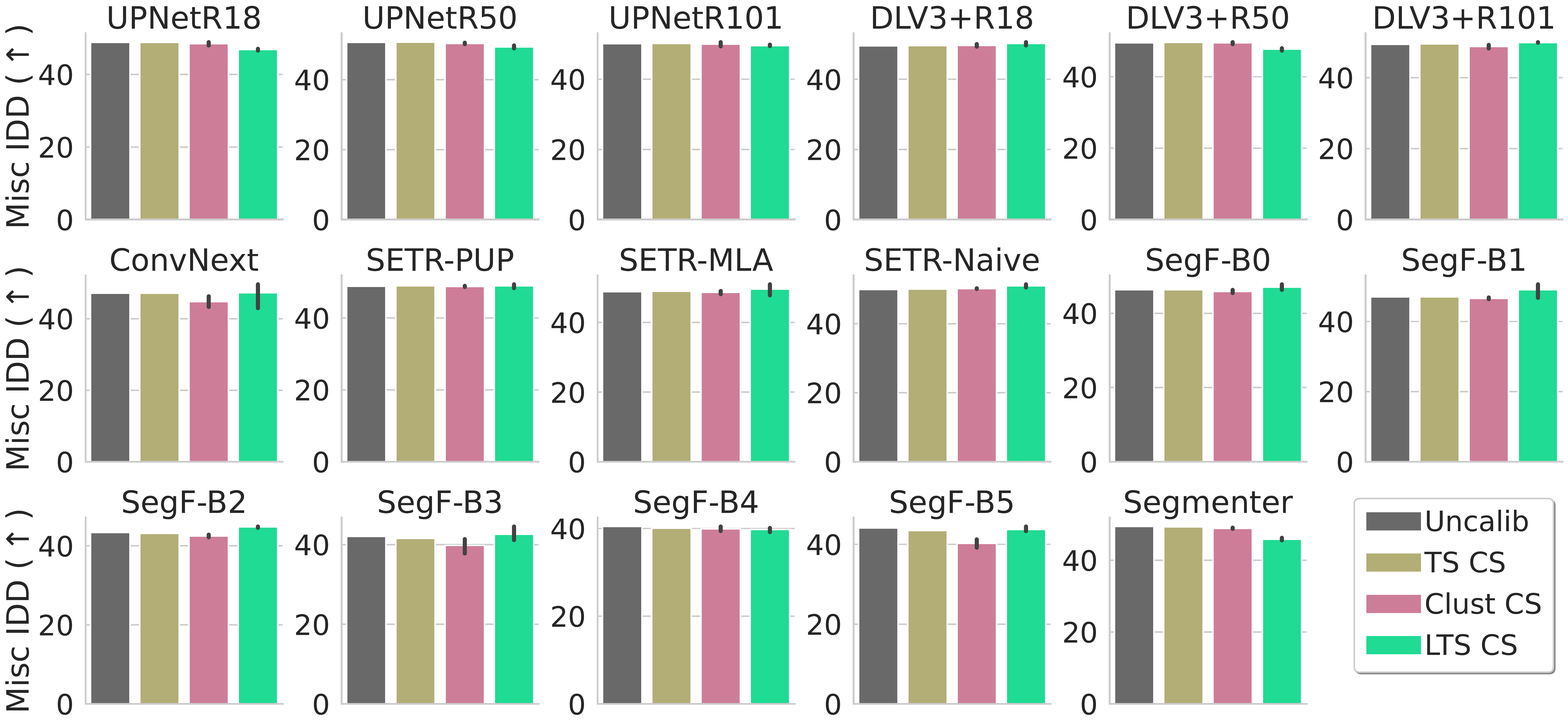}
\end{subfigure}

\vspace{0.3cm}

\begin{subfigure}[b]{\linewidth}
\centering
\includegraphics[width=0.75\linewidth]{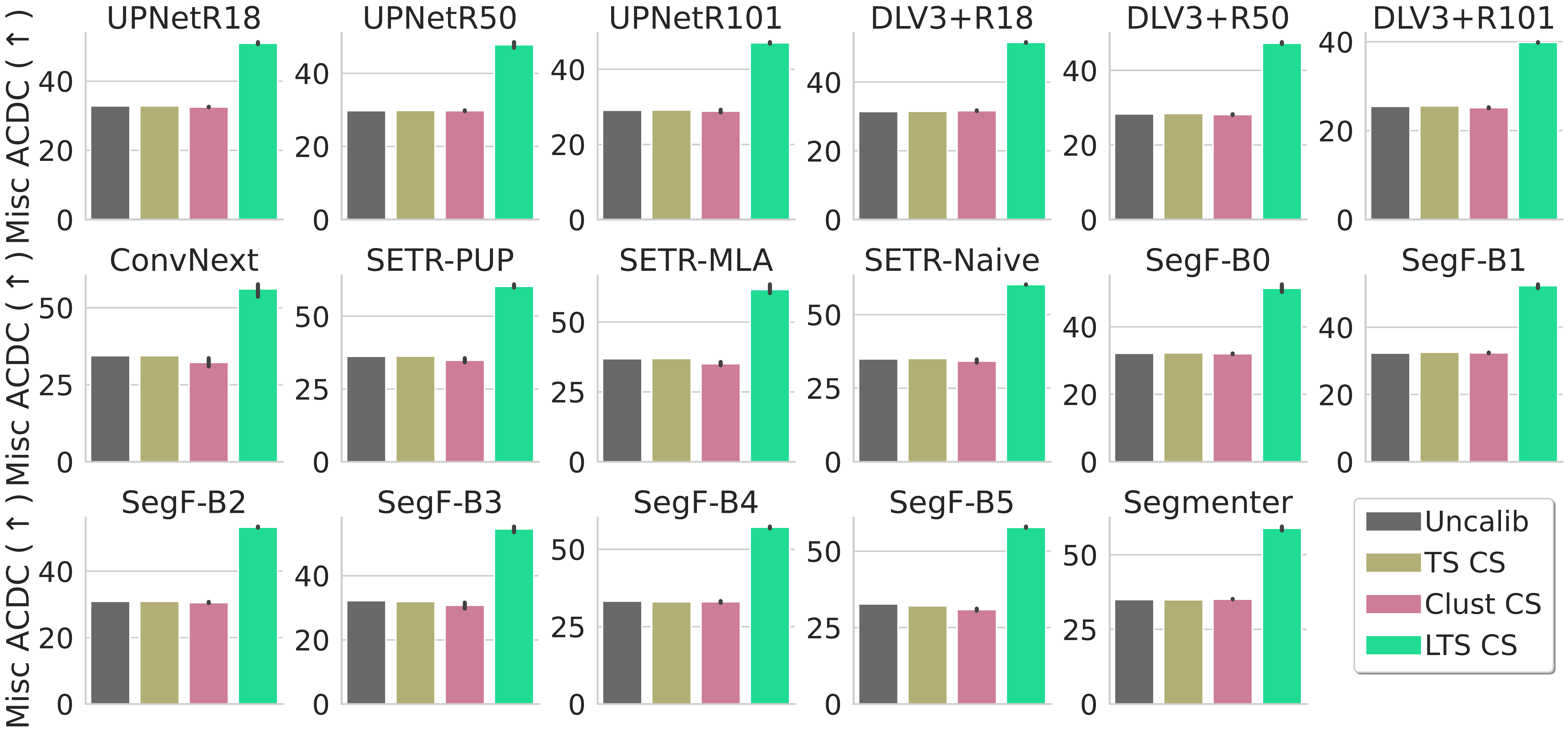}
\end{subfigure}
\caption{\textbf{Misc detection: PRR $(\uparrow)$ after calibration with different methods} Extension of \cref{figure:misc_ood_after_calibration} in the main paper where we show misclassification results for all models.
}
\label{figure:misc_calib_all_models}
\end{figure}

\clearpage

Complementary to \cref{figure:misc_calib_all_models} where we show the different calibration methods for a given architecture with barplots, in \cref{figure:misc_calib_ece_vs_miou} we present the same results but we group them by calibration method (instead of by model). For each test dataset (rows), we plot the ECE \vs mIoU after calibrating the models with the corresponding method (columns).

\begin{figure}[ht]
\centering
\includegraphics[width=0.8\linewidth]{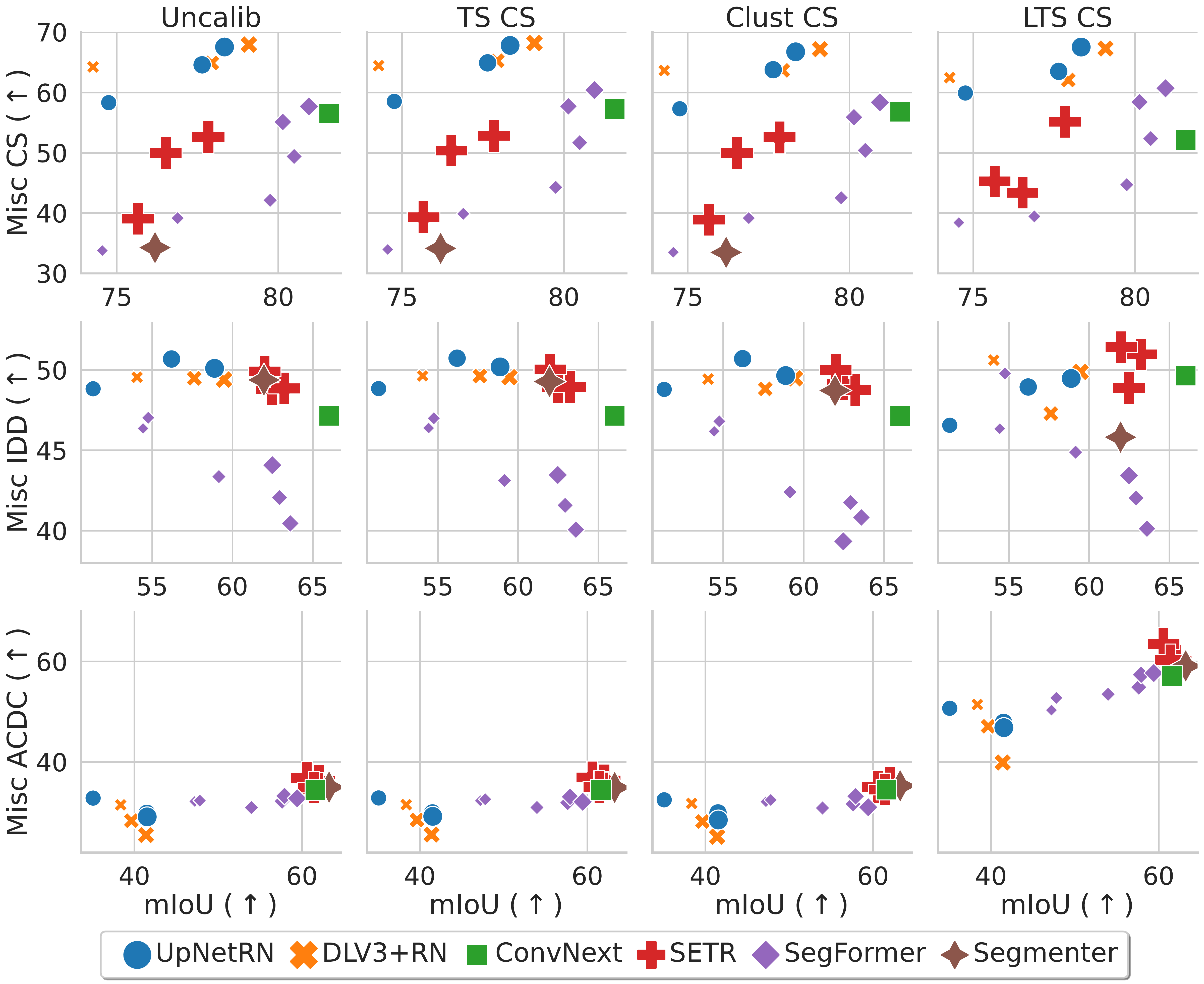}
\caption{\textbf{PRR $(\uparrow)$ \vs mIoU $(\uparrow)$ after calibration with different methods} on the CS calibration set. This plot is showing the same results as \cref{figure:misc_calib_all_models} but in a different visualization.
}
\label{figure:misc_calib_ece_vs_miou}
\end{figure}

\clearpage

\section{Additional plots: out-of-distribution detection}
\label{sec:additional_plots_ood}
In \cref{figure:misc_ood_after_calibration} we compared the misclassification detection and ood detection performance of the networks after calibrating with TS CS, Clust CS and LTS CS \vs the uncalibrated baseline. Due to space constraints we only showed the best performing model for each family, in \cref{figure:misc_calib_all_models} we show the \ood detection results for all models.

\begin{figure}[ht]
\centering
\begin{subfigure}[t]{\linewidth}
\centering
\includegraphics[width=0.7\linewidth]{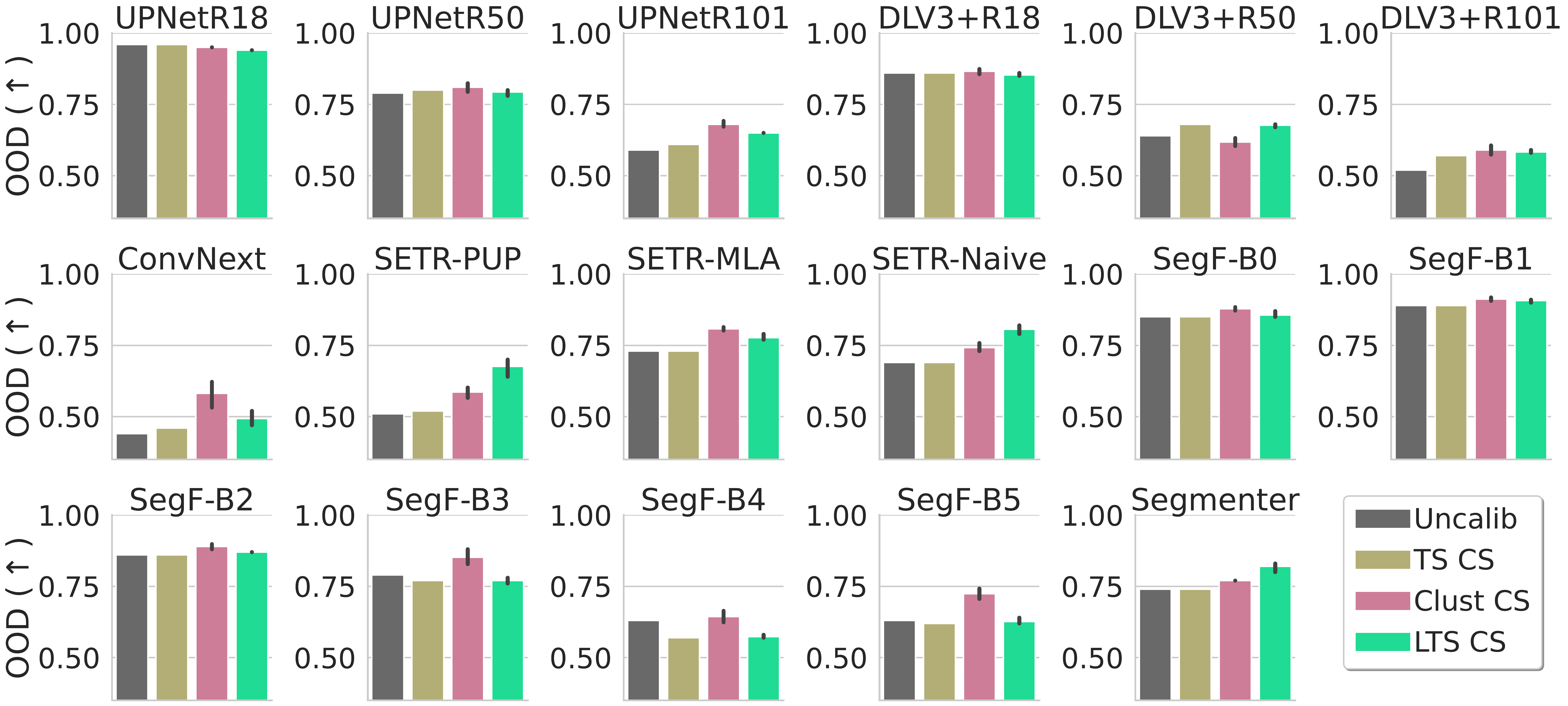}
\end{subfigure}
\caption{\textbf{\ood detection: AUROC $(\uparrow)$ after calibration with different methods} Extension of \cref{figure:misc_ood_after_calibration} in the main paper where we show \ood detection results for all models.
}
\label{figure:ood_calib_all_models}
\end{figure}

Complementary to \cref{figure:ood_calib_all_models} where we show the different calibration methods for a given architecture with barplots, in \cref{figure:ood_calib_ece_vs_miou} we present the same results but we group them by calibration method (instead of by model). For each test dataset (rows), we plot the ECE \vs mIoU after calibrating the models with the corresponding method (columns).

\begin{figure}[ht]
\centering
\includegraphics[width=0.7\linewidth]{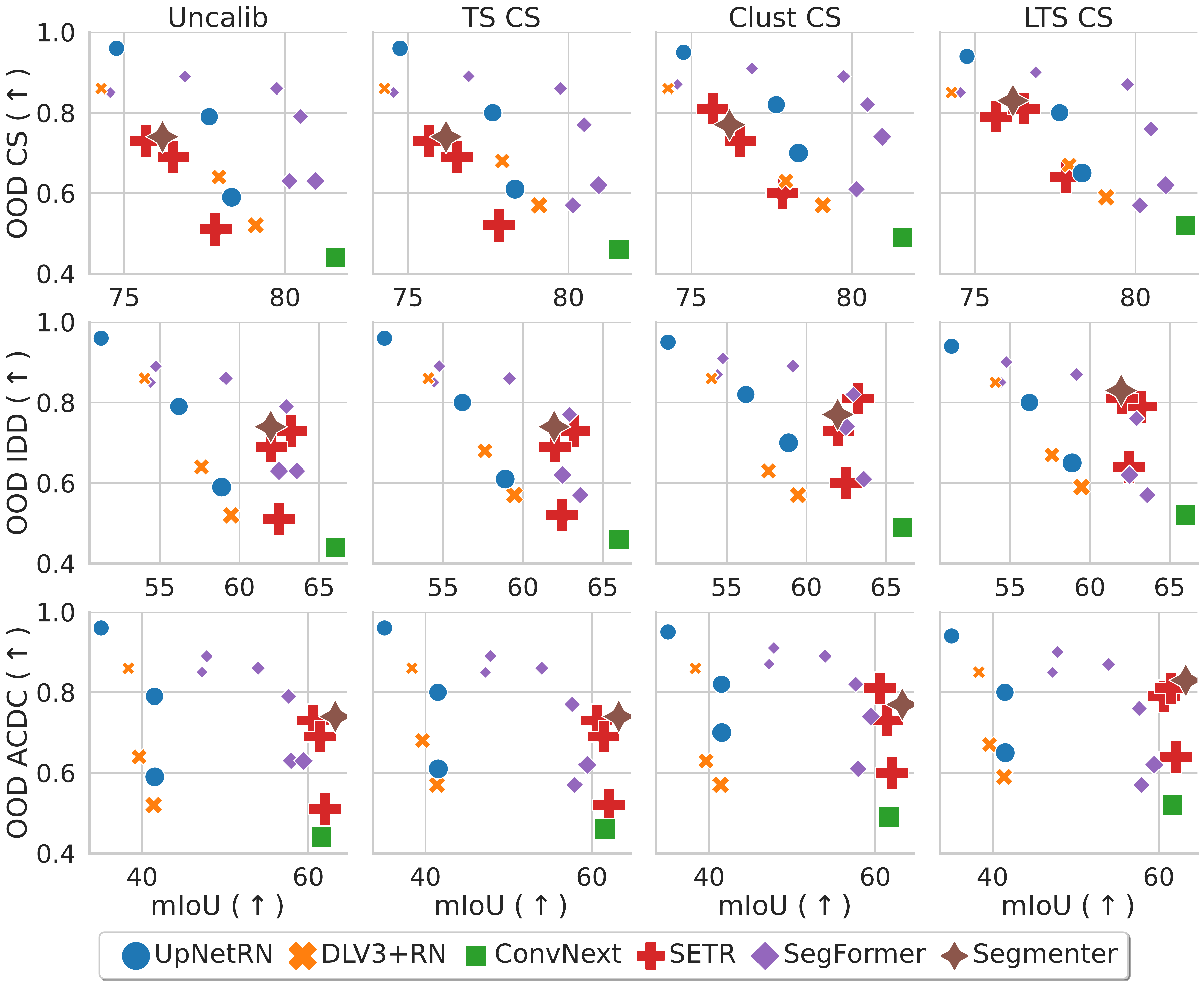}
\caption{\textbf{AUROC $(\uparrow)$ \vs mIoU $(\uparrow)$ after calibration with different methods} on the CS calibration set. This plot is showing the same results as \cref{figure:ood_calib_all_models} but in a different visualization.
}
\label{figure:ood_calib_ece_vs_miou}
\end{figure}

\end{document}